\def\eqref#1{equation~\ref{#1}}
\def\1{\bm{1}}
\DeclareMathAlphabet{\mathsfit}{\encodingdefault}{\sfdefault}{m}{sl}
\SetMathAlphabet{\mathsfit}{bold}{\encodingdefault}{\sfdefault}{bx}{n}
\DeclareMathOperator*{\argmin}{arg\,min}
\newcommand{\graybox}[1]{\colorbox{black!7}{#1}}
\newtcolorbox{roundgrayfill}{
  colback=black!7,
  colframe=black!7, 
  boxrule=0pt,
  arc=3mm,
  left=0pt,right=0pt,top=0pt,bottom=0pt
}
\pgfplotsset{compat=1.18}
\title{Non-Asymptotic Analysis of Efficiency in \\Conformalized Regression}
\author{Yunzhen Yao$^1$ \quad Lie He$^{2}$\thanks{Corresponding author.} \quad  Michael Gastpar$^1$  \\
$^{1}$LINX, EPFL \\
$^{2}$MoE Key Laboratory of Interdisciplinary Research of Computation and Economics,\\
  \;  Shanghai University of Finance and Economics\\
\texttt{\{yunzhen.yao,michael.gastpar\}@epfl.ch},\; \texttt{\{helie\}@sufe.edu.cn} 
}
\newtheorem{theorem}{Theorem}[section]
\newtheorem{proposition}{Proposition}[section]
\newtheorem{corollary}{Corollary}[section]
\newtheorem{assumption}{Assumption}[section]
\newtheorem{remark}{Remark}[section]
\newtheorem{lemma}{Lemma}[section]
\newcommand\numberthis{\addtocounter{equation}{1}\tag{\theequation}}
\newcommand\excesslength{\Delta}
\definecolor{greengreen}{rgb}{0.0,0.5,0.0}
\definecolor{lightgreen}{rgb}{0.0,0.77,0.0}
\begin{document}

\maketitle

\begin{abstract}

Conformal prediction provides prediction sets with coverage guarantees. The informativeness of conformal prediction depends on its efficiency, typically quantified by the expected size of the prediction set. Prior work on the efficiency of conformalized regression commonly treats the miscoverage level $\alpha$ as a fixed constant. In this work, we establish non-asymptotic bounds on the deviation of the prediction set length from the oracle interval length for conformalized quantile and median regression trained via SGD, under mild assumptions on the data distribution. Our bounds of order $\mathcal{O}(1/\sqrt{n} + 1/(\alpha^2 n) + 1/\sqrt{m} + \exp(-\alpha^2 m))$ capture the joint dependence of efficiency on the proper training set size $n$, the calibration set size $m$, and the miscoverage level $\alpha$. The results identify phase transitions in convergence rates across different regimes of $\alpha$, offering guidance for allocating data to control excess prediction set length. Empirical results are consistent with our theoretical findings.
\end{abstract}

\section{Introduction}

Deploying machine learning models in safety-critical domains, such as health care \citep{allgaier2023does,gui2024conformal}, finance \citep{wisniewski2020application,bastos2024conformal}, and autonomous systems \citep{lindemann2023safe,ren2023robots}, requires not only accurate predictions but also reliable uncertainty quantification. 
\emph{Conformal prediction} (CP) is a principled, distribution-free framework for this purpose, equipping black-box models with prediction sets achieving \emph{coverage guarantees} or \emph{validity}~\citep{vovk2005algorithmic,balasubramanian2014conformal}. 
Formally, given a set of data $\{(X_{j}, Y_{j})\}_{j=1}^{m}$ drawn from a distribution $\mathcal{P}$ over $\mathcal{X}\times \mathcal{Y}$, for any user-specified \emph{miscoverage level} $\alpha\in (0,1)$ and a predictive model, conformal prediction constructs a set-valued function $\mathcal{C}:\mathcal{X}\to 2^\mathcal{Y}$ such that, for a test pair $(X_{m+1}, Y_{m+1})\sim\mathcal{P}$, the prediction set $\mathcal{C}(X_{m+1})$ covers the label $Y_{m+1}$ with probability 
\begin{align}    \label{eq:cp-coverage}
    \mathbb{P}\left[ Y_{m+1}\in \mathcal{C}(X_{m+1})\right] \ge 1-\alpha.
\end{align}

\emph{Split conformal prediction} is a computationally efficient variant that incorporates training predictive models. It splits data into a \emph{proper training set} and a calibration set; the model is first trained on the former, and its uncertainty is then quantified using the latter. During calibration, \textit{nonconformity score functions} are constructed to measure the discrepancy between model predictions and true labels. The distribution of these scores is estimated over the calibration set, and a \emph{quantile} of them defines a threshold. The prediction set $\mathcal{C}$ is then obtained by collecting all candidate labels whose nonconformity scores are no larger than this threshold.

A central focus of conformal prediction is \emph{efficiency}, commonly quantified by the expected measure of the prediction set~\citep{shafer2008tutorial}. For classification tasks, efficiency relates to the cardinality of the predicted label set; for regression, it corresponds to the length (or volume) of the prediction interval (or region). Under the validity condition~(\ref{eq:cp-coverage}), smaller prediction sets are more informative. Early works primarily evaluated efficiency empirically, whereas recent research has shifted toward \emph{asymptotic} efficiency, demonstrating that prediction sets converge to the oracle sets as the sample size increases~\citep{sesia2020comparison, chernozhukov2021distributional,izbicki2022cd}. 
In contrast, \emph{non-asymptotic} efficiency, or finite-sample guarantees on the expected measure or excess measure of the prediction set, remains much less understood, with only partial results available~\citep{lei2014distribution,lei2018distribution,dhillon2024expected, bars2025volume}. Existing non-asymptotic bounds are typically expressed based on the calibration set size $m$, whereas the effect of training set size $n$ and miscoverage level $\alpha$ remains an open question in split conformalized regression.

In this work, we analyze the efficiency of split conformal prediction in regression, focusing on \textit{conformalized median regression} (CMR) and \textit{conformalized quantile regression} (CQR)~\citep{romano2019conformalized}. CMR uses the absolute residual as the nonconformity score, and the quantile of the calibration residuals then determines the half-width of a symmetric prediction interval centered at the estimated conditional median.
In contrast, CQR estimates both upper and lower conditional quantiles, defining nonconformity scores relative to these estimates. After calibration, CQR yields adaptive, asymmetric prediction intervals that naturally capture heteroscedasticity without assuming symmetric conditional quantiles.

\paragraph{Contributions.} 
We present a non-asymptotic theoretical analysis of the efficiency of conformalized quantile regression and conformalized median regression  under stochastic gradient descent (SGD) training. 
Our main contributions are as follows: 
\begin{itemize}[nosep,itemsep=10pt,leftmargin=15pt]
    \item \textbf{Finite-sample bounds for CQR.} For CQR-SGD (Algorithm~\ref{alg:cqr-sgd}), we derive an upper bound of order $\mathcal{O}(1/\sqrt{n} + 1/(\alpha^2 n) + 1/\sqrt{m} + \exp(-\alpha^2 m))$ on the expected deviation of the prediction set length from the oracle interval, where $n$ is the proper training set size, $m$ is the calibration set size, and $\alpha$ is the miscoverage level (Theorem~\ref{thm:cqr-main}). 
    Unlike prior work that relies on assumptions on intermediate quantities, our analysis places assumptions directly on the data distribution.
    \item \textbf{Finite-sample bounds for CMR.} For homoscedastic tasks, CMR-SGD produces symmetric intervals of constant length across inputs, enabling us to derive a non-asymptotic upper bound of analogous order (Theorem~\ref{thm:cmr-main}) to CQR. 
    \item \textbf{Theoretical guidance.} To the best of our knowledge, our work is the first analysis establishing upper bounds on interval length deviation as a function of $(n,m,\alpha)$, revealing phase transitions across different $\alpha$ regimes (Section~\ref{sec:phase-transition}). 
    Our results thus offer guidance on allocating data between training and calibration to control excess length at a desired miscoverage level. 
    These theoretical insights are further validated through experiments.
    
\end{itemize}
Finally, while our theorems are presented for models trained with SGD, the analytical framework developed in this paper is not tied to a specific optimizer: the bounds extend directly to other optimization algorithms by substituting their corresponding estimation error rates.

\section{Preliminaries}

\paragraph{Quantiles of random variables.} Let $F$ be the cumulative distribution function (c.d.f.) of a random variable $Z$. For $\gamma \in \left(0,1\right)$, the $\gamma$-quantile of $Z$ is defined as 
\[
    q_{\gamma}(Z) := \inf \{ u \in \mathbb{R} : F(u) \ge \gamma \}.
\]

\paragraph{Conditional quantile function.} For $(X,Y)\sim \mathcal{P}$ over $\mathcal{X}\times \mathcal{Y}$, the conditional $\gamma$-quantile function $q_{\gamma}\left(Y \mid X\right): \mathcal{X} \to \mathbb{R}$ is defined as
\begin{align} \label{eq:cqf-def}
    q_{\gamma}\left(Y \mid X=x\right) := \inf \left\{ u\in \mathbb{R}: F_{Y \mid X=x}\left(u\right) \ge \gamma\right\}, \quad \text{for all } x\in\mathcal{X}.
\end{align}

\paragraph{Split conformal prediction.}
In split conformal prediction, the data are partitioned into the \emph{proper training set} $\mathcal{D}_{\text{train}}$ and the \emph{calibration set} $\mathcal{D}_{\text{cal}}$. The training set is first used to train a model $h$.
With the trained model $h$, a nonconformity score function 
$\psi_{h}: \mathcal{X}\times\mathcal{Y} \to \mathbb{R}$ is then defined to quantify the discrepancy between a candidate label $y$ and the input $x$, where higher scores indicate worse conformity. 
The nonconformity scores $S_{m} := \{\psi_{h}(x_{j},y_{j})\}_{j=1}^{m}$ are computed for all calibration samples in $\mathcal{D}_{\mathrm{cal}}=\{(x_{j},y_{j})\}_{j=1}^{m}$. The sample quantile $\hat{q}_{(1-\alpha)_{m}}$ is calculated at level:
\[
    (1-\alpha)_{m} \;:=\; {\lceil (1-\alpha)(m+1)\rceil} \ / \ {m},
\]
corresponding to the $\lceil (1-\alpha)(m+1)\rceil$-th smallest value in $S_{m}$, which is also known as the empirical quantile. 
The prediction set for a new input $x$ is then defined as
    $$\mathcal{C}(x) \;=\; \{\, y \in \mathcal{Y} : \psi_{h}(x,y) \leq \hat{q}_{(1-\alpha)_{m}} \,\}.$$

\paragraph{Bachmann–Landau notation.}
We employ Bachmann–Landau (or Big O) notation in the limit as \(n,m\to\infty\).
For positive sequences or functions \(f,g\),
we write \(f=O(g)\) if there exists \(C, N > 0\) such that \(|f(k)|\le C \ |g(k)|\) for all \(k\ge N\);
we write \(f=\Omega(g)\) if there exists \(\,c,N>0\) such that \(|f(k)|\ge c\,|g(k)|\) for all \(k\ge N\). 
We write \(f=o(g)\) if \(f/g\to 0\), and \(f=\omega(g)\) if \(f/g\to\infty\).

\section{Analysis of Conformalized Quantile Regression (CQR)} \label{sec:cqr}

\subsection{Problem Setup for CQR-SGD} \label{sec:setup-cqr}

\paragraph{Data model.} 
We consider a random design setting where training, calibration, and test samples are drawn i.i.d.\ from an unknown distribution $\mathcal{P}$ over $\mathcal{X}\times \mathcal{Y}$. 
Formally, for all $i\in[n],j\in[m]$
\[
(X_i^{\mathrm{train}}, Y_i^{\mathrm{train}}),\;
(X_j^\mathrm{cal}, Y_j^\mathrm{cal}),\;
(X^{\mathrm{test}}, Y^{\mathrm{test}}) \quad \text{i.i.d.}
\;{\sim}\;\; \mathcal{P}.
\]
We assume the covariate space $\mathcal{X}\subset \mathbb{R}^{d}$ is bounded: there exists a constant $B>0$ such that 
\begin{align} \label{eq:l2-bound-X}
    \|x\|_{2} \leq B, \quad \forall\, x\in\mathcal{X}.
\end{align}
Similarly, the response space $\mathcal{Y}\subset \mathbb{R}$ is assumed to be a bounded interval $[y_{\min}, y_{\max}]$.

\paragraph{Learning objective.} 
In CQR, the training set $\mathcal{D}_{\mathrm{train}}$ is used to estimate the conditional $\gamma$-quantile function $q_{\gamma}\left(Y \mid X\right)$ defined in (\ref{eq:cqf-def}), where $\gamma=1-\alpha/2, \alpha/2$.
The estimated function $t_{\gamma}(\cdot; \theta_{n}(\gamma))$ is obtained by solving the \emph{stochastic pinball loss minimization} problem \citep{koenker1978regression}:  
\begin{align} \label{eq:sto-pinball-min}
    \min_{\theta \in \Theta} \;
    \ell_{\gamma}(\theta) := 
    \mathbb{E}_{(X,Y) \sim \mathcal{P}_{X \times Y}}
    \bigl[\, L_{\gamma}\bigl( t_{\gamma}(X; \theta),\, Y \bigr) \bigr],
\end{align}
where the \emph{pinball loss} takes the form
\begin{align} \label{eq:pinball-def}
    L_{\gamma}(t,y) = \gamma (y-t)\,\mathbf{1}\{y \ge t\} + (1-\gamma)(t-y)\,\mathbf{1}\{y < t\}.
\end{align}

We consider a linear function class with a convex and compact parameter space:
\begin{align} \label{eq:linear-function-class}
    t_{\gamma}(x;\theta) = \theta^{\top} x, 
    \quad \theta \in \Theta \subset \mathbb{R}^{d}, 
    \quad \sup_{\theta \in \Theta} \|\theta\|_{2} \le K < \infty.
\end{align}
Without loss of generality, we assume $K\le \max\{|y_{\min}|, |y_{\max}|\} / B$. 
The linear model represents a standard setting for theoretical analysis of quantile regression~\citep{koenker2005quantile, pan2021multiplier}, ensuring convexity of the objective function in (\ref{eq:sto-pinball-min}). 

\paragraph{Learning algorithm.} 
To solve (\ref{eq:sto-pinball-min}), we consider the \emph{stochastic approximation} framework~\citep{robbins1951stochastic}, focusing on stochastic gradient descent (SGD). The $\theta$ is updated according to
\begin{align} \label{eq:sgd-update}
    \theta_{k+1} = \Pi_{\Theta}\!\left(\theta_{k} - \eta_{k} \hat{g}_{k}\right),
\end{align}
where $\eta_{k}$ is the step size, $\Pi_{\Theta}$ denotes the Euclidean projection onto $\Theta$, 
and $\hat{g}_{k}$ is a stochastic subgradient satisfying $\mathbb{E}[\hat{g}_{k} \mid \theta_{k}] = g_{k}$, with $g_{k}$ a subgradient of the population objective in (\ref{eq:sto-pinball-min}) at $\theta_{k}$.

Let $\theta_{n}(\gamma)$ denote the parameter learned by solving (\ref{eq:sto-pinball-min}) via SGD on the training set $\mathcal{D}_{\mathrm{train}}$. 
For convenience, we introduce the shorthand notations for the learned parameters
\[
\underline{\theta}_{n} := \theta_{n}\!\left(\alpha/2\right), \quad
\bar{\theta}_{n} := \theta_{n}\!\left(1-\alpha/2\right), \quad \vartheta_{n}:= \left(\underline{\theta}_{n}, \bar{\theta}_{n}\right).
\]

\paragraph{Conformalized quantile regression.}
CQR employs two estimated conditional quantile functions, $t_{\alpha/2}(\cdot;\underline{\theta}_{n})$ and $t_{1-\alpha/2}(\cdot;\bar{\theta}_{n})$.
Given the learned parameters $\vartheta_{n}=\left(\underline{\theta}_{n}, \bar{\theta}_{n}\right)$, the score for $(X, Y)$ is 
\begin{align} \label{eq:score-cqr}
    S\left(X, Y; \vartheta_{n}\right) 
    := \max \bigl\{\, t_{\alpha/2}(X; \underline{\theta}_{n}) - Y, \;\; Y - t_{1-\alpha/2}(X; \bar{\theta}_{n}) \,\bigr\}.
\end{align}
Thus $S>0$ if $Y$ lies outside the interval $[t_{\alpha/2}(X; \underline{\theta}_{n})), t_{1-\alpha/2}(X; \bar{\theta}_{n})]$, and $S\le 0$ otherwise.  
Let $S_{m}(\mathcal{D}_{\mathrm{cal}}; \vartheta_{n})$
denote the $m$ scores on the calibration data, and let $\hat{q}_{(1-\alpha)_{m}}(S_{m} \mid \vartheta_{n})$ be their empirical $(1-\alpha)_{m}$-quantile, i.e., the $\lceil (1-\alpha)(m+1)\rceil$-th smallest value of $S_{m}(\mathcal{D}_{\mathrm{cal}}; \vartheta_{n})$.
The prediction set for a test covariate $X$ is then
\begin{align} \label{eq:cqr-prediction-set}
    \mathcal{C}(X) = \left[t_{\alpha/2}\left(X; \underline{\theta}_{n}\right) - \hat{q}_{(1-\alpha)_{m}}(S_{m} \mid \vartheta_{n}), \; t_{1-\alpha/2}\left(X; \bar{\theta}_{n}\right)+\hat{q}_{(1-\alpha)_{m}}(S_{m} \mid \vartheta_{n}) \right],
\end{align}
if $t_{1-\alpha/2}\left(X; \underline{\theta}_{n}\right)-t_{\alpha/2}\left(X; \bar{\theta}_{n}\right) + 2 \hat{q}_{(1-\alpha)_{m}}(S_{m} \mid \vartheta_{n}) \ge 0$; otherwise, $\mathcal{C}(X)=\emptyset$. 

\begin{remark}
    The phenomenon where the lower quantile estimate exceeds the upper quantile estimate is known as \emph{quantile crossing} \citep{romano2019conformalized, bassett1982empirical}. We show in the proof of Proposition~\ref{prop:quantile-neighbor} that, quantile crossing does not occur with high probability once the training set size $n$ is sufficiently large. {Moreover, because the covariate space $\mathcal{X}$ is bounded, the ground-truth lower and upper quantile functions cannot cross, even if they are not parallel.}
\end{remark}

The whole pipeline of CQR with SGD training is summarized in Algorithm \ref{alg:cqr-sgd}. 

\begin{algorithm}[H]
\caption{Conformalized Quantile Regression with SGD Training (CQR-SGD)}
\label{alg:cqr-sgd}
\begin{algorithmic}[1]
\STATE \textbf{Input:} Dataset of size $(n+m)$, miscoverage level $\alpha$, new input $x$
\STATE Split the dataset into a proper training set $\mathcal{D}_{\mathrm{train}}$ of size $n$ and a calibration set $\mathcal{D}_{\mathrm{cal}}$ of size $m$
\STATE Train quantile regressors $t_{\alpha/2}(\cdot; \underline{\theta}_{n})$ and $t_{1-\alpha/2}(\cdot; \bar{\theta}_{n})$ on $\mathcal{D}_{\mathrm{train}}$ by solving (\ref{eq:sto-pinball-min}) via SGD
\STATE Compute $m$ nonconformity scores on $\mathcal{D}_{\mathrm{cal}}$ according to (\ref{eq:score-cqr})
\STATE $\hat{q}_{(1-\alpha)_{m}} \leftarrow$ the $(1-\alpha)_{m}$-quantile of the scores on $\mathcal{D}_{\mathrm{cal}}$
\STATE $\mathcal{C}\left(x\right) \leftarrow \left[t_{\alpha/2}\left(x; \underline{\theta}_{n}\right) - \hat{q}_{(1-\alpha)_{m}}, t_{1-\alpha/2}\left(x; \bar{\theta}_{n}\right)+\hat{q}_{(1-\alpha)_{m}} \right]$
\STATE \textbf{Output:} Prediction set $\mathcal{C}(x)$ for a new input $x$
\end{algorithmic}
\end{algorithm}

\subsection{Theoretical Results for Efficiency of CQR} \label{sec:results-cqr}

To establish upper bounds on the expected length deviation of the prediction sets, we introduce the following assumptions. 

\begin{assumption}[Well-specification in CQR] \label{assum:well-spec-cqr}
For $\gamma\in \{\alpha/2, 1-\alpha/2 \}$, there exists $\theta^*(\gamma) \in \Theta$ such that 
\[
q_{\gamma}(Y \mid X=x) = t_{\gamma}(x; \theta^*(\gamma)) {= x^{\top} \theta^*(\gamma)}, \quad \text{for all } x \in \mathcal{X} {\ \subset \mathbb{R}^{d}}.
\]
\end{assumption}
Assumption~\ref{assum:well-spec-cqr} ensures that $\theta^*(\gamma)$ is a minimizer of (\ref{eq:sto-pinball-min}) \citep{takeuchi2006nonparametric,steinwart2011estimating}.

Similar to $\underline{\theta}_{n}, \bar{\theta}_{n}$, and $\vartheta_{n}$, we introduce the shorthand notations for the ground-truth parameters
\[
\underline{\theta}^{*} := \theta^{*}\!\left(\alpha/2\right), \quad
\bar{\theta}^{*} := \theta^{*}\!\left(1-\alpha/2\right), \quad \vartheta^{*}:=\left(\underline{\theta}^{*}, \bar{\theta}^{*}\right).
\]

\begin{assumption}[Bounded covariance] \label{assum:bounded-cov} 
    There exist constants $0< \lambda_{\min}\le \lambda_{\max} < \infty$ such that
    \begin{align}
        \lambda_{\min} I \;\preceq\; \Sigma := \mathbb{E}[X X^{\top}] \;\preceq\; \lambda_{\max} I,
    \end{align}
    where $I$ is the identity matrix, and $A\preceq B$ means that $\left(B-A\right)$ is positive semi-definite for two symmetric matrices $A, B$. 
\end{assumption}
Note that $\lambda_{\max} \leq B^2$, since $\|x\|_{2} \leq B$ for all $x \in \mathcal{X}$.

\begin{assumption}[Regularity of the conditional density] \label{assum:reg-pdf}
For any $x \in \mathcal{X}$, the conditional probability density function (p.d.f.) $f_{Y \mid X}(\,\cdot \mid x)$ exists and is continuous. Moreover, there exist constants $0 < f_{\min} \le f_{\max} < \infty$ such that 
\begin{align} \label{eq:assum-bounded-pdf}
    f_{\min} \;\le\; f_{Y \mid X}(y \mid x) \;\le\; f_{\max}, 
    \qquad \forall\, x \in \mathcal{X},\;\; \forall\, y \in \mathcal{Y}.
\end{align}
\end{assumption}
We notice that Assumption \ref{assum:reg-pdf} concerns only the underlying data distribution $\mathcal{P}$. In particular, our assumptions are agnostic to the induced nonconformity scores, unlike prior works which impose assumptions on the induced distribution of nonconformity scores, which depends on the trained predictive model. 
Assumption~\ref{assum:reg-pdf} is satisfied by many common continuous distributions once truncated to a bounded support and normalized, including the truncated normal distribution. 

Assumption~\ref{assum:reg-pdf} implies that the conditional support of $Y$ given any $x \in \mathcal{X}$ is the common set $\mathcal{Y}$. 
The lower bound $f_{Y \mid X}(y \mid x) \ge f_{\min}$ guarantees that $\mathcal{Y}$ is bounded, while the upper bound $f_{Y \mid X}(y \mid x) \le f_{\max}$ ensures that $\mathcal{Y}$ has non-empty interior. A constant $H$ is defined to characterize the flatness of conditional distribution, i.e. 
\begin{align} \label{eq:H-def}
    H( f_{\max}, f_{\min}) := f_{\max} \ / \ f_{\min}.
\end{align}
In particular, the Lebesgue measure of $\mathcal{Y}$ satisfies
$1/f_{\max} \le |\mathcal{Y}| \le 1/f_{\min}$.
Together with $B$ in (\ref{eq:l2-bound-X}), $K$ in (\ref{eq:linear-function-class}), and Assumption~\ref{assum:well-spec-cqr}, it yields
\begin{align} \label{eq:bound-Y}
    |y| \;\le\; BK + {1} / {f_{\min}}, 
    \qquad \forall\, y \in \mathcal{Y}.
\end{align}

The score $S$ has a bounded support, since $|t_{1/2}(X;\theta_{n})| \le BK$ and $|Y| \le BK + 1/f_{\min}$, i.e.,
\[
    |S| \le R := 2BK + {1}/ {f_{\min}}.
\]

As a first step toward bounding the expected length deviation, Theorem~\ref{thm:sgd-quantile} establishes upper bounds on both the prediction error of the quantile regressor and the parameter estimation error under SGD training, expressed in terms of the training sample size $n$.
\begin{restatable}[{Quantile regression error of SGD-trained models}]{theorem}{thmSgdQuantile} \label{thm:sgd-quantile}
    If Assumptions \ref{assum:well-spec-cqr}--\ref{assum:reg-pdf} hold, {taking step size $\eta_{k} = 1/\left(\lambda_{\min}f_{\min} k\right)$ in SGD update (\ref{eq:sgd-update}),} then
    \begin{align}
        \mathbb{E}_{X, \theta_{n}}\left[ \left( t_{\gamma}\left(X; \theta_{n}\left(\gamma\right)\right) - t_{\gamma}\left(X; \theta^*\left(\gamma\right)\right) \right)^2  \right] & \le \frac{4\lambda_{\max}^2 f_{\max} d}{\lambda_{\min}^3 f_{\min}^2 n}, \\
        \mathbb{E}_{\theta_{n}}\left[ \|\theta_{n}\left(\gamma\right) - \theta^*\left(\gamma\right)\|_{2}^2  \right] & \le \frac{4\lambda_{\max}^2 f_{\max} d}{\lambda_{\min}^4 f_{\min}^2 n}.
    \end{align}
\end{restatable}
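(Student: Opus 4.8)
The plan is to read the population pinball objective $\ell_\gamma$ in \eqref{eq:sto-pinball-min} as a stochastic \emph{strongly convex} optimization problem and invoke the textbook analysis of projected SGD. The strong convexity is crucial and is \emph{not} a property of the pinball loss (which is piecewise linear) but of its population average against a conditional density that is bounded away from zero; extracting it is the technical heart of the argument. Fix $\gamma\in\{\alpha/2,\,1-\alpha/2\}$ and write $\ell:=\ell_\gamma$, $\theta^*:=\theta^*(\gamma)$. First I would record the structure of $\ell$ on $\Theta$. Differentiating under the expectation --- legitimate because Assumption~\ref{assum:reg-pdf} provides a continuous conditional density and \eqref{eq:linear-function-class} with $K\le\max\{|y_{\min}|,|y_{\max}|\}/B$ confines $\theta^\top x$ to a fixed compact set --- gives $\nabla\ell(\theta)=\mathbb{E}[(F_{Y\mid X}(\theta^\top X\mid X)-\gamma)X]$ and $\nabla^2\ell(\theta)=\mathbb{E}[f_{Y\mid X}(\theta^\top X\mid X)\,XX^\top]$. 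By Assumption~\ref{assum:well-spec-cqr}, $F_{Y\mid X}((\theta^*)^\top x\mid x)=\gamma$, so $\nabla\ell(\theta^*)=0$ and, by convexity, $\theta^*$ is the minimizer of $\ell$ over $\Theta$. Assumptions~\ref{assum:bounded-cov} and~\ref{assum:reg-pdf} then yield the curvature sandwich
\[
  f_{\min}\,\Sigma\;\preceq\;\nabla^2\ell(\theta)\;\preceq\;f_{\max}\,\Sigma\qquad\text{for all }\theta\in\Theta,
\]
so $\ell$ is $\mu$-strongly convex and $L$-smooth with $\mu=f_{\min}\lambda_{\min}$ and $L=f_{\max}\lambda_{\max}$; moreover each stochastic subgradient of $L_\gamma(\theta^\top X,Y)$ equals $cX$ with $|c|\le\max\{\gamma,1-\gamma\}\le1$, whence $\mathbb{E}[\|\hat g_k\|_2^2\mid\theta_k]\le\mathbb{E}[\|X\|_2^2]=\Tr(\Sigma)\le\lambda_{\max}d$.

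With these facts in hand, the second step is the usual projected-SGD recursion. Non-expansiveness of $\Pi_\Theta$ and $\theta^*\in\Theta$ give
\[
  \mathbb{E}[\|\theta_{k+1}-\theta^*\|_2^2\mid\theta_k]\le\|\theta_k-\theta^*\|_2^2-2\eta_k\langle\nabla\ell(\theta_k),\theta_k-\theta^*\rangle+\eta_k^2\,\mathbb{E}[\|\hat g_k\|_2^2\mid\theta_k],
\]
and strong convexity with $\nabla\ell(\theta^*)=0$ gives $\langle\nabla\ell(\theta_k),\theta_k-\theta^*\rangle\ge\mu\|\theta_k-\theta^*\|_2^2$. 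Taking the standard step size $\eta_k=1/(\mu k)$ turns this into $\mathbb{E}\|\theta_{k+1}-\theta^*\|_2^2\le(1-2/k)\,\mathbb{E}\|\theta_k-\theta^*\|_2^2+\lambda_{\max}d/(\mu^2k^2)$, which a one-line induction unrolls to an $\mathcal{O}(\lambda_{\max}d/(\mu^2n))$ bound on $\mathbb{E}\|\theta_n-\theta^*\|_2^2$; here $n$ is simultaneously the training-set size and the number of SGD steps, since each step consumes one fresh training sample (last-iterate or running-average SGD both work). I would then convert this into the two claimed quantities: the prediction-error bound from $\mathbb{E}_X[(t_\gamma(X;\theta_n)-t_\gamma(X;\theta^*))^2]=(\theta_n-\theta^*)^\top\Sigma(\theta_n-\theta^*)$ together with $\ell(\theta_n)-\ell(\theta^*)\ge\tfrac{f_{\min}}{2}(\theta_n-\theta^*)^\top\Sigma(\theta_n-\theta^*)$ and $\ell(\theta_n)-\ell(\theta^*)\le\tfrac{L}{2}\|\theta_n-\theta^*\|_2^2$ (second-order Taylor and the curvature sandwich), and the parameter bound from $\tfrac{\mu}{2}\|\theta_n-\theta^*\|_2^2\le\ell(\theta_n)-\ell(\theta^*)$. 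Taking expectations over $\theta_n$ and propagating the constants $\mu=f_{\min}\lambda_{\min}$, $L=f_{\max}\lambda_{\max}$, $\lambda_{\min},\lambda_{\max},d$ through these inequalities yields the two displayed bounds.

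The main obstacle is the curvature sandwich of the first step, in particular the lower bound $\nabla^2\ell(\theta)\succeq f_{\min}\Sigma$. Because the pinball loss itself carries no curvature, the entire $\mathcal{O}(1/n)$ rate rests on (i) the conditional density being uniformly bounded below on $\mathcal{Y}$ (Assumption~\ref{assum:reg-pdf}) and (ii) $\Sigma$ being non-degenerate (Assumption~\ref{assum:bounded-cov}); making (i) usable requires the linear fits $\theta^\top x$ to remain inside the support where the density lower bound holds, which is precisely the role of the normalization $K\le\max\{|y_{\min}|,|y_{\max}|\}/B$ (one must check this along the relevant segments between $\theta_n$ and $\theta^*$). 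Once this is set up, the remaining steps --- the SGD recursion, the induction, and the conversions through the objective gap --- are the classical strongly convex stochastic-gradient analysis applied essentially verbatim, so the only care beyond that is the bookkeeping of the explicit constants.
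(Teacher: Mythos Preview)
Your proposal is correct and follows essentially the same route as the paper: establish the curvature sandwich $f_{\min}\Sigma\preceq\nabla^2\ell_\gamma\preceq f_{\max}\Sigma$ (the paper's Proposition~\ref{prop:strongly-cvx-smooth}, via Lemmas~\ref{lem:gradient} and~\ref{lem:hessian}), bound the stochastic subgradients by $\mathbb{E}\|\hat g_k\|_2^2\le\lambda_{\max}d$, then apply the last-iterate SGD analysis for strongly convex objectives (the paper invokes \citet{rakhlin2012making} as a black box, while you sketch the recursion), and finally convert between the objective gap, $\|\cdot\|_\Sigma$, and $\|\cdot\|_2$. The only cosmetic difference is the order of conversions---the paper bounds the excess loss first (Corollary~\ref{cor:upper-exp-loss}) and then translates to both norms, whereas you bound $\mathbb{E}\|\theta_n-\theta^*\|_2^2$ first---but the ingredients and the resulting constants are the same.
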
 
The proof of Theorem \ref{thm:sgd-quantile} is deferred to Appendix~\ref{sec:prf-thm-sgd-quantile}. 

\begin{remark}\label{rem:strong-cvx}
    The results of Theorem~\ref{thm:sgd-quantile} are established under a strongly-convex assumption as they rely on Theorem~\ref{thm:sgd-general} from \citet{rakhlin2012making}. Comparable rates can also be obtained for non-strongly-convex objectives under the assumptions in \citet{bach2013non}, where Assumption~\ref{assum:bounded-cov} can be weakened to requiring only the invertibility of $\mathbb{E}[X X^{\top}]$.
\end{remark}

Theorem \ref{thm:cqr-main} establishes a non-asymptotic efficiency guarantee for CQR-SGD (Algorithm \ref{alg:cqr-sgd}), bounding the expected length deviation of the prediction set from the oracle conditional quantile interval
\begin{align} \label{eq:oracle}
    \mathcal{C}^{*}(X) := \left[ \ q_{\alpha/2}\left(Y \mid X\right), \;  q_{1-\alpha/2}\left(Y \mid X\right) \ \right].
\end{align}

We measure the efficiency of conformalized regression methods by the expected length deviation 
\begin{align} \tag{expected length deviation}
\mathbb{E}_{X,\vartheta_{n},\mathcal{D}_{\mathrm{cal}}} \bigl[ \bigl| \left| \mathcal{C}(X) \right| - \left| \mathcal{C}^{*}(X) \right| \bigl| \bigl] .
\end{align}

\begin{roundgrayfill}
\begin{restatable}[Efficiency of CQR-SGD]{theorem}{thmMainQuantile} \label{thm:cqr-main}
    For CQR-SGD, suppose Assumptions \ref{assum:well-spec-cqr}--\ref{assum:reg-pdf} hold. If $m > {8H}/ {\min\{\alpha, 1-\alpha\}}$, 
    then for test sample $(X,Y)$ and $0<\alpha\le 1/2$,
    \begin{align}\label{eq:cqr-main-bound}
        \mathbb{E}_{X,\vartheta_{n},\mathcal{D}_{\mathrm{cal}}} \bigl[ \bigl| \left| \mathcal{C}(X) \right| - \left| \mathcal{C}^{*}(X) \right| \bigl| \bigl] 
        \le \mathcal{O}\!\left( n^{-1/2} + (\alpha^2 n)^{-1}+ m^{-1/2} + \exp(-\alpha^2 m)\right),
    \end{align}
    where $H$ is the constant defined in (\ref{eq:H-def}).
\end{restatable}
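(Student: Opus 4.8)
The plan is to separate the length deviation into a \emph{quantile-regression part}, governed by $\vartheta_n-\vartheta^*$ and controlled by Theorem~\ref{thm:sgd-quantile}, and a \emph{calibration part}, governed by the conformal correction $\hat q := \hat q_{(1-\alpha)_m}(S_m\mid\vartheta_n)$, and then to handle the calibration part on a ``good event'' whose complement carries precisely the $\alpha$-sensitive error terms. Under Assumption~\ref{assum:well-spec-cqr}, $|\mathcal{C}^{*}(X)|=(\bar\theta^{*}-\underline\theta^{*})^{\top}X$, and whenever $\mathcal{C}(X)$ is a non-degenerate interval, (\ref{eq:cqr-prediction-set}) gives $|\mathcal{C}(X)|=(\bar\theta_n-\underline\theta_n)^{\top}X+2\hat q$, so
\[
\bigl|\,|\mathcal{C}(X)|-|\mathcal{C}^{*}(X)|\,\bigr|\ \le\ \bigl|(\bar\theta_n-\bar\theta^{*})^{\top}X\bigr|+\bigl|(\underline\theta_n-\underline\theta^{*})^{\top}X\bigr|+2|\hat q|.
\]
Taking expectations, Jensen's inequality together with the first bound of Theorem~\ref{thm:sgd-quantile} (at $\gamma=1-\alpha/2$ and $\gamma=\alpha/2$) controls the first two terms by $O(\sqrt{d/n})=O(n^{-1/2})$, so the task reduces to bounding $\mathbb{E}|\hat q|$ plus the contribution of the rare events (quantile crossing / empty prediction set), which I fold into the good-event bookkeeping below.

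The anchor is a coverage identity: under well-specification, $\mathbb{P}[S(X,Y;\vartheta^{*})\le t\mid X]=F_{Y\mid X}(\bar\theta^{*\top}X+t)-F_{Y\mid X}(\underline\theta^{*\top}X-t)$, so the population $(1-\alpha)$-quantile of $S(\cdot;\vartheta^{*})$ equals $0$, and by Assumption~\ref{assum:reg-pdf} the density of $S(\cdot;\vartheta^{*})$ lies in $[2f_{\min},2f_{\max}]$ near $0$. The key subtlety is that this well-conditioned window is one-sided of width only $\Theta(\alpha)$ on the positive side, since $y_{\max}-q_{1-\alpha/2}(Y\mid X)$ and $q_{\alpha/2}(Y\mid X)-y_{\min}$ are both of order $\alpha$. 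I therefore introduce the good event
\[
\mathcal{G}\ :=\ \bigl\{\|\vartheta_n-\vartheta^{*}\|_{2}\le c\alpha\bigr\}\ \cap\ \Bigl\{\textstyle\sup_{t}\bigl|\widehat F_m(t)-F_n(t)\bigr|\le c'f_{\min}\alpha\Bigr\},
\]
where $F_n$ is the c.d.f.\ of $S(\cdot;\vartheta_n)$ conditional on $\vartheta_n$, $\widehat F_m$ is its calibration-empirical c.d.f., and $c,c'$ are small constants depending only on $(B,f_{\min},f_{\max})$. With $c,c'$ small enough and under the hypothesis $m>8H/\alpha$, $\mathcal{G}$ keeps both $\hat q$ and the population $(1-\alpha)_m$-quantile $q^{*}_n$ of $S(\cdot;\vartheta_n)$ inside the region where $F_n$ has density $\ge f_{\min}$, and rules out quantile crossing and empty sets. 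On $\mathcal{G}^{c}$ I use the deterministic bound $\bigl|\,|\mathcal{C}(X)|-|\mathcal{C}^{*}(X)|\,\bigr|\le 2R+1/f_{\min}=O(1)$, and
\[
\mathbb{P}[\mathcal{G}^{c}]\ \le\ \mathbb{P}\bigl[\|\vartheta_n-\vartheta^{*}\|_{2}>c\alpha\bigr]\ +\ \mathbb{E}\Bigl[\mathbb{P}\bigl[\textstyle\sup_{t}|\widehat F_m-F_n|>c'f_{\min}\alpha\ \big|\ \vartheta_n\bigr]\Bigr];
\]
by Markov's inequality and the second bound of Theorem~\ref{thm:sgd-quantile} the first probability is $O(d/(\alpha^{2}n))=O((\alpha^{2}n)^{-1})$, and by the Dvoretzky--Kiefer--Wolfowitz inequality the second is $\le 2\exp(-2(c')^{2}f_{\min}^{2}\alpha^{2}m)=O(\exp(-\alpha^{2}m))$. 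These are exactly the two $\alpha$-sensitive terms.

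On $\mathcal{G}$ the level$\mapsto$quantile map of $F_n$ near $1-\alpha$ is $1/f_{\min}$-Lipschitz, so $|\hat q|\le|\hat q-q^{*}_n|+|q^{*}_n|$ with: (i) $|\hat q-q^{*}_n|\le f_{\min}^{-1}\sup_{t}|\widehat F_m(t)-F_n(t)|$, and since DKW gives $\mathbb{E}[\sup_{t}|\widehat F_m-F_n|]=O(m^{-1/2})$ (using the true supremum, not its $\mathcal{G}$-truncated value), this contributes $O(m^{-1/2})$; (ii) $|q^{*}_n|\le f_{\min}^{-1}\bigl(|F_n(0)-(1-\alpha)|+|(1-\alpha)_m-(1-\alpha)|\bigr)$, where $|(1-\alpha)_m-(1-\alpha)|=O(1/m)$, and the coverage identity together with $|F_{Y\mid X}(a)-F_{Y\mid X}(b)|\le f_{\max}|a-b|$ gives $|F_n(0)-(1-\alpha)|\le f_{\max}\mathbb{E}_{X}\bigl[|(\bar\theta_n-\bar\theta^{*})^{\top}X|+|(\underline\theta_n-\underline\theta^{*})^{\top}X|\bigr]$, whose expectation is $O(\sqrt{d/n})$ by Theorem~\ref{thm:sgd-quantile}. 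Hence $\mathbb{E}\bigl[|\hat q|\,\mathbf{1}_{\mathcal{G}}\bigr]=O(n^{-1/2}+m^{-1/2})$.

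Combining the three pieces,
\[
\mathbb{E}\bigl[\,\bigl|\,|\mathcal{C}(X)|-|\mathcal{C}^{*}(X)|\,\bigr|\,\bigr]\ \le\ O\!\bigl(n^{-1/2}+m^{-1/2}\bigr)+O(1)\cdot\mathbb{P}[\mathcal{G}^{c}]\ =\ O\!\bigl(n^{-1/2}+(\alpha^{2}n)^{-1}+m^{-1/2}+\exp(-\alpha^{2}m)\bigr),
\]
with $d,f_{\min},f_{\max},\lambda_{\min},\lambda_{\max},B,K$ (hence $H$ and $R$) absorbed into the constants. The main obstacle — and the reason the rate is not simply $O(n^{-1/2}+m^{-1/2})$ — is that for small $\alpha$ the induced score distribution is well-conditioned only on a one-sided window of width $\Theta(\alpha)$ around its $(1-\alpha)$-quantile; forcing the random thresholds $\vartheta_n$ and $\hat q$ to stay inside this window requires the $\Theta(\alpha)$ tolerances in $\mathcal{G}$, and the probability of leaving it — a second-moment/Markov estimate on $\vartheta_n$ and a binomial (DKW) tail — is precisely what produces the $(\alpha^{2}n)^{-1}$ and $\exp(-\alpha^{2}m)$ contributions. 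The fiddly part is matching the constants in $\mathcal{G}$ against the hypothesis $m>8H/\alpha$ so that $q^{*}_n$, and then $\hat q$, provably land in the well-conditioned window (a short continuity/bootstrap argument); everything else is a routine combination of Theorem~\ref{thm:sgd-quantile}, Lipschitz continuity of $F_{Y\mid X}$, and DKW.
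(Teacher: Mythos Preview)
Your proposal is correct and follows essentially the same strategy as the paper: decompose the length deviation into a quantile-regression part (handled by Theorem~\ref{thm:sgd-quantile}) plus $2|\hat q|$, then control $|\hat q|$ on a good event defined by a $\Theta(\alpha)$-tolerance on $\vartheta_n$ (yielding the $(\alpha^2 n)^{-1}$ term via Markov and Theorem~\ref{thm:sgd-quantile}) together with the DKW inequality (yielding the $m^{-1/2}$ and $\exp(-\alpha^2 m)$ terms), with the $\Theta(\alpha)$-width well-conditioned window for $f_{S\mid\vartheta_n}$ being exactly the paper's $\beta=\min\{\alpha,1-\alpha\}/(2f_{\max})$ from Proposition~\ref{prop:quantile-neighbor}. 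The only organizational difference is that the paper uses a training-only good event $V$ and extracts the $\exp(-\alpha^2 m)$ contribution from a layer-cake integral over the DKW tail inside Proposition~\ref{prop:emp-quantile-exp}, whereas you fold the DKW threshold directly into $\mathcal{G}$ and read off $\exp(-\alpha^2 m)$ from $\mathbb{P}[\mathcal{G}^c]$; the two packagings are equivalent.
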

\end{roundgrayfill}

The explicit upper bound (\ref{eq:cqr-full-bound}) and the full proof of Theorem \ref{thm:cqr-main} are presented in Appendix \ref{sec:prf-cmr}, with a proof sketch illustrated in Figure \ref{fig:proof-sketch}. 

\begin{figure}[ht]
\centering
\begin{tcolorbox}[enhanced, colback=white, boxrule=0.2pt, boxsep=0pt]
\begin{minipage}{0.98\textwidth}
{\footnotesize
\begin{align*}
    & \graybox{ $\mathbb{E}_{X,\vartheta_{n},\mathcal{D}_{\mathrm{cal}}} \bigl[ \bigl| \left| \mathcal{C}(X) \right| - \left| \mathcal{C}^{*}(X) \right| \bigl| \bigl]$ } \\[0.7em]
    & = \;  \mathbb{E}_{X,\vartheta_{n},\mathcal{D}_{\mathrm{cal}}} \left[ \Bigl| \bigl| \max\left\{t_{1-\alpha/2}\left(X;\bar{\theta}_{n}\right)-t_{\alpha/2}\left(X;\underline{\theta}_{n}\right) + 2 \hat{q}_{(1-\alpha)_{m}}(S_{m} \mid \vartheta_{n}),\; 0\right\} \bigl| \right.\\
    &\qquad \qquad\qquad \quad - \left. \bigl| \left(t_{1-\alpha/2}\left(X;\bar{\theta}^{*}\right) - t_{\alpha/2}\left(X;\underline{\theta}^{*}\right)\right) \bigl| \Bigl| \right] 
    \\[0.7em]
    & \le  \; \underbrace{
        \mathbb{E}_{X,\vartheta_{n}}\left[\left|t_{1-\alpha/2}\left(X;\bar{\theta}_{n}\right) - t_{1-\alpha/2}\left(X;\bar{\theta}^{*}\right) \right| 
        + \left| t_{\alpha/2}\left(X;\underline{\theta}_{n}\right) - t_{\alpha/2}\left(X;\underline{\theta}^{*}\right) \right| \right]
    }_\text{\footnotesize
        \textit{
        \graybox{
            $= \mathcal{O}\left(\sqrt{1/n}\right)$
        } 
        Quantile regression errors of trained model}
        (Thm. \ref{thm:sgd-quantile})
    } \\[0.7em]
    & \qquad + \underbrace{
            \mathbb{E}_{\vartheta_{n}}\left[ \left| q_{1-\alpha}\left( S \mid \vartheta_{n} \right)\right| \right] 
            }_\text{\footnotesize
            \graybox{
                $= \mathcal{O}\left(\sqrt{1/n}\right)$
            } 
            \textit{Population quantile of the score}
            (Prop. \ref{prop:abs-qs-upper})
    } \\[0.7em]
    & \qquad + \underbrace{
        \mathbb{E}_{\vartheta_{n}} \left[ \left| q_{1-\alpha}\left( S \mid \vartheta_{n} \right) -q_{(1-\alpha)_{m}}\left( S \mid \vartheta_{n} \right)\right| \right]
    }_\text{\footnotesize
        \graybox{
            $= \mathcal{O}\left(1/m+1/(\alpha^2 n)\right)$
        } \textit{Population finite-sample score-quantile gap}
        (Prop. \ref{prop:quantile-diff-exp})
    } \\[0.7em]
    & \qquad + \underbrace{
        \mathbb{E}_{\vartheta_{n},\mathcal{D}_{\mathrm{cal}}} \left[ \left| q_{(1-\alpha)_{m}}\left( S \mid \vartheta_{n} \right) -\hat{q}_{(1-\alpha)_{m}}\left( S_{m} \mid \vartheta_{n} \right)\right| \right] 
    }_\text{\footnotesize
        \graybox{
            $= \mathcal{O}\left(\sqrt{1/m}+\exp(-\alpha^2 m)+1/(\alpha^2 n)\right)$
        } 
        \textit{Empirical score-quantile concentration}
        (Prop. \ref{prop:emp-quantile-exp})
    }
\end{align*}
}
\end{minipage}
\end{tcolorbox}
\caption{Proof outline of Theorem~\ref{thm:cqr-main}. Full proof deferred to Section~\ref{sec:prf-cqr}. }
\label{fig:proof-sketch}
\end{figure}

\begin{remark} \label{rem:replace_optimizer}
    While Theorem \ref{thm:cqr-main} is presented for CQR trained using SGD, the analysis strategy applies to other optimization algorithms. In particular, one can replace the SGD error bound in Theorem \ref{thm:sgd-quantile} with that of the chosen optimizer. This replacement modifies only the terms in the overall bound that depend on the training set size $n$. Formally, suppose the upper bound in Theorem~\ref{thm:sgd-quantile} is replaced by $\varphi_{n}$ where $\varphi_{n}\to 0$ as $n\to \infty$, then the upper bound in Theorem~\ref{thm:cqr-main} becomes $\mathcal{O}\left( \varphi_{n}^{1/2} + \alpha^{-2} \varphi_{n} + m^{-1/2} + \exp(-\alpha^2 m) \right)$.
\end{remark}

\begin{remark}\label{rem:optimal-prediction}
For a random variable $Z$, the density level set $\mathcal{L}(u_{1-\alpha})$ is the optimal prediction set with coverage probability $1-\alpha$ \citep{lei2011efficient}, i.e.,
$$\mathcal{L}(u_{1-\alpha}) := \left\{ z\in\mathcal{Z}: f_{Z}(z) \ge u_{1-\alpha} \right\} = \argmin_{\mathbb{P}[Z\in \mathcal{C}] \ge 1-\alpha} |\mathcal{C}|,$$
where $u_{1-\alpha} = \inf \{u: \mathbb{P}[Z\in \mathcal{L}(u)] \ge 1-\alpha\}$. 
The oracle interval $\mathcal{C}^{*}(x)$ coincides with the optimal prediction set if for any $y\in\mathcal{C}^{*}(x)$ and any $y'\in \mathcal{Y}\setminus \mathcal{C}^{*}(x)$, it holds that $f_{Y|X=x}(y)\ge f_{Y|X=x}(y')$. 
\end{remark}

\subsubsection{Phase Transitions of the Upper Bound} \label{sec:phase-transition}
In Theorem~\ref{thm:cqr-main}, the upper bound on the expected absolute deviation between the prediction set length $|\mathcal{C}(X)|$ and the oracle interval length $|\mathcal{C}^{*}(X)|$ is expressed explicitly as a function of the training size $n$, calibration size $m$, and miscoverage level $\alpha$.
Unlike prior analyses that treat $\alpha$ as a fixed constant, our result reveals its critical role in efficiency. 
Specifically, the terms $(\alpha^2 n)^{-1}$ and $\exp(-\alpha^2 m)$ in the bound imply a fundamental scaling relationship as follows.

\textbf{Regimes of $\alpha$ in general cases.} 
\begin{itemize}[nosep,itemsep=5pt,leftmargin=13pt]
    \item The length deviation converges to zero whenever $\alpha$ decays slower than $n^{-1/2}$ and $m^{-1/2}$, i.e., $\alpha = \omega( \max \{ n^{-1/2}, m^{-1/2} \})$. Thus, Theorem~\ref{thm:cqr-main} implies that if the expected prediction set length is required to remain within a fixed tolerance of the oracle length, $\alpha$ is not supposed to be chosen arbitrarily small.
    \item For the two $n$-dependent terms in (\ref{eq:cqr-main-bound}), if $\alpha =\Omega( n^{-1/4})$, then they are of order $\mathcal{O}(n^{-1/2})$; otherwise they are of order $\mathcal{O}\left((\alpha^2 n)^{-1}\right)$.
    \item For the two $m$-dependent terms, if $\alpha=\Omega (\sqrt{\log m / m})$, then they are of order $\mathcal{O}(m^{-1/2})$; otherwise they are of order $\mathcal{O}(\exp(-\alpha^2 m))$.
    \item Thus, if $\alpha =$ {\small $\Omega( \max\{n^{-1/4}, \sqrt{\log m /m}\})$}, the upper bound scales as $\mathcal{O}(n^{-1/2} + m^{-1/2})$, which coincides with the rate in \citet{bars2025volume} assuming a finite function class. 
\end{itemize}  

\textbf{Regimes of $\alpha$ when $n,m$ of the same order.} 
When $n = \Theta(m)$, the upper bound simplifies to $\mathcal{O}( n^{-1/2} + (\alpha^2 n)^{-1})$. Figure \ref{fig:alpha-regimes} shows it in different regimes of $\alpha= \Omega(n^{-1})$, consistent with the assumption $m > 8H/\min \{\alpha, 1-\alpha\}$ in Theorem \ref{thm:cqr-main}. 

\textbf{Data Allocation.} 
If $\alpha=$ {\small $\Omega( \max\{n^{-1/4}, \sqrt{\log m /m}\})$}, the bound reduces to $\mathcal{O}(n^{-1/2} + m^{-1/2})$, so a natural choice is to set $n$ and $m$ to be of the same order. 
If $\alpha=$ {\small $\Omega (\sqrt{\log m / m})$} and $\alpha = \omega(n^{-1/4})$, the trade-off is between $\mathcal{O}(m^{-1/2})$ and $\mathcal{O}(1/(\alpha n^2))$, and balancing them yields $m = \Theta(\alpha^4 n^4)$.  

\begin{figure}
    \centering
    \includegraphics[width=0.77\linewidth]{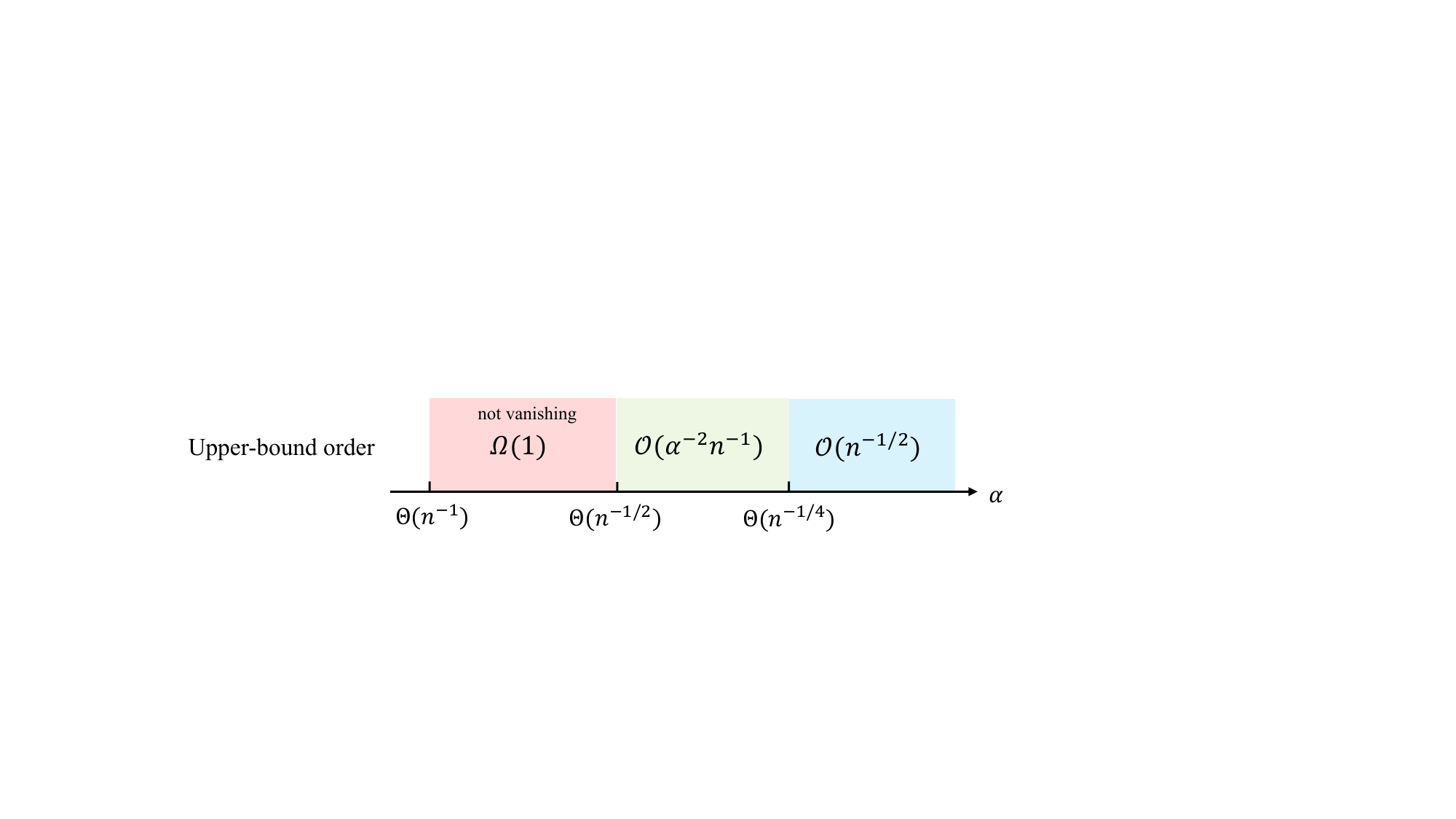}
    \caption{Upper bound orders in Theorem \ref{thm:cqr-main} in different regimes of $\alpha$ when $n = \Theta(m)$. Results in \citet{lei2018distribution, bars2025volume} lie in the right most regime ({\color{cyan}{blue}}).}
    \label{fig:alpha-regimes}
\end{figure}

\section{Analysis of Conformalized Median Regression (CMR)} \label{sec:cmr}

\subsection{Problem Setup for CMR-SGD} \label{sec:setup-cmr}

For conformalized median regression (CMR), we consider the same i.i.d. data model and learning algorithm (SGD) as CQR in Section \ref{sec:setup-cqr}. 

\paragraph{Learning objective.}
In CMR, the training set $\mathcal{D}_{\mathrm{train}}$ is used to estimate the conditional median function $q_{1/2}\left(Y \mid X\right)$, which is the special case for conditional $\gamma$-quantile estimation with $\gamma = 1/2$ (see (\ref{eq:cqf-def})).
The estimated conditional median function $t_{1/2}(\cdot; \theta)$ is learned by solving the minimization of the expected absolute error (stochastic pinball loss with $\gamma=1/2$) via SGD:
\begin{align} \label{eq:sto-cmr}
    \min_{\theta \in \Theta} \;
    \ell_{1/2}(\theta) := 
    \mathbb{E}_{(X,Y) \sim \mathcal{P}_{X \times Y}}
    \bigl[\, \lvert t_{1/2}(X;\theta) - Y \rvert \,\bigr].
\end{align}
We adopt the same linear model class as in CQR, namely (\ref{eq:linear-function-class}).

The shorthand notations for the learned parameter $\theta_{n}\!\left(1/2\right)$ and the true parameter $\theta^{*}\!\left(1/2\right)$ are:
\[
\check{\theta}_{n} := \theta_{n}\!\left(1/2\right), \quad
\check{\theta}^{*} := \theta^{*}\!\left(1/2\right).
\]

\paragraph{Conformalized median regression.}

In CMR, given the trained regressor $t_{1/2}(\cdot; \check{\theta}_{n})$,
the nonconformity score for $(X, Y)$ is 
\begin{align} \label{eq:score-cmr}
    S\left(X, Y; \check{\theta}_{n}\right) 
    := \left|t_{1/2}(X; \check{\theta}_{n}) - Y\right|,    
\end{align}
which corresponds to the absolute prediction error of the estimated conditional median $t{1/2}(\cdot; \check{\theta}_{n})$.

For the calibration set $\mathcal{D}_{\mathrm{cal}}$, let $S_{m}(\mathcal{D}_{\mathrm{cal}}; \check{\theta}_{n})$
denote the $m$ scores on calibration data, and let $\hat{q}_{(1-\alpha)_{m}}(S_{m} \mid \check{\theta}_{n})$
be the empirical $(1-\alpha)_{m}$-quantile of $S$ given $\check{\theta}_{n}$, i.e., the $\lceil (1-\alpha)(m+1)\rceil$-th smallest element in $S_{m}(\mathcal{D}_{\mathrm{cal}}; \check{\theta}_{n})$.
The prediction set for a test covariate $X$ is then
\begin{align} \label{eq:cmr-prediction-set}
    \mathcal{C}\left(X\right) = \left[t_{1/2}(X; \check{\theta}_{n}) - \hat{q}_{(1-\alpha)_{m}}(S_{m} \mid \check{\theta}_{n}), \; \ t_{1/2}(X; \check{\theta}_{n})+\hat{q}_{(1-\alpha)_{m}}(S_{m} \mid \check{\theta}_{n}) \right].
\end{align}

\subsection{Theoretical Results for Efficiency of CMR} \label{sec:results-cmr}

The well-specification assumption in CMR assumes a linear $q_{1/2}$:
\begin{assumption}[Well-specification in CMR] \label{assum:well-spec-cmr}
There exists $\theta^*(1/2) \in \Theta$ such that 
\[
q_{1/2}(Y \mid X=x) = t_{1/2}(x; \theta^*(1/2)), \quad \text{for all } x \in \mathcal{X}.
\]
\end{assumption}

For the CMR setting, we make an additional assumption on top of Assumptions~\ref{assum:well-spec-cmr}, \ref{assum:bounded-cov}, and~\ref{assum:reg-pdf}:
\begin{assumption}[Symmetry of quantiles] \label{assum:sym-cmr}
    There exists $\zeta >0$ such that for every $x\in\mathcal{X}$, 
    \begin{align}
        q_{1-\alpha/2}(Y \mid X=x) - q_{1/2}(Y \mid X=x) = q_{1/2}(Y \mid X=x) - q_{\alpha/2}(Y \mid X=x) = \zeta.
    \end{align}
\end{assumption}
\begin{remark}
    Assumption \ref{assum:sym-cmr} is standard in the analysis of conformalized regression based on a single regressor, following the precedent set by Assumption A1 of \citet{lei2018distribution}. 
\end{remark}

\begin{theorem}[Efficiency of CMR]\label{thm:cmr-main}
    For CMR-SGD, suppose Assumption \ref{assum:well-spec-cmr},\ref{assum:bounded-cov},\ref{assum:reg-pdf},\ref{assum:sym-cmr} hold. If $m > {8H}/ {\min\{\alpha, 1-\alpha\}}$, 
    then for test sample $(X,Y)$ and $0<\alpha\le 1/2$,
    \begin{align*}
        \mathbb{E}_{X,\vartheta_{n},\mathcal{D}_{\mathrm{cal}}} \bigl[ \bigl| | \mathcal{C}(X) | - | \mathcal{C}^{*}(X) | \bigl| \bigl] 
        \le \mathcal{O}\!\left( n^{-1/2} + (\alpha^2 n)^{-1}+ m^{-1/2} + \exp(-\alpha^2 m)\right), \numberthis
    \end{align*}   
    where $H$ is the constant defined in (\ref{eq:H-def}). 
\end{theorem}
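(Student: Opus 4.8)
The plan is to exploit that in CMR, unlike in CQR, both the prediction interval (\ref{eq:cmr-prediction-set}) and the oracle interval are symmetric with a length that does not depend on $X$, so the whole problem collapses to one scalar. First I would observe that the calibration scores $S(X^{\mathrm{cal}}_j,Y^{\mathrm{cal}}_j;\check\theta_n)=|t_{1/2}(X^{\mathrm{cal}}_j;\check\theta_n)-Y^{\mathrm{cal}}_j|$ are nonnegative, so $\hat q_{(1-\alpha)_m}(S_m\mid\check\theta_n)\ge 0$, the prediction set is never empty, and $|\mathcal{C}(X)|=2\hat q_{(1-\alpha)_m}(S_m\mid\check\theta_n)$; by Assumption~\ref{assum:sym-cmr}, $|\mathcal{C}^*(X)|=q_{1-\alpha/2}(Y\mid X)-q_{\alpha/2}(Y\mid X)=2\zeta$ for every $X$. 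Since only lengths enter, the center $t_{1/2}(X;\check\theta_n)$ cancels --- so the quantile-regression error in the \emph{center}, which contributes one of the four terms in the CQR proof of Figure~\ref{fig:proof-sketch}, is simply absent here --- leaving
\[
\mathbb{E}_{X,\check\theta_n,\mathcal{D}_{\mathrm{cal}}}\!\bigl[\,\bigl||\mathcal{C}(X)|-|\mathcal{C}^*(X)|\bigr|\,\bigr]=2\,\mathbb{E}_{\check\theta_n,\mathcal{D}_{\mathrm{cal}}}\!\bigl[\,\bigl|\hat q_{(1-\alpha)_m}(S_m\mid\check\theta_n)-\zeta\bigr|\,\bigr].
\]
Next I would record the oracle identity $q_{1-\alpha}(S^*\mid\check\theta^*)=\zeta$ for $S^*:=|q_{1/2}(Y\mid X)-Y|$: conditionally on $X$, $\mathbb{P}[S^*\le\zeta\mid X]=F_{Y\mid X}(q_{1/2}(Y\mid X)+\zeta)-F_{Y\mid X}(q_{1/2}(Y\mid X)-\zeta)=(1-\tfrac{\alpha}{2})-\tfrac{\alpha}{2}=1-\alpha$ by Assumption~\ref{assum:sym-cmr} and the continuity of $F_{Y\mid X}$ (Assumption~\ref{assum:reg-pdf}); averaging over $X$ gives $F_{S^*}(\zeta)=1-\alpha$, and uniqueness follows from the density lower bound recorded below.

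\textbf{Decomposition and transcription of the CQR score-quantile lemmas.} With $q:=q_{1-\alpha}(S\mid\check\theta_n)$ and $q_m:=q_{(1-\alpha)_m}(S\mid\check\theta_n)$, I would split, as in Figure~\ref{fig:proof-sketch},
\[
\bigl|\hat q_{(1-\alpha)_m}(S_m\mid\check\theta_n)-\zeta\bigr|\ \le\ \bigl|\hat q_{(1-\alpha)_m}(S_m\mid\check\theta_n)-q_m\bigr|+\bigl|q_m-q\bigr|+\bigl|q-\zeta\bigr|.
\]
The CMR score $S=|t_{1/2}(X;\check\theta_n)-Y|$ is bounded, $|S|\le R$, and its conditional density obeys the folding identity $f_S(s\mid\check\theta_n,X)=f_{Y\mid X}(t_{1/2}(X;\check\theta_n)+s)+f_{Y\mid X}(t_{1/2}(X;\check\theta_n)-s)$ (each term read as $0$ outside $\mathcal{Y}$), so $f_S(\cdot\mid\check\theta_n)\le 2f_{\max}$ globally and, provided $\check\theta_n$ is close enough to $\check\theta^*$ that the relevant quantile stays off the boundary of the support (quantified below), $f_S(\cdot\mid\check\theta_n)\ge f_{\min}$ there. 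These are exactly the structural inputs used by the CQR score-quantile lemmas, so the CMR analogue of the finite-sample score-quantile-gap bound (Proposition~\ref{prop:quantile-diff-exp}) gives $\mathbb{E}[|q_m-q|]=\mathcal{O}(m^{-1}+(\alpha^2 n)^{-1})$, and the CMR analogue of the empirical-concentration bound (Proposition~\ref{prop:emp-quantile-exp}) gives $\mathbb{E}[|\hat q_{(1-\alpha)_m}(S_m\mid\check\theta_n)-q_m|]=\mathcal{O}(m^{-1/2}+\exp(-\alpha^2 m)+(\alpha^2 n)^{-1})$, under the same hypothesis $m>8H/\min\{\alpha,1-\alpha\}$.

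\textbf{The parameter-perturbation term.} For the last term I would use that, for each fixed $\check\theta_n$, $|S-S^*|\le|t_{1/2}(X;\check\theta_n)-q_{1/2}(Y\mid X)|=|(\check\theta_n-\check\theta^*)^{\top}X|\le B\|\check\theta_n-\check\theta^*\|_2$ almost surely (using $\|X\|_2\le B$). Since shifting a random variable by a constant $c$ moves every quantile by at most $c$, and $F_{S^*}$ is strictly increasing at $\zeta$ (density $\ge f_{\min}$), this gives $|q-\zeta|=|q_{1-\alpha}(S\mid\check\theta_n)-q_{1-\alpha}(S^*\mid\check\theta^*)|\le B\|\check\theta_n-\check\theta^*\|_2$ deterministically in $\check\theta_n$. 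Taking expectations and applying Cauchy--Schwarz with $\mathbb{E}_{\check\theta_n}\|\check\theta_n-\check\theta^*\|_2^2\le \tfrac{4\lambda_{\max}^2 f_{\max}d}{\lambda_{\min}^4 f_{\min}^2 n}$ from Theorem~\ref{thm:sgd-quantile} yields $\mathbb{E}_{\check\theta_n}[|q-\zeta|]=\mathcal{O}(n^{-1/2})$. Summing the three contributions gives the claimed $\mathcal{O}(n^{-1/2}+(\alpha^2 n)^{-1}+m^{-1/2}+\exp(-\alpha^2 m))$.

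\textbf{The main obstacle.} The genuinely delicate point is establishing the CMR analogues of the CQR score-quantile lemmas, i.e.\ the lower bound on the folded-residual density near its $(1-\alpha)$-quantile, since for small $\alpha$ that quantile drifts toward the edge of the support. Because $\mathbb{P}[Y>q_{1/2}(Y\mid X)+\zeta\mid X]=\alpha/2$ for every $X$ (Assumption~\ref{assum:sym-cmr}), the density upper bound forces $y_{\max}-q_{1-\alpha/2}(Y\mid X)\ge\alpha/(2f_{\max})$, and symmetrically $q_{\alpha/2}(Y\mid X)-y_{\min}\ge\alpha/(2f_{\max})$; so the margin keeping a folded tail alive is only $\Omega(\alpha)$. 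Hence $f_S(\cdot\mid\check\theta_n)\ge f_{\min}$ near $q_{1-\alpha}(S\mid\check\theta_n)$ only on the event $\{B\|\check\theta_n-\check\theta^*\|_2\lesssim\alpha/f_{\max}\}$, whose complement has probability $\mathcal{O}((\alpha^2 n)^{-1})$ by Theorem~\ref{thm:sgd-quantile} and Markov's inequality; on that complement one falls back on the crude bound $|\hat q_{(1-\alpha)_m}(S_m\mid\check\theta_n)-\zeta|\le R$. This $\alpha$-scaled high-probability split is precisely what produces the $(\alpha^2 n)^{-1}$ terms, and it is the one place the argument must be re-run for the absolute-residual score instead of quoted from the CQR proof; the rest is a transcription, with Assumption~\ref{assum:sym-cmr} supplying the oracle identity $q_{1-\alpha}(S^*\mid\check\theta^*)=\zeta$ and the $X$-independence of $|\mathcal{C}^*(X)|$.
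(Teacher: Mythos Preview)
Your proposal is correct and follows essentially the same route as the paper's proof in Appendix~\ref{sec:prf-cmr}: the same reduction to $2\,\mathbb{E}\bigl[|\hat q_{(1-\alpha)_m}(S_m\mid\check\theta_n)-\zeta|\bigr]$, the same three-term decomposition (Propositions~\ref{prop:cmr-abs-qs}, \ref{prop:cmr-quantile-diff}, \ref{prop:cmr-emp-quantile}), and the same high-probability density lower bound on the folded-residual score near its $(1-\alpha)$-quantile (Proposition~\ref{prop:cmr-pdf-lowerbound}), with the bad event controlled via Markov's inequality and Theorem~\ref{thm:sgd-quantile} to produce the $(\alpha^2 n)^{-1}$ term. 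The only cosmetic difference is that the paper obtains $f_{S\mid\check\theta_n}\ge 2f_{\min}$ (both folded terms lie in $\mathcal{Y}$) where you wrote $\ge f_{\min}$, which affects constants but not the order.
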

The explicit upper bound (\ref{eq:cmr-full-bound}) and the full proof of Theorem \ref{thm:cmr-main} are presented in Appendix \ref{sec:prf-cmr}.

\section{Related Works} \label{sec:related}

\paragraph{Quantile regression.}
Quantile regression has attracted significant attention since the seminal work of \citet{koenker1978regression} due to its robustness to outliers and ability to capture distributional heterogeneity. 
Early works derived the $\sqrt{n}$-consistency and asymptotic normality of quantile regressors in the linear model \citep{bassett1978asymptotic,bassett1982empirical,portnoy1989adaptive,pollard1991asymptotics}.
Other works established statistical properties under fixed designs, where covariates are treated as deterministic~\citep{he1996general,koenker2005quantile}. More recent works have shifted toward non-asymptotic analysis with convergence rate $\mathcal{O}(1/\sqrt{n})$ under random designs, where covariates are random and prediction performance on unseen data is emphasized~\citep{steinwart2011estimating,catoni2012challenging,hsu2014random,loh2015regularized,pan2021multiplier,he2023smoothed,liu2023distribution, sasai2025outlier}. Median regression is a special case of quantile regression, has also been extensively studied~\citep{chen2008analysis}. {\citet{shen2025online} analyze online quantile regression with linear models trained via SGD, under regularity conditions closely related to ours, including a local lower bound on the conditional density.} These methods form the basis for conformalized median regression and conformalized quantile regression~\citep{romano2019conformalized}. 

\paragraph{Efficiency analysis of conformal prediction. }
Conformal prediction was developed to equip point predictions with confidence regions that provide finite-sample coverage guarantees~\citep{papadopoulos2002inductive,vovk2005algorithmic,vovk2009line,vovk2026randomness}. Research on its efficiency \citep{vovk2016criteria,gasparin2025improving} has evolved from early asymptotic convergence analyses, which established convergence rates toward the oracle prediction region~\citep{chajewska2001learning,li2008multivariate,sadinle2019least,sesia2020comparison,chernozhukov2021distributional,izbicki2022cd}, to generalization error-based bounds on expected set size~\citet{zecchin2024generalization}, and recently volume-minimization methods using data-driven norms~\citep{sharma2023pac,correia2024information,kiyani2024length,braun2025minimum,bars2025volume,gao2025volume,srinivas2026online}. 
Relatedly, \citet{gauthier2025backward} propose backward conformal prediction, which directly controls the size of prediction sets while relaxing the classical marginal coverage formulation. Complementary to marginal coverage, \citet{duchi2025sampleconditional} investigates sample-conditional coverage guarantees in split conformal prediction.
We note that our analysis focuses on the i.i.d. setting; robustness under distribution shift has been studied separately, e.g., \citet{joshi2025conformal}.

For conditional density estimation, under $\beta$-H\"older class and $\gamma$-exponent margin conditions of the conditional density,
\citet{lei2014distribution} derived minimax-optimal rates of order $\mathcal{O}((\log m/m)^{\beta/(3\beta+1)})$ when $\gamma=1$, and showed that conditional coverage cannot generally be guaranteed in finite samples. When the quantile of $Y$ is symmetric and independent of $X$ (analogous to Assumption~\ref{assum:sym-cmr}), \citet{lei2018distribution} incorporated training error into the efficiency analysis, treating $\alpha$ as a fixed constant. In contrast, our results for CQR and CMR make no assumptions on the training error and provide explicit upper bounds~(\ref{eq:cqr-full-bound},~\ref{eq:cmr-full-bound}) as functions of $(n,m,\alpha)$, applicable also to adaptive prediction sets. 

Under the assumptions that the quantile function of the nonconformity score is locally $\beta$-H\"older continuous, and that the worst-case empirical estimation error of the function class is bounded, \citet{bars2025volume} derived convergence rates of the order $\mathcal{O}(m^{-\beta\kappa/2}+n^{-\beta\iota/2})$ for some $0<\iota,\kappa<1$ when the function class is finite. 
In the case of $\beta=1$, this rate matches our bound when $\alpha$ is treated as a fixed constant, namely $\mathcal{O}(m^{-1/2}+n^{-1/2})$. Different from analysis in \citet{bars2025volume} that focuses on methods based on volume minimization, our work develops efficiency guarantees for CQR and CMR, without imposing assumptions on the score distribution induced by the trained model or on the estimation error. Instead, we demonstrate in the proof (especially Proposition \ref{prop:cmr-pdf-lowerbound}) that the required regularity conditions of the score are satisfied with high probability under mild assumptions on the underlying data distribution.

\section{Experiments} \label{sec:experiments}

This section presents evaluations of length deviation using synthetic data to access our theoretical results. Additional synthetic experiments and real-world experiments are deferred to Appendix \ref{sec:additional-synthetic} and \ref{sec:real-experiments} due to space constraints. {An overview of all experiments conducted in this paper can be found in Section \ref{sec:guide-exp}. }

\textbf{Experiment setup.} The data generation procedure is described in Appendix~\ref{sec:data-gen}.
All experiments employ linear models trained with SGD for one epoch using a batch size of $64$.
Learning rates are selected via successive halving over the range $[10^{-5}, 1]$.
We evaluate miscoverage levels $\alpha \in \{0.01, 0.025, 0.05, 0.075, 0.1, 0.125, 0.15, 0.175, 0.2\}$.
Reported results are averaged over $20$ independent trials, and length deviations are computed on $2000$ test samples.

We denote the expected length deviation as $\Delta$. We empirically assess the upper bound of $\Delta$ in Theorem~\ref{thm:cqr-main}, of order
$\mathcal{O}(\frac{1}{\sqrt{n}} + \frac{1}{n\alpha^2} + \frac{1}{\sqrt{m}}+\exp(-\alpha^2m))$ 
from three perspectives.
\begin{itemize}[nosep,itemsep=5pt,leftmargin=13pt]
    \item \textbf{Effect of training size $n$.} With a large calibration set ($m=5000$), the calibration error is negligible, and the theoretical bound simplifies to $\mathcal{O}(1/\sqrt{n} + 1/(n\alpha^2))$. The theory predicts that a linear regression of $\log\excesslength$ on $\log n$, i.e., 
    \begin{align}\label{eq:log_delta_regressor} 
        \log\excesslength\sim \textcolor{red}{a_1} \log n + \textcolor{blue}{a_2},
    \end{align}
    yields a slope $a_1$ that transitions from $-1$ to $-{1}/{2}$ as $\alpha$ increases. 
    We confirm this trend empirically. For each \(\alpha\), we train models over $n$ ranging from $200$ to $20000$
    (Figure~\ref{fig:synthetic:all}a) and fit the regression model (\ref{eq:log_delta_regressor}) (the inset in Figure~\ref{fig:synthetic:all}a shows an example) to obtain slope $a_{1}$ and intercept $a_{2})$. The resulting $(\alpha, a_1)$ pairs, shown by the red curve in Figure~\ref{fig:synthetic:all}c, validate that the slope shifts from approximately $-1$ to $-{1}/{2}$ as $\alpha$ grows, reflecting the transition of the dominant term in the bound from $\mathcal{O}(1/(n\alpha^2))$ to $\mathcal{O}(1/\sqrt{n})$.    The intercept $a_2$ depends on $\log \alpha$, as discussed below. 

    \item \textbf{Effect of miscoverage level $\alpha$.} In the regime where $(n\alpha^2)^{-1}$ dominates, $\Delta$ is expected to follow a power-law scaling of order $\alpha^{-2}$. To examine this, we further regress the fitted intercepts $a_2$ in (\ref{eq:log_delta_regressor}) on $\log \alpha$:
    \begin{align*}
        \textcolor{blue}{a_2} \sim b_1 \log \alpha + b_2.
    \end{align*}
    Together with (\ref{eq:log_delta_regressor}), the estimated coefficient $b_1 = -2.24$ (Figure~\ref{fig:synthetic:all}d) implies that $\excesslength \sim \alpha^{-2.24}$. This aligns with the theoretical upper bound of order $\mathcal{O}(\alpha^{-2})$. Appendix \ref{sec:middle-alpha} provides an additional verification for the existence of this regime. 

    \item \textbf{Effect of calibration size $m$.} Using the ground-truth parameter $\theta^*$, we vary the calibration set size $m$ ranging from $100$ to $3000$, 
    ensuring that the resulting length deviation depends only on $m$ and $\alpha$. As illustrated in Figure~\ref{fig:synthetic:all}b, the deviation decreases consistently with larger calibration sets. On a log–log scale, the slope approximately approaches $-0.5$, reflecting the increasing dominance of the $\mathcal{O}({1}/{\sqrt{m}})$ term in the bound. Meanwhile, the exponential term $\exp(-\alpha^2m)$ decays quickly for modest values of $m$ and becomes negligible thereafter.

\end{itemize}

\begin{figure}[tbp]
    \centering
    \includegraphics[width=\textwidth]{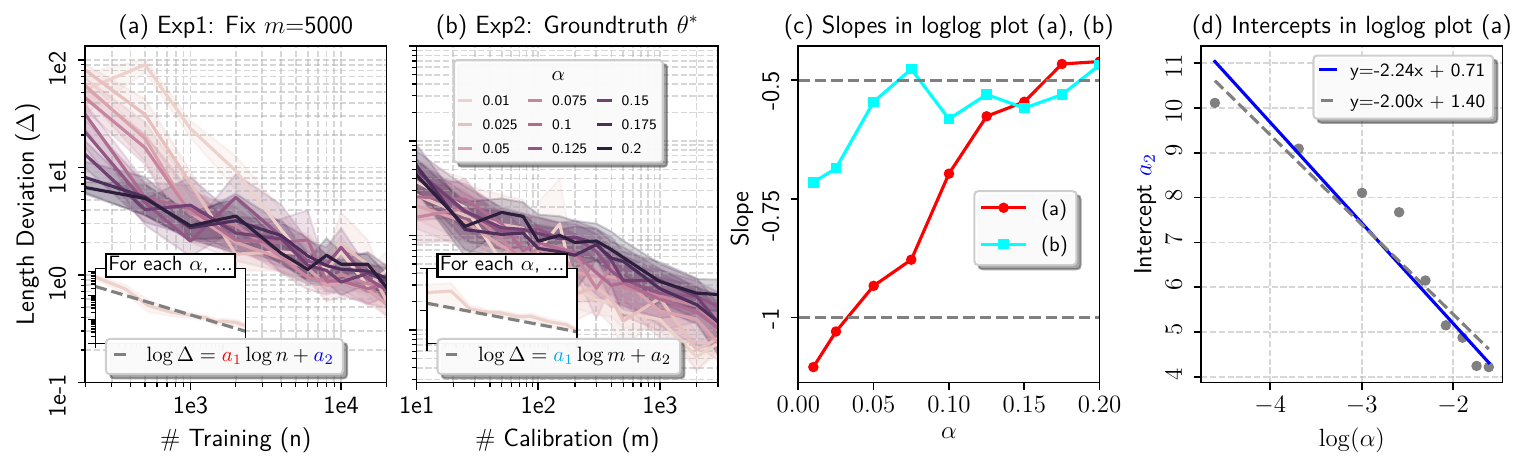}
    \caption{The length deviation of conformalized quantile regression in synthetic data experiments. }
    \label{fig:synthetic:all}
\end{figure}

\subsection{Roadmap of Experiments} \label{sec:guide-exp}
We here outline the structure of all experiments conducted in the paper. 

\paragraph{Synthetic experiments.} Figure \ref{fig:synthetic:all} in Section \ref{sec:experiments} and Figure \ref{fig:middle-regime-alpha} in Appendix \ref{sec:middle-alpha} assess the theoretical results developed in this paper. Appendix~\ref{sec:synthetic-optimizers} further examines optimization effects: Figure~\ref{fig:synthetic:msgd} investigates SGD with heavy-ball momentum, and Figure~\ref{fig:synthetic:adamw} reports the case of AdamW~\citep{loshchilov2019decoupled}. In Appendix~\ref{sec:synthetic-nonlinear}, Figure~\ref{fig:synthetic:conv} presents results under nonlinear conditional quantile functions. Finally, in Appendix~\ref{sec:synthetic-losses}, Figures~\ref{fig:synthetic:l1}–\ref{fig:synthetic:huber} evaluate alternative convex loss models.
    
\paragraph{Real-world experiments.} In Appendix \ref{sec:real-optimers}, Figure~\ref{fig:real-optimizers} presents an empirical evaluation of length deviation of CMR and CQR under different optimizers on five real-world datasets, comparing SGD, SGD with momentum, Adam, and AdamW. In Appendix \ref{sec:real-nonlinear}, Figure~\ref{fig:models} evaluates non-linear models. Appendix \ref{sec:emp_data_allo} empirically investigates data-allocation strategies in Figure \ref{fig:data_allocation}.

\section{Limitations, Discussion, and Future Work}

\paragraph{Oracle intervals may not be optimal under certain distributions.} 
Our theoretical analysis shows that the prediction sets produced by CQR and CMR converge to the oracle quantile interval~(\ref{eq:oracle}) as the training and calibration sample sizes $n$ and $m$ grow. However, the oracle interval itself is not always efficiency-optimal. It is optimal only when the condition in Remark~\ref{rem:optimal-prediction} holds, which depends on the structure of the conditional distribution. For instance, when the conditional density is multimodal or basin-shaped, the optimal prediction set is not a single interval. In such cases, the prediction sets produced by standard conformal methods such as CMR and CQR do not approximate the optimal set. This limitation stems inherently from the standard non-conformity scores, which are restricted to producing single-interval prediction sets and therefore cannot capture complex distributional structures. 
One way to improve efficiency in these settings is to move beyond fixed score functions and consider parameterized nonconformity scores that adapt to the data. For instance, recent work such as \citet{braun2025minimum} employs an optimization-driven framework targeting volume minimization to learn the parametrization. Such approaches could potentially learn transformations that adapt to complex conditional distributions, leading to more efficient prediction sets. This is a promising direction for future research.

\paragraph{Role and limitations of the linearity assumption.} 
Our theoretical analysis builds on the linearity assumption of the conditional quantiles. This assumption is standard in the theoretical analysis of quantile regression \citep{koenker2005quantile,pan2021multiplier,shen2025online}, as it ensures convexity of the objective and therefore the consistency of the SGD estimator as the training data size $n$ grows. While relaxing this assumption is in principle possible, it typically requires additional assumptions on the complexity of the function class or on the estimation error bounds, which may be difficult to verify in practice \citep{bars2025volume}.

\section{Conclusion}
This paper studies the efficiency of conformalized quantile regression (CQR) and conformalized median regression (CMR) through the lens of the expected length deviation, defined as the discrepancy between the coverage-guaranteed prediction set size and the oracle interval length. Our analysis explicitly accounts for randomness introduced by training, finite-sample calibration, and test evaluation. Under mild assumptions on the data distribution, we provide, to the best of our knowledge, the first non-asymptotic convergence rate of the form: 
$\mathcal{O}(n^{-1/2} + n^{-1}\alpha^{-2} + m^{-1/2}+\exp(-\alpha^2m))$, which highlights a fine-grained effect of the miscoverage level $\alpha$. Empirical results closely align with the theoretical findings. 

\paragraph{Ethics statement.} This work raises no ethical concerns to declare.

\paragraph{Reproducibility statement.} 
For our theoretical results, we provide complete proofs in Appendix \ref{sec:prf-cqr} and \ref{sec:prf-cmr}. 
The empirical setups are detailed in Section~\ref{sec:experiments}, Appendix~\ref{sec:additional-synthetic}, and Appendix~\ref{sec:real-experiments}. To facilitate full reproducibility, we include the source code and datasets as part of our submission. The provided repository contains scripts to install the required Python environment, run all experiments, and generate the figures presented in the manuscript.

\subsubsection*{Acknowledgments}
The work in this manuscript was partially supported by the Swiss National Science Foundation under Grant 200364.

\bibliography{conformal}
\bibliographystyle{iclr2026_conference}

\appendix

\section{Proofs of Results in CQR} \label{sec:prf-cqr}

To proceed,  we first define some notations as follows. 
\begin{align}
    \mathcal{E}_{\gamma}\left(X, \theta_{n}\left(\gamma\right)\right) &:= \left|t_{\gamma}\left(X; \theta_{n}\left(\gamma\right)\right) - t_{\gamma}\left(X; \theta^*\left(\gamma\right)\right)\right| \ge 0; \label{eq:def-E}\\
    \Delta\left(X, \vartheta_{n}\right) &:= \max \left\{ \mathcal{E}_{\alpha/2}\left(X, \underline{\theta}_{n}\right), \; \mathcal{E}_{1-\alpha/2}\left(X, \bar{\theta}_{n}\right) \right\} \ge 0; \label{eq:def-Delta}\\
    S^{*}\left(X,Y\right) &:= \max \left\{ t_{\alpha/2}\left(X; \underline{\theta}^{*}\right) - Y, \; Y - t_{1-\alpha/2}\left(X; \bar{\theta}^{*}\right) \right\}  \label{eq:def-S-star}\\
    &= \max \left\{ q_{\alpha/2}\left(Y \mid X\right) - Y, \; Y - q_{1-\alpha/2}\left(Y \mid X\right) \right\}; \notag \\
    M\left(\vartheta_{n}\right) &:= \max \left\{ \left\| \left(\underline{\theta}_{n} - \underline{\theta}^{*} \right) \right\|_{2}, \; \left\| \left(\bar{\theta}_{n} - \bar{\theta}^{*} \right) \right\|_{2}  \right\}.
\end{align}
Let $\hat{F}^{\left(m\right)}_{S \mid \vartheta_{n}}$ denote the empirical c.d.f. from $m$ i.i.d. calibration scores given $\vartheta_{n}$, i.e., 
\begin{align*}
    \hat{F}^{\left(m\right)}_{S \mid \vartheta_{n}}\left(s\right) = \frac{1}{m} \sum_{j=1}^{m} \mathbbm{1}\{ S_{j} \le s \},  \qquad S_{j} \overset{\text{i.i.d.}}{\sim}  F_{S\mid\vartheta_n}.
\end{align*}

\subsection{Proof of Theorem \ref{thm:sgd-quantile}} \label{sec:prf-thm-sgd-quantile}

\thmSgdQuantile*

To prove Theorem \ref{thm:sgd-quantile}, we first show that $\ell_{\gamma}\left(\theta\right)$ in (\ref{eq:sto-pinball-min}) is strongly convex and smooth with respect to $\theta^{*}(\gamma)$, as stated below in Proposition \ref{prop:strongly-cvx-smooth}. The proof of Proposition \ref{prop:strongly-cvx-smooth} further relies on Lemma \ref{lem:gradient} and Lemma \ref{lem:hessian} for the gradient and the Hessian of $\ell_{\gamma}\left(\theta\right)$. 

\begin{proposition} \label{prop:strongly-cvx-smooth}
    Under Assumption \ref{assum:reg-pdf}, and if \ $\mathbb{E}\left[\|X\|^2\right] < \infty$, the objective $\ell_{\gamma}\left(\theta\right)$ in (\ref{eq:sto-pinball-min}) satisfies
    \begin{align}
        \frac{f_{\min}}{2} \|\theta - \theta^*\left(\gamma\right)\|_{\Sigma}^{2} \le \ell_{\gamma}\left(\theta\right) - \ell_{\gamma}\left(\theta^*\left(\gamma\right)\right)  \le \frac{f_{\max}}{2} \|\theta - \theta^*\left(\gamma\right)\|_{\Sigma}^{2}.
    \end{align}
    If Assumption \ref{assum:bounded-cov} furthermore holds, then 
    \begin{align}
        \frac{f_{\min}\lambda_{\min}}{2} \|\theta - \theta^*\left(\gamma\right)\|_{2}^{2} \le \ell_{\gamma}\left(\theta\right) - \ell_{\gamma}\left(\theta^*\left(\gamma\right)\right)  \le \frac{f_{\max}\lambda_{\max}}{2} \|\theta - \theta^*\left(\gamma\right)\|_{2}^{2},
    \end{align}
    {where $\|\cdot\|_{\Sigma}\ $ denotes the $\Sigma$-induced norm, i.e., $\|\theta\|_{\Sigma} := \sqrt{\theta^{\top} \Sigma \theta}$. }
\end{proposition}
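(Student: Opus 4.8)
The plan is to compute a second-order Taylor expansion of $\ell_{\gamma}(\theta)$ around $\theta^{*}(\gamma)$. Since Assumption~\ref{assum:well-spec-cqr} (or \ref{assum:well-spec-cmr}) guarantees that $\theta^{*}(\gamma)$ is a minimizer, the first-order term vanishes, leaving only a quadratic term governed by the Hessian. First I would invoke Lemma~\ref{lem:gradient} to write $\nabla\ell_{\gamma}(\theta)$ explicitly — it should be of the form $\nabla\ell_{\gamma}(\theta) = \mathbb{E}[X(F_{Y\mid X}(t_{\gamma}(X;\theta)) - \gamma)]$ — from which the stationarity $\nabla\ell_{\gamma}(\theta^{*}(\gamma)) = 0$ follows because $F_{Y\mid X}(q_{\gamma}(Y\mid X)) = \gamma$ under Assumption~\ref{assum:reg-pdf} (continuity of the conditional density). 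Then I would invoke Lemma~\ref{lem:hessian}, which should give $\nabla^{2}\ell_{\gamma}(\theta) = \mathbb{E}[f_{Y\mid X}(t_{\gamma}(X;\theta))\, XX^{\top}]$.

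The core step is then an exact integral remainder form of Taylor's theorem: writing $\phi(s) := \ell_{\gamma}(\theta^{*} + s(\theta - \theta^{*}))$, we have $\ell_{\gamma}(\theta) - \ell_{\gamma}(\theta^{*}) = \int_{0}^{1}(1-s)\,\phi''(s)\,ds$, where $\phi''(s) = (\theta-\theta^{*})^{\top}\nabla^{2}\ell_{\gamma}(\theta^{*} + s(\theta-\theta^{*}))(\theta-\theta^{*}) = \mathbb{E}\big[f_{Y\mid X}\big(t_{\gamma}(X;\theta^{*}+s(\theta-\theta^{*}))\big)\,\big((\theta-\theta^{*})^{\top}X\big)^{2}\big]$. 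Using the pointwise bounds $f_{\min}\le f_{Y\mid X}(y\mid x)\le f_{\max}$ from Assumption~\ref{assum:reg-pdf} — which apply since $t_{\gamma}(x;\vartheta)=\vartheta^{\top}x$ is a valid argument inside $\mathcal{Y}$'s support for any convex combination, and in any case $f_{Y\mid X}$ is defined on all of $\mathbb{R}$ with these bounds on $\mathcal{Y}$ (one should note the expansion point stays in the relevant range, or more cleanly that the density bounds hold for the argument by well-specification and boundedness) — I can sandwich $\phi''(s)$ between $f_{\min}\,\mathbb{E}[((\theta-\theta^{*})^{\top}X)^{2}] = f_{\min}\|\theta-\theta^{*}\|_{\Sigma}^{2}$ and $f_{\max}\|\theta-\theta^{*}\|_{\Sigma}^{2}$. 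Integrating against $(1-s)\,ds$ over $[0,1]$, which contributes the factor $\int_0^1 (1-s)\,ds = 1/2$, yields the first displayed inequality. The second display then follows immediately by applying Assumption~\ref{assum:bounded-cov}: $\lambda_{\min}\|\theta-\theta^{*}\|_{2}^{2}\le \|\theta-\theta^{*}\|_{\Sigma}^{2}\le\lambda_{\max}\|\theta-\theta^{*}\|_{2}^{2}$.

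The main obstacle is a technical subtlety about where $f_{Y\mid X}$ is being evaluated: the argument $t_{\gamma}(x;\theta^{*}+s(\theta-\theta^{*})) = (\theta^{*}+s(\theta-\theta^{*}))^{\top}x$ need not lie in $\mathcal{Y}=[y_{\min},y_{\max}]$ for every $\theta\in\Theta$ and $s\in[0,1]$, whereas Assumption~\ref{assum:reg-pdf} only posits the two-sided density bound on $\mathcal{Y}$. I expect this is handled either by noting the normalization $K\le \max\{|y_{\min}|,|y_{\max}|\}/B$ from~(\ref{eq:linear-function-class}) keeps predictions in a controlled range, or — more carefully — by extending the bound: outside $\mathcal{Y}$ the conditional density is zero, so the Hessian contribution there is $0\ge f_{\min}(\cdot)$ would \emph{fail}, meaning one actually needs the predictions to stay inside the support. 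I would resolve this by restricting attention (as the linear model class and the normalization are designed to do) to $\theta$ with $\|\theta\|_2\le K$, so that $|t_\gamma(x;\theta)|\le BK\le\max\{|y_{\min}|,|y_{\max}|\}$, placing the argument inside $\mathcal{Y}$; combined with convexity of $\mathcal{Y}$ this covers all the intermediate points $\theta^{*}+s(\theta-\theta^{*})$. Everything else is routine manipulation of the Taylor remainder and the covariance-norm equivalences, requiring $\mathbb{E}[\|X\|^{2}]<\infty$ only to ensure the relevant expectations are finite so Fubini/differentiation under the integral in Lemmas~\ref{lem:gradient}--\ref{lem:hessian} is justified.
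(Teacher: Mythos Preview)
Your proposal is correct and follows essentially the same approach as the paper: invoke Lemma~\ref{lem:gradient} to verify $\nabla\ell_{\gamma}(\theta^{*}(\gamma))=0$, invoke Lemma~\ref{lem:hessian} for the Hessian, apply the integral-remainder Taylor formula, sandwich the quadratic form using the density bounds from Assumption~\ref{assum:reg-pdf}, and integrate $(1-s)$ over $[0,1]$ to pick up the factor $1/2$. In fact you are more careful than the paper on one point: the paper simply applies the bound $f_{\min}\le f_{Y\mid X}(\theta^{\top}X\mid X)\le f_{\max}$ without discussing whether $\theta^{\top}X\in\mathcal{Y}$, whereas you flag this and sketch a resolution via the normalization $K\le\max\{|y_{\min}|,|y_{\max}|\}/B$.
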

\begin{proof}   
    To prove this proposition, we first need Lemma \ref{lem:gradient} and Lemma \ref{lem:hessian} to calculate the gradient and the Hessian of $\ell_{\gamma}\left(\theta\right)$. 
    By Lemma \ref{lem:gradient}, 
    \begin{align*}
        \nabla \ell_{\gamma}\left(\theta^*\left(\gamma\right)\right) 
        &= \mathbb{E}_{X}\left[ \left(F_{Y \mid X}\left(\left(\theta^*\left(\gamma\right)\right)^{\top}X\mid X\right) -\gamma \right) X \right] \\
        &= \mathbb{E}_{X} \left[\left( F_{Y \mid X}\left( q_{\gamma}\left(Y \mid X\right)\right)  - \gamma\right)  X \right] \\
        &= 0.
    \end{align*} 
    By Lemma \ref{lem:hessian}, $\nabla^2 \ell_{\gamma}\left(\theta\right) = \mathbb{E}_{X}\left[ f_{Y \mid X}\left(\theta^{\top} X \mid X\right) X X^{\top}\right]$. 
    By Assumption \ref{assum:reg-pdf}, $\forall v\in\mathbb{R}^{d}$, 
    \begin{align*}
        f_{\min} \|v\|_{\Sigma}^2 = f_{\min} \mathbb{E}_{X}\left[ \left( X^{\top} v\right)^2 \right] &\le \mathbb{E}_{X}\left[ f_{Y \mid X}\left(\theta^{\top} X \mid X\right) \left( X^{\top} v\right)^2 \right] \\
        &\le f_{\max} \mathbb{E}_{X}\left[ \left( X^{\top} v\right)^2 \right] = f_{\max} \|v\|_{\Sigma}^2.
    \end{align*}
    Hence, $f_{\min} \Sigma \preceq  \nabla^2 \ell_{\gamma}\left(\theta\right) \preceq f_{\max} \Sigma$ for any $\theta\in\Theta$.
    By Taylor's Formula, 
    \begin{align*}
        \ell_{\gamma}\left(\theta\right) - \ell_{\gamma}\left(\theta^*\left(\gamma\right)\right)
        = \int_{0}^{1} \left(1-u\right) \left(\theta-\theta^*\left(\gamma\right)\right)^{\top} \nabla^2 \ell_{\gamma}\left(\theta^* + u\left(\theta-\theta^*\left(\gamma\right)\right)\right) \left(\theta-\theta^*\left(\gamma\right)\right) d u.
    \end{align*}
    Since 
    \begin{align*}
        f_{\min} \|\theta-\theta^*\left(\gamma\right)\|_{\Sigma} 
        &\le \left(\theta-\theta^*\left(\gamma\right)\right)^{\top} \nabla^2 \ell_{\gamma}\left(\theta^* + u\left(\theta-\theta^*\left(\gamma\right)\right)\right) \left(\theta-\theta^*\left(\gamma\right)\right) \\
        &\le f_{\max} \|\theta-\theta^*\left(\gamma\right)\|_{\Sigma} 
    \end{align*}
    and $\int_{0}^{1} \left(1 - u\right) \; d u = 1/2$, we have 
    \begin{align*}
        \frac{f_{\min}}{2} \|\theta - \theta^*\left(\gamma\right)\|_{\Sigma}^{2} \le \ell_{\gamma}\left(\theta\right) - \ell_{\gamma}\left(\theta^*\left(\gamma\right)\right)  \le \frac{f_{\max}}{2} \|\theta - \theta^*\left(\gamma\right)\|_{\Sigma}^{2}.
    \end{align*}
\end{proof}

\begin{lemma} \label{lem:gradient}
    Suppose (\ref{eq:assum-bounded-pdf}) in Assumption \ref{assum:reg-pdf} is true, if \ $\mathbb{E}\left[\|X\|_{2} \right] < \infty$, then 
    \begin{align}
        \nabla \ell_{\gamma}\left(\theta\right) = \mathbb{E}_{X,Y}\left[ \left(\mathbbm{1}\left\{ Y < \theta^{\top}X \right\}  - \gamma \right) X \right] = \mathbb{E}_{X}\left[ \left(F_{Y \mid X}\left(\theta^{\top}X \mid X\right) -\gamma \right) X \right] .
    \end{align}
\end{lemma}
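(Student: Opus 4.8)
The plan is to differentiate the population objective under the expectation sign. Write $\ell_{\gamma}(\theta) = \mathbb{E}_{X,Y}[L_{\gamma}(\theta^{\top}X, Y)]$ with $L_{\gamma}$ as in (\ref{eq:pinball-def}). For a fixed pair $(x,y)$ with $y \neq \theta^{\top}x$, the map $\theta \mapsto L_{\gamma}(\theta^{\top}x, y)$ is differentiable; since $\partial_{t} L_{\gamma}(t,y) = (1-\gamma)\mathbbm{1}\{y < t\} - \gamma\,\mathbbm{1}\{y > t\} = \mathbbm{1}\{y < t\} - \gamma$ for $t \neq y$, the chain rule gives $\nabla_{\theta} L_{\gamma}(\theta^{\top}x, y) = (\mathbbm{1}\{y < \theta^{\top}x\} - \gamma)\,x$.

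First I would justify the interchange of $\nabla$ and $\mathbb{E}$, which needs two ingredients. (i) For each fixed $\theta$, the exceptional set $\{y = \theta^{\top}x\}$ carries zero probability under $Y \mid X = x$, because Assumption~\ref{assum:reg-pdf} guarantees that $f_{Y\mid X}(\cdot \mid x)$ exists, so $F_{Y\mid X}(\cdot \mid x)$ is continuous and atomless; hence $\theta \mapsto L_{\gamma}(\theta^{\top}X, Y)$ is differentiable almost surely. (ii) $L_{\gamma}(\cdot, y)$ is Lipschitz in its first argument with constant $\max\{\gamma, 1-\gamma\} \le 1$, so for any unit vector $v$ the difference quotients $\tfrac{1}{h}\bigl(L_{\gamma}((\theta+hv)^{\top}X, Y) - L_{\gamma}(\theta^{\top}X, Y)\bigr)$ are bounded in absolute value by $\|X\|_{2}$, which is integrable by the hypothesis $\mathbb{E}[\|X\|_{2}] < \infty$. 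Applying the dominated convergence theorem to these difference quotients (equivalently, the Leibniz rule for differentiation under the integral sign) yields
\[
    \nabla \ell_{\gamma}(\theta) = \mathbb{E}_{X,Y}\bigl[(\mathbbm{1}\{Y < \theta^{\top}X\} - \gamma)\,X\bigr].
\]

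Next I would pass to the second expression by conditioning on $X$: using the tower property, $\mathbb{E}[\mathbbm{1}\{Y < \theta^{\top}X\} \mid X] = \mathbb{P}(Y < \theta^{\top}X \mid X) = F_{Y \mid X}((\theta^{\top}X)^{-} \mid X) = F_{Y \mid X}(\theta^{\top}X \mid X)$, the last step using continuity of $F_{Y\mid X}(\cdot \mid X)$. Substituting into the previous display and factoring out the constant $\gamma$ term gives $\nabla \ell_{\gamma}(\theta) = \mathbb{E}_{X}[(F_{Y\mid X}(\theta^{\top}X \mid X) - \gamma)\,X]$, as claimed. The only genuinely delicate point is the legitimacy of differentiating through the expectation in the presence of the kink of the pinball loss at $t = y$; this is exactly what the atomlessness from Assumption~\ref{assum:reg-pdf}, combined with the uniform Lipschitz bound and $\mathbb{E}[\|X\|_{2}] < \infty$, is used for, and the remaining manipulations are routine.
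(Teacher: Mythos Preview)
Your proposal is correct and follows essentially the same route as the paper: both establish that $\mathbb{P}[Y=\theta^{\top}X]=0$ via the existence of the conditional density, bound the difference quotients by $\|X\|_{2}$ using the $1$-Lipschitz property of $L_{\gamma}$, invoke dominated convergence to interchange $\nabla$ and $\mathbb{E}$, and finish with the tower property plus continuity of $F_{Y\mid X}$ to obtain the second expression.
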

\begin{proof}
    The key idea is to show that the interchange of differentiation and expectation is valid according to the dominated convergence theorem. 
    For $\theta\in\Theta$, it holds that 
    \begin{align*}
        \mathbb{P}\left[Y = \theta^{\top} X\right] 
        &= \mathbb{E}_{\left(X,Y\right)}\left[ \mathbbm{1}\left\{Y = \theta^{\top} X \right\}\right] \\
        &= \mathbb{E}_{X}\left[ \mathbb{E}_{Y\mid X}\left[ \mathbbm{1}\left\{Y = \theta^{\top} X \right\} \mid X\right]\right] \\
        &= \mathbb{E}_{X}\left[ \mathbb{P}\left[ Y = \theta^{\top} X  \mid X\right]\right] .
    \end{align*}
    Since (\ref{eq:assum-bounded-pdf}) in Assumption \ref{assum:reg-pdf} is true, the p.d.f $f_{Y \mid X}\left(Y \mid X\right)$ exists for each $x\in \mathcal{X}$. Thus, 
    \begin{align*}
         \mathbb{P}\left[ Y = \theta^{\top} x  \mid X = x\right] = \int_{\{\theta^{\top}x\}} f_{Y \mid X}\left(Y \mid X\right) dy = 0.
    \end{align*}
    Thus, $\mathbb{P}\left[Y = t_{\gamma}\left(X; \theta\right)\right] = \mathbb{P}\left[Y = \theta^{\top} X\right] = \mathbb{E}[0] = 0$.  

    For $\left(x,y\right)\in \mathcal{X} \times \mathcal{Y}$, if $y\ne t_{\gamma}\left(x;\theta\right)$, the directional derivative of $L_{\gamma}\left( \theta^{\top}x, y \right)$ at $\theta$ along vector $v$ is
    \begin{align*}
        D_{v} L_{\gamma}\left(\theta^{\top}x, y \right) &=
        \lim_{\rho\to 0} \frac{L_{\gamma}\left( \left(\theta + \rho v\right)^{\top}x, y \right) - L_{\gamma}\left(\theta^{\top}x, y \right)}{\|v\|_{2} \rho} \\
        &= \frac{1}{\|v\|} \left.\frac{d}{d\rho} L_{\gamma}\left( \left(\theta + \rho v\right)^{\top}x, y \right)\right|_{\rho=0} \\
        &= \left( \mathbbm{1}\left\{ y <  \theta^{\top} x \right\}  - \gamma \right) x^{\top} \frac{v}{\|v\|} .
    \end{align*}
    
    Moreover, since $L_{\gamma}\left(t, y\right)$ is $1$-Lipschitz with respect to $t$, 
    \begin{align*}
        \left|\frac{L_{\gamma}\left( \left(\theta + \rho v\right)^{\top}x, y \right) - L_{\gamma}\left(\theta^{\top}x, y \right)}{\|v\|_{2} \rho} \right|
        &= \frac{1}{\|v\|_{2} \rho}\left| L_{\gamma}\left( \left(\theta + \rho v\right)^{\top}x, y \right) - L_{\gamma}\left(\theta^{\top}x, y \right) \right|\\
        &\le \frac{1}{\|v\|_{2} \rho} \|\left(\theta + \rho v\right)^{\top}x - \theta^{\top}x \|_{2} \\
        &\le \|x\| .
    \end{align*}
    Since we assume $\mathbb{E}\left[\|X\|_{2} \right] < \infty$, 
    by the dominated convergence theorem, 
    \begin{align*}
        D_{v} \ell_{\gamma}\left(\theta\right) 
        & = D_{v} \mathbb{E}_{X,Y}\left[L_{\gamma}\left(\theta^{\top}X, Y \right)\right] \\
        &= \lim_{\rho\to 0} \frac{\mathbb{E}_{X,Y} \left[ L_{\gamma}\left( \left(\theta + \rho v\right)^{\top}X, Y \right)\right] - \mathbb{E}_{X,Y} \left[L_{\gamma}\left(\theta^{\top}X, Y \right)\right]}{\|v\|_{2} \rho}\\
        &= \lim_{\rho\to 0} \mathbb{E}_{X,Y} \left[\frac{ L_{\gamma}\left( \left(\theta + \rho v\right)^{\top}X, Y \right) - L_{\gamma}\left(\theta^{\top}X, Y \right)}{\|v\|_{2} \rho} \right] \\
        &= \mathbb{E}_{X,Y} \left[ \lim_{\rho\to 0} \frac{ L_{\gamma}\left( \left(\theta + \rho v\right)^{\top}X, Y \right) - L_{\gamma}\left(\theta^{\top}X, Y \right)}{\|v\|_{2} \rho} \right]\\
        &= \mathbb{E}_{X,Y} \left[ D_{v} L_{\gamma}\left(\theta^{\top}X, Y \right) \right] \\
        &= \mathbb{E}_{X,Y} \left[  \left( \mathbbm{1}\left\{ Y <  \theta^{\top} X \right\}  - \gamma \right) X  \right]^{\top} \frac{v}{\|v\|} .
    \end{align*}
    Hence, 
    \begin{align*}
        \nabla \ell_{\gamma}\left(\theta\right) 
        &= \mathbb{E}_{X,Y}\left[ \left(\mathbbm{1}\left\{ Y < \theta^{\top}X \right\} - \gamma \right) X \right] \\
        &= \mathbb{E}_{X}\left[ \mathbb{E}_{Y \mid X}\left[ \left(\mathbbm{1}\left\{ Y < \theta^{\top}X \right\} - \gamma \right) X \mid  X \right] \right] \\
        &= \mathbb{E}_{X}\left[ \mathbb{E}_{Y \mid X}\left[ \left(\mathbbm{1}\left\{ Y < \theta^{\top}X \right\} - \gamma \right) \mid  X \right] X \right] \\ 
        &= \mathbb{E}_{X}\left[ \left(F_{Y \mid X}\left(\theta^{\top}X \mid X\right) -\gamma \right) X \right] .
    \end{align*}
\end{proof}

\begin{lemma}\label{lem:hessian}
    Under Assumption \ref{assum:reg-pdf}, if \ $\mathbb{E}\left[\|X\|^2\right] < \infty$, then 
    \begin{align}
        \nabla^2 \ell_{\gamma}\left(\theta\right) = \mathbb{E}_{X}\left[ f_{Y \mid X}\left(\theta^{\top} X \mid X\right) X X^{\top}\right] .
    \end{align}
\end{lemma}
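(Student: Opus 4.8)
The plan is to differentiate the gradient formula from Lemma~\ref{lem:gradient}, namely $\nabla \ell_{\gamma}(\theta) = \mathbb{E}_{X}\big[(F_{Y\mid X}(\theta^{\top}X \mid X) - \gamma) X\big]$, once more with respect to $\theta$, justifying the interchange of differentiation and expectation via the dominated convergence theorem exactly as in the proof of Lemma~\ref{lem:gradient}. The only new analytic input needed is that, for each fixed $x\in\mathcal{X}$, the conditional c.d.f.\ $F_{Y\mid X}(\cdot\mid x)$ is continuously differentiable with derivative $f_{Y\mid X}(\cdot\mid x)$; this follows from Assumption~\ref{assum:reg-pdf}, which guarantees the conditional density exists and is continuous, together with the fundamental theorem of calculus.

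\textbf{Key steps.} First I would fix $\theta\in\Theta$ and a direction $v\in\mathbb{R}^{d}$, and consider the difference quotient of the integrand $\psi_{\rho}(x) := \big(F_{Y\mid X}((\theta+\rho v)^{\top}x \mid x) - \gamma\big)x - \big(F_{Y\mid X}(\theta^{\top}x \mid x) - \gamma\big)x$ divided by $\rho$. Using the mean value theorem applied to $\rho\mapsto F_{Y\mid X}((\theta+\rho v)^{\top}x\mid x)$ (valid by the $C^{1}$ property above), there is $\rho^{*}$ between $0$ and $\rho$ with
\[
\frac{F_{Y\mid X}((\theta+\rho v)^{\top}x\mid x) - F_{Y\mid X}(\theta^{\top}x\mid x)}{\rho} = f_{Y\mid X}\big((\theta+\rho^{*}v)^{\top}x \mid x\big)\,(v^{\top}x),
\]
whose absolute value is at most $f_{\max}\|v\|_{2}\|x\|_{2}$ by Assumption~\ref{assum:reg-pdf}. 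Hence the $i$-th component of the difference quotient is dominated in absolute value by $f_{\max}\|v\|_{2}\|x\|_{2}\,|x_{i}| \le f_{\max}\|v\|_{2}\|x\|_{2}^{2}$, which is integrable because $\mathbb{E}[\|X\|^{2}]<\infty$. Second, I would pass to the limit $\rho\to 0$ inside the expectation by dominated convergence, obtaining the directional derivative $D_{v}\nabla\ell_{\gamma}(\theta) = \mathbb{E}_{X}\big[f_{Y\mid X}(\theta^{\top}X\mid X)(X^{\top}v)X\big] = \mathbb{E}_{X}\big[f_{Y\mid X}(\theta^{\top}X\mid X) XX^{\top}\big]v$. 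Since this holds for every $v$ and the map $v\mapsto \mathbb{E}_{X}[f_{Y\mid X}(\theta^{\top}X\mid X)XX^{\top}]v$ is linear and continuous, I would conclude $\nabla^{2}\ell_{\gamma}(\theta) = \mathbb{E}_{X}[f_{Y\mid X}(\theta^{\top}X\mid X) XX^{\top}]$.

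\textbf{Main obstacle.} The routine calculations (chain rule, componentwise bookkeeping) are straightforward; the point requiring care is the dominated-convergence justification for differentiating under the expectation a second time, specifically producing a single integrable envelope for the difference quotients uniformly in small $\rho$. Using the mean value theorem together with the uniform bound $f_{Y\mid X}\le f_{\max}$ from Assumption~\ref{assum:reg-pdf} and the finiteness of $\mathbb{E}[\|X\|^{2}]$ handles this cleanly, so no genuine difficulty remains beyond being explicit about these steps. A minor point worth stating is that the map $\theta\mapsto \mathbb{E}_X[f_{Y\mid X}(\theta^\top X\mid X)XX^\top]$ is continuous (again by continuity of $f$ and dominated convergence), which confirms that the computed expression is indeed the Hessian rather than merely a collection of directional second derivatives.
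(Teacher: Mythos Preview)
Your proposal is correct and follows essentially the same route as the paper's own proof: start from the gradient formula of Lemma~\ref{lem:gradient}, form the directional difference quotient of the integrand, apply the mean value theorem together with the bound $f_{Y\mid X}\le f_{\max}$ from Assumption~\ref{assum:reg-pdf} to obtain the integrable envelope $f_{\max}\|x\|^{2}$, and invoke dominated convergence (using $\mathbb{E}[\|X\|^{2}]<\infty$) to pass the limit inside the expectation. Your added remark on continuity of $\theta\mapsto\mathbb{E}_X[f_{Y\mid X}(\theta^\top X\mid X)XX^\top]$ is a nice touch the paper omits, but otherwise the arguments are the same.
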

\begin{proof}
    By Assumption, $\mathbb{E}\left[\|X\|_{2} \right] \le \sqrt{\mathbb{E}\left[\|X\|^2\right] } <\infty$. 
    Then, by Lemma \ref{lem:gradient}, 
    \begin{align*}
        \nabla \ell_{\gamma}\left(\theta\right) 
        &= \mathbb{E}_{X,Y}\left[ \left(\mathbbm{1}\left\{ Y < \theta^{\top}X \right\} - \gamma \right) X \right] \\
        &= \mathbb{E}_{X}\left[ \mathbb{E}_{Y \mid X}\left[ \left(\mathbbm{1}\left\{ Y < \theta^{\top}X \right\} - \gamma \right) X \mid  X \right] \right] \\
        &= \mathbb{E}_{X}\left[ \mathbb{E}_{Y \mid X}\left[ \left(\mathbbm{1}\left\{ Y < \theta^{\top}X \right\} - \gamma \right) \mid  X \right] X \right] \\ 
        &= \mathbb{E}_{X}\left[ \left(F_{Y \mid X}\left(\theta^{\top}X \mid X\right) -\gamma \right) X \right] .
    \end{align*}
    To prove the lemma, the key point is to show that the interchange of differentiation and expectation is valid, as in the proof of Lemma \ref{lem:gradient}. 
    \begin{align*}
        &\lim_{\rho\to 0} \frac{\left(F_{Y \mid X}\left(\theta^{\top}x + \rho v^{\top} x\mid X\right) -\gamma \right) x - \left(F_{Y \mid X}\left(\theta^{\top}x \mid X\right) -\gamma \right) x}{\|v\|_{2} \rho}  \\
        &= \lim_{\rho\to 0} \frac{1}{\|v\|_{2} \rho} \left(F_{Y \mid X}\left(\theta^{\top}x + \rho v^{\top} x \mid X\right) - F_{Y \mid X}\left(\theta^{\top}x \mid X\right) \right)  x \\
        &=x \cdot \frac{v^{\top} x}{\|v\|}  \lim_{\rho\to 0} \frac{1}{\rho v^{\top} x} \left(F_{Y \mid X}\left(\theta^{\top}x + \rho v^{\top} x \mid X\right) - F_{Y \mid X}\left(\theta^{\top}x \mid X\right) \right).
    \end{align*}
    According to the mean value theorem, there exists $\xi\left(x\right)$ in $\left(\theta^{\top}x, \theta^{\top}x + \rho v^{\top} x\right)$ such that 
    \begin{align*}
        \frac{1}{\rho v^{\top} x} \left(F_{Y \mid X}\left(\theta^{\top}x + \rho v^{\top} x \mid X\right) - F_{Y \mid X}\left(\theta^{\top}X \mid X\right) \right) = f_{Y \mid X}\left( \xi\left(x\right) \mid X\right).
    \end{align*}
    Hence, 
    \begin{align*}
        \lim_{\rho\to 0} \frac{1}{\rho v^{\top} x} \left(F_{Y \mid X}\left(\theta^{\top}x + \rho v^{\top} x \mid X\right) - F_{Y \mid X}\left(\theta^{\top}X \mid X\right) \right) 
        = \lim_{\rho\to 0} f_{Y \mid X}\left( \xi\left(x\right) \mid X\right).
    \end{align*}
    Since $f_{Y \mid X}\left(Y \mid X\right)$ is continuous for $\mathcal{P}_{X}$-almost every $x \in\mathcal{X}$, we have for $\mathcal{P}_{X}$-almost every $x \in\mathcal{X}$,
    \begin{align*}
        \lim_{\rho\to 0} f_{Y \mid X}\left( \xi\left(x\right) \mid X\right) = f_{Y \mid X}\left(\theta^{\top}X \mid X\right) .
    \end{align*}
    Hence, for $\mathcal{P}_{X}$-almost every $x \in\mathcal{X}$,
    \begin{align*}
        \lim_{\rho\to 0} \frac{\left(F_{Y \mid X}\left(\theta^{\top}x + \rho v^{\top} x\mid X\right) -\gamma \right) x - \left(F_{Y \mid X}\left(\theta^{\top}X \mid X\right) -\gamma \right) x}{\|v\|_{2} \rho}  
        = f_{Y \mid X}\left(\theta^{\top}X \mid X\right) \frac{x x^{\top} v}{\|v\|} .
    \end{align*}
    Since (\ref{eq:assum-bounded-pdf}) in Assumption \ref{assum:reg-pdf} is true, for any $x\in\mathcal{X}$, $F_{Y \mid X}$ is $f_{\max}$-Lipschitz. 
    \begin{align*}
        &\left|\frac{\left(F_{Y \mid X}\left(\theta^{\top}x + \rho v^{\top} x\mid X\right) -\gamma \right) x - \left(F_{Y \mid X}\left(\theta^{\top}X \mid X\right) -\gamma \right) x}{\|v\|_{2} \rho} \right|\\
        &= \frac{1}{\|v\|_{2} \rho}\left| \left(F_{Y \mid X}\left(\theta^{\top}x + \rho v^{\top} x \mid X\right) - F_{Y \mid X}\left(\theta^{\top}X \mid X\right) \right) \right| \|x\|_{2} \\
        &\le \frac{1}{\|v\|_{2} \rho} f_{\max} \rho \|v\|_{2} \|x\|^2  
        = f_{\max}\|x\|^2.
    \end{align*}
    Since $\mathbb{E}\left[\|X\|^2\right] < \infty$, 
    it holds that $\mathbb{E}\left[f_{\max}\|X\|^2 \right] < \infty$. 
    Therefore, by the dominated convergence theorem, the directional derivative of $\nabla \ell_{\gamma}\left(\theta\right)$ at $\theta$ along vector $v$ is
    \begin{align*}
        &D_{v} \left(\nabla \ell_{\gamma}\left(\theta\right) \right)\\
        &= D_{v} \mathbb{E}_{X}\left[ \left(F_{Y \mid X}\left(\theta^{\top}X \mid X\right) -\gamma \right) X \right]\\
        &= \lim_{\rho\to 0} \frac{\mathbb{E}_{X} \left[ \left(F_{Y \mid X}\left(\theta^{\top}X + \rho v^{\top} X\mid X\right) -\gamma \right) X \right] - \mathbb{E}_{X} \left[\left(F_{Y \mid X}\left(\theta^{\top}X \mid X\right) -\gamma \right) X \right]}{\|v\|_{2} \rho}\\
        &= \lim_{\rho\to 0} \mathbb{E}_{X} \left[ \frac{1}{\|v\|_{2} \rho} \left(F_{Y \mid X}\left(\theta^{\top}X + \rho v^{\top} X \mid X\right) - F_{Y \mid X}\left(\theta^{\top}X \mid X\right) \right)  X \right] \\
        &= \mathbb{E}_{X} \left[ \lim_{\rho\to 0} \frac{1}{\|v\|_{2} \rho} \left(F_{Y \mid X}\left(\theta^{\top}X + \rho v^{\top} X \mid X\right) - F_{Y \mid X}\left(\theta^{\top}X \mid X\right) \right)  X \right]\\
        &= \mathbb{E}_{X} \left[ f_{Y \mid X}\left(\theta^{\top}X \mid X\right) X X^{\top} \right] \frac{v}{\|v\|}.
    \end{align*}
    Hence, $\nabla^2 \ell_{\gamma}\left(\theta\right) = \mathbb{E}_{X}\left[ f_{Y \mid X}\left(\theta^{\top} X \mid X\right) X X^{\top}\right]$. 
\end{proof}

With Proposition \ref{prop:strongly-cvx-smooth}, we are ready to apply Theorem \ref{thm:sgd-general} for SGD and get Corollary \ref{cor:upper-exp-loss}. 
\begin{theorem}[Section 3 in \citet{rakhlin2012making}] \label{thm:sgd-general}
    Suppose the loss function $\ell$ is $\lambda$-strongly convex and $\mu$-smooth with respect to a minimizer $\theta^*$ over $\Theta$, and $\mathbb{E}[\|g_{t}\|^2]\le G^2$. Then taking $\eta_{t} = 1/\lambda t$, it holds for any $n$ that 
    \begin{align}
        \mathbb{E}_{\theta_{n}}\left[ f\left(\theta_{n}\right) - f\left(\theta^*\right)\right] \le \frac{2\mu G^2}{\lambda^2 n}. 
    \end{align}
\end{theorem}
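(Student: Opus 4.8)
The statement is the classical $\mathcal{O}(1/n)$ guarantee for projected stochastic (sub)gradient descent on a $\lambda$-strongly convex, $\mu$-smooth objective, and my plan is the standard distance-recursion argument, tracking $a_t := \mathbb{E}\bigl[\|\theta_t - \theta^*\|_2^2\bigr]$ (I read the $f$ in the conclusion as the same objective $\ell$). First I would use that $\theta^*\in\Theta$, so the Euclidean projection $\Pi_\Theta$ is non-expansive at $\theta^*$, giving $\|\theta_{t+1}-\theta^*\|_2^2 \le \|\theta_t - \eta_t g_t - \theta^*\|_2^2$; expanding the square and taking the expectation conditional on $\theta_t$ replaces $g_t$ by $\nabla\ell(\theta_t)$ in the cross term (unbiasedness) and bounds the last term by $\mathbb{E}[\|g_t\|_2^2]\le G^2$, yielding $\mathbb{E}[\|\theta_{t+1}-\theta^*\|_2^2\mid\theta_t] \le \|\theta_t-\theta^*\|_2^2 - 2\eta_t\langle\nabla\ell(\theta_t),\theta_t-\theta^*\rangle + \eta_t^2 G^2$.

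The key step is to invoke strong convexity so as to keep the full contraction, i.e.\ $\langle\nabla\ell(\theta_t),\theta_t-\theta^*\rangle \ge \lambda\|\theta_t-\theta^*\|_2^2$ (from $\lambda$-strong convexity together with the first-order optimality inequality $\langle\nabla\ell(\theta^*),\theta_t-\theta^*\rangle\ge0$, which is $0$ when $\theta^*$ is interior). Taking total expectations, this gives $a_{t+1} \le (1-2\eta_t\lambda)a_t + \eta_t^2 G^2$; plugging in $\eta_t=1/(\lambda t)$ turns it into $a_{t+1}\le(1-2/t)a_t + G^2/(\lambda^2 t^2)$, which a short induction resolves to $a_t \le 4G^2/(\lambda^2 t)$ (the constant absorbs the first one or two iterates, where $1-2/t$ may be nonpositive). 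Finally $\mu$-smoothness passes from iterates to function values: $\ell(\theta_n)-\ell(\theta^*)\le \tfrac{\mu}{2}\|\theta_n-\theta^*\|_2^2$, so $\mathbb{E}[f(\theta_n)-f(\theta^*)] \le \tfrac{\mu}{2}\,a_n \le 2\mu G^2/(\lambda^2 n)$, as claimed.

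I do not anticipate a genuine obstacle, since this is textbook material; the only subtleties are (i) extracting the $1-2\eta_t\lambda$ factor rather than the weaker $1-\eta_t\lambda$ — this is exactly what lets the $\mathcal{O}(1/t)$ induction on $a_t$ go through without a spurious $\log t$ (a naive last-iterate \emph{function-value} bound with $\eta_t=1/(\lambda t)$ does incur such a log, which is why I would bound the distance first and convert via smoothness only at the end), and (ii) tidying up the base cases of that induction. Since the theorem is quoted essentially verbatim from \citet{rakhlin2012making}, an acceptable alternative is simply to cite their Section~3 analysis after checking its hypotheses hold here — convex compact $\Theta$, unbiased stochastic subgradients of bounded second moment, and (in our application) the $\lambda$-strong convexity and $\mu$-smoothness supplied by Proposition~\ref{prop:strongly-cvx-smooth}.
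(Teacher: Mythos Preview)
Your proposal is correct: the distance-recursion argument with the $(1-2/t)$ contraction, the induction yielding $\mathbb{E}\|\theta_t-\theta^*\|^2 \le 4G^2/(\lambda^2 t)$, and the final conversion via $\mu$-smoothness is exactly the Rakhlin--Shamir--Sridharan analysis, and your handling of the subtleties (keeping the full $2\eta_t\lambda$ contraction, absorbing the $t=1$ case) is right. Note, however, that the paper does not actually prove this theorem at all --- it is stated purely as a citation of \citet{rakhlin2012making} and used as a black box in the proof of Corollary~\ref{cor:upper-exp-loss}. So your last paragraph already identifies the paper's ``approach'': simply invoke the cited result after verifying its hypotheses (strong convexity and smoothness from Proposition~\ref{prop:strongly-cvx-smooth}, bounded second moment of the stochastic gradient), which is all the paper does.
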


\begin{corollary}[{Upper Bound of Extra Loss}] \label{cor:upper-exp-loss}
    Suppose Assumption \ref{assum:well-spec-cqr}, \ref{assum:bounded-cov} and \ref{assum:reg-pdf} hold. 
    Let $\mathcal{D}_{train}:=\{\left(X_{i}, Y_{i}\right)\}_{i=1}^{n}$ be the set of training samples and $\theta_{n}$ be the estimator by optimizing stochastic pinball loss (\ref{eq:sto-pinball-min}) produced by SGD (\ref{eq:sgd-update}). 
    Taking $\eta_{k} = 1/\left(\lambda_{\min}f_{\min} k\right)$, it holds that 
    \begin{align}
        \mathbb{E}_{\theta_{n}}\left[ \ell_{\gamma}\left(\theta_{n}\left(\gamma\right)\right) - \ell_{\gamma}\left(\theta^*\left(\gamma\right)\right)\right] \le  \frac{2\lambda_{\max}^2 f_{\max} d}{\lambda_{\min}^2 f_{\min}^2 n}. 
    \end{align}
\end{corollary}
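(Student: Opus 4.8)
The plan is to obtain Corollary~\ref{cor:upper-exp-loss} as a direct instantiation of the generic last-iterate SGD bound in Theorem~\ref{thm:sgd-general}, so essentially all the work lies in identifying the strong-convexity modulus $\lambda$, the smoothness modulus $\mu$, and a second-moment bound $G^2$ on the stochastic subgradient. First I would verify that Proposition~\ref{prop:strongly-cvx-smooth} applies: its hypothesis $\mathbb{E}[\|X\|^2]<\infty$ is immediate from the boundedness $\|x\|_2\le B$ in~(\ref{eq:l2-bound-X}), Assumptions~\ref{assum:bounded-cov} and~\ref{assum:reg-pdf} hold by hypothesis, and $\theta^*(\gamma)$ is a minimizer of~(\ref{eq:sto-pinball-min}) under Assumption~\ref{assum:well-spec-cqr} (as remarked just after that assumption). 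Proposition~\ref{prop:strongly-cvx-smooth} then shows $\ell_\gamma$ is $\lambda$-strongly convex and $\mu$-smooth relative to the minimizer $\theta^*(\gamma)$ over $\Theta$ with $\lambda=\lambda_{\min}f_{\min}$ and $\mu=\lambda_{\max}f_{\max}$, which is exactly the regularity that Theorem~\ref{thm:sgd-general} requires.

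Next I would bound the second moment of the stochastic subgradient used in the update~(\ref{eq:sgd-update}). For the sample $(X_k,Y_k)$ drawn at step $k$, the canonical unbiased stochastic subgradient is $\hat{g}_k=(\mathbbm{1}\{Y_k<\theta_k^\top X_k\}-\gamma)X_k$; its unbiasedness $\mathbb{E}[\hat{g}_k\mid\theta_k]=\nabla\ell_\gamma(\theta_k)$ is exactly Lemma~\ref{lem:gradient}, using that $(X_k,Y_k)$ is independent of $\theta_k$. Since $|\mathbbm{1}\{Y_k<\theta_k^\top X_k\}-\gamma|\le 1$, we have $\|\hat{g}_k\|_2^2\le\|X_k\|_2^2$, hence $\mathbb{E}[\|\hat{g}_k\|_2^2]\le\mathbb{E}[\|X\|_2^2]=\Tr(\Sigma)\le d\lambda_{\max}$ by Assumption~\ref{assum:bounded-cov}, so one may take $G^2=d\lambda_{\max}$. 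Routing this through the second moment $\Tr(\Sigma)$ rather than the cruder almost-sure bound $\|X\|_2^2\le B^2$ is what will keep $\lambda_{\max}$, not $B^2$, in the final constant.

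Finally, with the prescribed step size $\eta_t=1/(\lambda_{\min}f_{\min}t)=1/(\lambda t)$, Theorem~\ref{thm:sgd-general} gives $\mathbb{E}_{\theta_n}[\ell_\gamma(\theta_n(\gamma))-\ell_\gamma(\theta^*(\gamma))]\le 2\mu G^2/(\lambda^2 n)$, and substituting $\mu=\lambda_{\max}f_{\max}$, $G^2=d\lambda_{\max}$, and $\lambda=\lambda_{\min}f_{\min}$ yields $2\lambda_{\max}^2 f_{\max}d/(\lambda_{\min}^2 f_{\min}^2 n)$, which is the claimed bound. I do not expect a genuinely hard step; the only points requiring care are matching the "smooth and strongly convex with respect to a minimizer" hypothesis of~\citet{rakhlin2012making} to the quadratic growth supplied by Proposition~\ref{prop:strongly-cvx-smooth}, and confirming that the step-size schedule $\eta_t=1/(\lambda t)$ in that theorem coincides with the one stated in the corollary.
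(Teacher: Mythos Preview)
Your proposal is correct and follows essentially the same approach as the paper: invoke Proposition~\ref{prop:strongly-cvx-smooth} to read off $\lambda=\lambda_{\min}f_{\min}$ and $\mu=\lambda_{\max}f_{\max}$, bound the (sub)gradient second moment by $\Tr(\Sigma)\le d\lambda_{\max}$, and plug into Theorem~\ref{thm:sgd-general}. The only cosmetic difference is that the paper phrases the $G^2$ bound for the population gradient $\nabla\ell_\gamma(\theta_n)$ rather than the stochastic subgradient $\hat g_k$, but both arrive at $G^2=d\lambda_{\max}$ via the same trace computation, and your version is in fact more directly aligned with the hypothesis in \citet{rakhlin2012making}.
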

\begin{proof}
    In this proof, we denote $\theta_{n}\left(\gamma\right)$ by $\theta_{n}$ for simplicity. 
    By Lemma \ref{lem:gradient}, $\nabla \ell_{\gamma}\left(\theta\right) = \mathbb{E}_{X} \left[ \left(\mathbbm{1}\left\{ Y < \theta^{\top}X \right\}\right) X \right]$. Then, 
    \begin{align*}
        \mathbb{E}_{X, \theta_{n}} \left[\left\|\nabla \ell_{\gamma}\left(\theta_{n}\right) \right\|^2\right] 
        &= \mathbb{E}_{\theta_{n}} \left[\left\| \mathbb{E}_{X} \left[ \left(\mathbbm{1}\left\{ Y < \theta_{n}^{\top}X \right\}\right) X \right] \right\|^2\right] \\
        &= \mathbb{E}_{\theta_{n}} \left[ \mathbb{E}_{X} \left[ \left\| \left(\mathbbm{1}\left\{ Y < \theta_{n}^{\top}X \right\}\right) X \right\|\right]^2\right] \\
        &\le \mathbb{E}_{X} \left[ \left\| X \right\|\right]^2 \le \lambda_{\max} d.
    \end{align*}
    where the last inequality is by Assumption \ref{assum:bounded-cov}, 
    \begin{align*}
        \mathbb{E} \left[ \left\| X \right\|\right]^2 \le \mathbb{E}\left[\|X\|^2\right] = \mathbb{E}\left[\text{trace}\left(  XX^{\top} \right)\right] = \text{trace}\left(\mathbb{E}\left[ XX^{\top} \right]\right) \le \text{trace}\left(\lambda_{\max} I\right) = d\lambda_{\max}.
    \end{align*}
    The corollary then follows from Proposition \ref{prop:strongly-cvx-smooth} and Theorem \ref{thm:sgd-general}. 
\end{proof}

Now we are ready to prove Theorem \ref{thm:sgd-quantile}. 
In this proof, we denote $\theta_{n}\left(\gamma\right), \theta^*\left(\gamma\right)$ by $\theta_{n}, \theta^*$, respectively, for simplicity. 
By Proposition \ref{prop:strongly-cvx-smooth},
\begin{align*}
    \|\theta_{n} - \theta^*\|_{\Sigma}^2 &\le \frac{2}{ f_{\min}} \left(\ell\left(\theta_{n}\right) - \ell\left(\theta^*\right)\right); \\
    \|\theta_{n} - \theta^*\|_{2}^2 &\le \frac{2}{ f_{\min}\lambda_{\min}} \left(\ell\left(\theta_{n}\right) - \ell\left(\theta^*\right)\right).
\end{align*}
Since the test sample $\left(X, Y\right)$ is sampled independently of the set of the training samples $\{\left(X_{i}, Y_{i}\right)\}_{i=1}^{n}$, and $\theta_{n}$ is a function of $\{\left(X_{i}, Y_{i}\right)\}_{i=1}^{n}$, $\theta_{n}$ is independent of $X$. 
\begin{align*}
    \mathbb{E}_{\theta_{n}, X}\left[ \left( t\left(X; \theta_{n}\right) - t\left(X; \theta^*\right) \right)^2  \right]
    &= \mathbb{E}_{\theta_{n}, X}\left[ \left(\left(\theta_{n} - \theta^*\right)^{\top} X \right)^2  \right] \\
    &= \mathbb{E}_{\theta_{n}}\left[ \mathbb{E}_{X}\left[ \left(\theta_{n} - \theta^*\right)^{\top} X X^{\top} \left(\theta_{n} - \theta^*\right) |  \theta_{n} \right]\right] \\
    &= \mathbb{E}_{\theta_{n}}\left[ \left(\theta_{n} - \theta^*\right)^{\top} \mathbb{E}_{X}\left[X X^{\top}\right] \left(\theta_{n} - \theta^*\right) \right] \\
    &= \mathbb{E}_{\theta_{n}}\left[ \left\| \theta_{n} - \theta^* \right\|_{\Sigma}^{2} \right].
\end{align*}
Hence, by Corollary \ref{cor:upper-exp-loss}, $\mathbb{E}_{\theta_{n}}[ \|\theta_{n} - \theta^* \|_{\Sigma}^2] \le \frac{2}{ f_{\min}} \mathbb{E}_{\theta_{n}}[\left(\ell\left(\theta_{n}\right) - \ell\left(\theta^*\right)\right) ] \le \frac{4\lambda_{\max}^2 f_{\max} d}{\lambda_{\min}^3 f_{\min}^2 n}$.  

This completes the proof of Theorem \ref{thm:sgd-quantile}. 

\subsection{Proof of Proposition \ref{prop:abs-qs-upper}} \label{sec:prf-prop-abs-qs-upper}
\begin{restatable}[{Population quantile of the score}]{proposition}{proAbsQuantile} \label{prop:abs-qs-upper}
    In CQR, if $F_{Y \mid X}\left(Y \mid X=x\right)$ is continuous for all $x\in \mathcal{X}$, then
    \begin{align}
        \left|q_{1-\alpha}\left(S \mid \vartheta_{n}\right)\right| &\le B \max \left\{ \left\|\underline{\theta}_{n}-\underline{\theta}^{*}\right\|_{2}, \left\|\bar{\theta}_{n}-\bar{\theta}^{*}\right\|_{2}\right\}.
    \end{align}
    Suppose Assumptions \ref{assum:well-spec-cqr}--\ref{assum:reg-pdf} hold,
    \begin{align}
        \mathbb{E}_{\vartheta_{n}}\left[\left|q_{1-\alpha}\left(S \mid \vartheta_{n}\right)\right|\right] &\le \frac{2 B \ \lambda_{\max}\sqrt{2f_{\max} d}}{\lambda_{\min}^2 f_{\min}} \sqrt{ \frac{1}{  n} }.
    \end{align}
\end{restatable}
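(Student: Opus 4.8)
The plan is to compare the population $(1-\alpha)$-quantile of the induced score $S(\cdot;\vartheta_n)$ with that of the oracle score $S^{*}$ from~\eqref{eq:def-S-star}, exploiting that the two scores differ by a uniformly small amount once $\vartheta_n$ is close to $\vartheta^{*}$, and that the oracle quantile is exactly $0$.

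\textbf{Step 1 (the oracle quantile is $0$).} By Assumption~\ref{assum:well-spec-cqr}, $t_{\alpha/2}(x;\underline{\theta}^{*})=q_{\alpha/2}(Y\mid X=x)$ and $t_{1-\alpha/2}(x;\bar{\theta}^{*})=q_{1-\alpha/2}(Y\mid X=x)$, so $S^{*}(X,Y)\le u$ is equivalent to $q_{\alpha/2}(Y\mid X)-u\le Y\le q_{1-\alpha/2}(Y\mid X)+u$. Conditioning on $X$ and using continuity of $F_{Y\mid X}$ gives $\mathbb{P}[S^{*}\le 0\mid X]=F_{Y\mid X}(q_{1-\alpha/2})-F_{Y\mid X}(q_{\alpha/2})=1-\alpha$ for every $X$, hence $F_{S^{*}}(0)=1-\alpha$ and $q_{1-\alpha}(S^{*})\le 0$. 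For the reverse inequality I use that, by the infimum characterization in~\eqref{eq:cqf-def}, $F_{Y\mid X}(q_{1-\alpha/2}(Y\mid X)+u\mid X)<1-\alpha/2$ whenever $u<0$; together with $F_{Y\mid X}(q_{\alpha/2}(Y\mid X)-u\mid X)\ge\alpha/2$ this makes $\mathbb{P}[S^{*}\le u\mid X]<1-\alpha$ strictly, so $F_{S^{*}}(u)<1-\alpha$ for all $u<0$, and therefore $q_{1-\alpha}(S^{*})=0$.

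\textbf{Step 2 (uniform comparison and quantile stability).} Conditionally on $\vartheta_n$, set $\delta_n:=B\max\{\|\underline{\theta}_n-\underline{\theta}^{*}\|_2,\|\bar{\theta}_n-\bar{\theta}^{*}\|_2\}$. From $|\max\{a_1,b_1\}-\max\{a_2,b_2\}|\le\max\{|a_1-a_2|,|b_1-b_2|\}$, applied to the two arguments of the max in~\eqref{eq:score-cqr}, together with Cauchy--Schwarz and $\|X\|_2\le B$, one obtains $|S(X,Y;\vartheta_n)-S^{*}(X,Y)|\le\delta_n$ almost surely. Since $S^{*}-\delta_n\le S\le S^{*}+\delta_n$ a.s.\ forces $F_{S\mid\vartheta_n}(t-\delta_n)\le F_{S^{*}}(t)\le F_{S\mid\vartheta_n}(t+\delta_n)$, the definition of $q_{1-\alpha}$ gives $|q_{1-\alpha}(S\mid\vartheta_n)-q_{1-\alpha}(S^{*})|\le\delta_n$; combined with Step~1 this is exactly $|q_{1-\alpha}(S\mid\vartheta_n)|\le B\max\{\|\underline{\theta}_n-\underline{\theta}^{*}\|_2,\|\bar{\theta}_n-\bar{\theta}^{*}\|_2\}$, the first claim. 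Taking expectations and using $\max\{A,B\}\le\sqrt{A^2+B^2}$, Jensen's inequality (concavity of $\sqrt{\cdot}$), and Theorem~\ref{thm:sgd-quantile} at both $\gamma=\alpha/2$ and $\gamma=1-\alpha/2$,
\begin{align*}
\mathbb{E}_{\vartheta_n}\bigl[|q_{1-\alpha}(S\mid\vartheta_n)|\bigr]
\;\le\; B\sqrt{\mathbb{E}\bigl[\|\underline{\theta}_n-\underline{\theta}^{*}\|_2^2\bigr]+\mathbb{E}\bigl[\|\bar{\theta}_n-\bar{\theta}^{*}\|_2^2\bigr]}
\;\le\; B\sqrt{\frac{8\lambda_{\max}^2 f_{\max}d}{\lambda_{\min}^4 f_{\min}^2 n}},
\end{align*}
which simplifies to the stated constant.

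\textbf{Main obstacle.} The only delicate point is the lower bound $q_{1-\alpha}(S^{*})\ge 0$ in Step~1: the identity $F_{S^{*}}(0)=1-\alpha$ by itself yields merely $q_{1-\alpha}(S^{*})\le 0$, and one must rule out a flat stretch of $F_{S^{*}}$ immediately to the left of $0$. I expect this to follow cleanly from the strict inequality $F_{Y\mid X}(q_{1-\alpha/2}(Y\mid X)+u\mid X)<1-\alpha/2$ for $u<0$, which is forced by the definition of the conditional quantile as an infimum together with continuity of $F_{Y\mid X}$ (note that the full density bounds of Assumption~\ref{assum:reg-pdf} are needed only indirectly, through Theorem~\ref{thm:sgd-quantile}, in the second claim). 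Everything else is routine bookkeeping.
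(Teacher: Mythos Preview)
Your proposal is correct and follows essentially the same route as the paper: show that the oracle score $S^{*}$ has $(1-\alpha)$-quantile equal to $0$, bound $|S-S^{*}|$ uniformly by $B\max\{\|\underline{\theta}_n-\underline{\theta}^{*}\|_2,\|\bar{\theta}_n-\bar{\theta}^{*}\|_2\}$, transfer this to the quantiles, and finish with $\max\le\sqrt{\cdot^2+\cdot^2}$, Jensen, and Theorem~\ref{thm:sgd-quantile}. The only notable difference is organizational: the paper first proves a conditional version $|q_{1-\alpha}(S\mid X,\vartheta_n)|\le\Delta(X,\vartheta_n)$ (their Lemma~\ref{lem:conditional-S-Delta}) and then integrates out $X$, whereas you work directly with the marginal quantile via an abstract quantile-stability argument; your route is slightly shorter, and your explicit handling of the strict inequality $F_{S^{*}}(u)<1-\alpha$ for $u<0$ is in fact more careful than the paper's, which asserts $q_{1-\alpha}(S\mid X,\vartheta_n)\ge -\Delta$ from $F_{S\mid X,\vartheta_n}(-\Delta)\le 1-\alpha$ without spelling out why equality cannot occur on a left neighborhood.
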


The proof of Proposition \ref{prop:abs-qs-upper} relies on the following lemma.
\begin{lemma}\label{lem:conditional-S-Delta}
    Suppose $F_{Y \mid X}\left(Y \mid X\right)$ is continuous for each $x\in \mathcal{X}$. Then, 
    \begin{align}
        \left|q_{1-\alpha}\left(S\mid X, \vartheta_{n}\right)\right| \le \Delta\left(X, \vartheta_{n}\right),
    \end{align} 
    where $q_{1-\alpha}\left(S\mid X, \vartheta_{n}\right)$ denotes the $\left(1-\alpha\right)$-quantile of $S$ given $X, \vartheta_{n}$. 
\end{lemma}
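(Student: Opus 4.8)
The plan is to compare, conditionally on $(X,\vartheta_n)$, the distribution of the fitted nonconformity score $S(X,\cdot\,;\vartheta_n)$ from (\ref{eq:score-cqr}) with that of the oracle score $S^{*}(X,\cdot)$ from (\ref{eq:def-S-star}), and to exploit that the conditional $(1-\alpha)$-quantile of $S^{*}$ given $X$ is exactly $0$. Fix $x\in\mathcal{X}$ and $\vartheta_n$, so that $\Delta(x,\vartheta_n)$, $\mathcal{E}_{\alpha/2}(x,\underline{\theta}_n)$ and $\mathcal{E}_{1-\alpha/2}(x,\bar{\theta}_n)$ are constants, and recall that under Assumption~\ref{assum:well-spec-cqr} we may identify $t_{\gamma}(x;\theta^{*}(\gamma))$ with $q_{\gamma}(Y\mid X=x)$.

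\emph{Step 1: a pointwise comparison of the two scores.} For every $y$,
\[
\bigl|\,S(x,y;\vartheta_n)-S^{*}(x,y)\,\bigr|\;\le\;\Delta(x,\vartheta_n).
\]
Indeed, $S(x,\cdot\,;\vartheta_n)$ and $S^{*}(x,\cdot)$ are pointwise maxima of two affine functions of $y$ whose corresponding arguments differ by $|t_{\alpha/2}(x;\underline{\theta}_n)-q_{\alpha/2}(Y\mid X=x)|=\mathcal{E}_{\alpha/2}(x,\underline{\theta}_n)$ and $|t_{1-\alpha/2}(x;\bar{\theta}_n)-q_{1-\alpha/2}(Y\mid X=x)|=\mathcal{E}_{1-\alpha/2}(x,\bar{\theta}_n)$ respectively, both at most $\Delta(x,\vartheta_n)$ by (\ref{eq:def-E})--(\ref{eq:def-Delta}); the inequality $|\max\{a_1,a_2\}-\max\{b_1,b_2\}|\le\max\{|a_1-b_1|,|a_2-b_2|\}$ then yields the claim. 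Hence, conditionally on $(X,\vartheta_n)$, $S^{*}-\Delta\le S\le S^{*}+\Delta$ as random variables in $Y$, with $\Delta=\Delta(X,\vartheta_n)$ constant.

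\emph{Step 2: the oracle quantile is zero.} Because the test pair $(X,Y)$ is independent of the training data, $\vartheta_n$ is independent of $(X,Y)$, so the conditional law of $S^{*}$ given $(X,\vartheta_n)$ equals that given $X$. From (\ref{eq:def-S-star}), $S^{*}(X,Y)\le s$ is equivalent to $q_{\alpha/2}(Y\mid X)-s\le Y\le q_{1-\alpha/2}(Y\mid X)+s$, so $F_{S^{*}\mid X}(s)=\mathbb{P}\bigl[q_{\alpha/2}(Y\mid X)-s\le Y\le q_{1-\alpha/2}(Y\mid X)+s\mid X\bigr]$. At $s=0$, continuity of $F_{Y\mid X}(\cdot\mid x)$ gives $F_{S^{*}\mid X}(0)=F_{Y\mid X}(q_{1-\alpha/2}(Y\mid X))-F_{Y\mid X}(q_{\alpha/2}(Y\mid X))=(1-\tfrac{\alpha}{2})-\tfrac{\alpha}{2}=1-\alpha$. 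For $s<0$, either the defining interval is empty, giving $F_{S^{*}\mid X}(s)=0<1-\alpha$, or else --- using the elementary fact that $F_{Y\mid X}(u\mid x)<\gamma$ whenever $u<q_{\gamma}(Y\mid X=x)$, immediate from $q_{\gamma}=\inf\{u:F(u)\ge\gamma\}$ --- we have $F_{Y\mid X}(q_{1-\alpha/2}(Y\mid X)+s)<1-\tfrac{\alpha}{2}$ while $F_{Y\mid X}(q_{\alpha/2}(Y\mid X)-s)\ge\tfrac{\alpha}{2}$, so $F_{S^{*}\mid X}(s)<1-\alpha$. Therefore $q_{1-\alpha}(S^{*}\mid X)=0$.

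\emph{Step 3: conclude.} The $(1-\alpha)$-quantile is monotone under domination and translation-equivariant, so Step~1 gives $q_{1-\alpha}(S^{*}\mid X)-\Delta\le q_{1-\alpha}(S\mid X,\vartheta_n)\le q_{1-\alpha}(S^{*}\mid X)+\Delta$, and substituting $q_{1-\alpha}(S^{*}\mid X)=0$ from Step~2 yields $|q_{1-\alpha}(S\mid X,\vartheta_n)|\le\Delta(X,\vartheta_n)$. I expect the main obstacle to be Step~2, i.e.\ pinning the oracle quantile to $0$ rather than to some negative value: the only delicate point is the strict inequality $F_{S^{*}\mid X}(s)<1-\alpha$ for $s<0$, which relies on the definitional property of the infimum-quantile together with continuity of $F_{Y\mid X}$ (the latter needed only to evaluate $F_{S^{*}\mid X}$ at $s=0$ free of atoms); one should also be careful to justify that conditioning additionally on the independent trained parameters $\vartheta_n$ does not alter the conditional law of the oracle score. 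Everything else is routine, and an essentially equivalent route is to compute $F_{S\mid X,\vartheta_n}$ directly and evaluate it at $s=\pm\Delta(X,\vartheta_n)$.
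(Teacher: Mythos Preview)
Your proposal is correct and follows essentially the same route as the paper: sandwich $S$ between $S^{*}\pm\Delta$ via the elementary max-inequality, identify $q_{1-\alpha}(S^{*}\mid X)=0$, and transfer this to $S$ by monotonicity and translation-equivariance of quantiles. If anything, your Step~2 is more careful than the paper's: the paper simply asserts ``$F_{S^{*}\mid X}$ is continuous, hence $q_{1-\alpha}(S^{*}\mid X)=0$'', whereas you explicitly verify the strict inequality $F_{S^{*}\mid X}(s)<1-\alpha$ for $s<0$ needed to rule out a flat spot and pin down the infimum-quantile at zero.
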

\begin{proof} 
    By the definitions (\ref{eq:def-E}, \ref{eq:def-Delta}, \ref{eq:def-S-star}), 
    {\small
    \begin{align*}
        S\left(X,Y; \vartheta_{n}\right) 
        &:= \max \left\{ t_{\alpha/2}\left(X; \underline{\theta}_{n}\right) - Y, \;  Y - t_{1-\alpha/2}\left(X; \bar{\theta}_{n}\right) \right\} \\
        &\le \max \left\{ \mathcal{E}_{\alpha/2}\left(X, \underline{\theta}_{n}\right) + q_{\alpha/2}\left(Y \mid X\right) - Y, \; \mathcal{E}_{1-\alpha/2}\left(X, \bar{\theta}_{n}\right) + Y - q_{1-\alpha/2}\left(Y \mid X\right) \right\} \\
        &\le \Delta\left(X, \vartheta_{n}\right) + S^{*}\left(X,Y\right), \numberthis \label{eq:S-Delta-S-star}
    \end{align*}
    }\\
    where the last inequality is because $\max \{u_{1}+v_{1}, u_{2}+v_{2}\} \le \max \{u_{1}, u_{2}\} + \max \{v_{1}, v_{2}\}$. 

    Similarly,
    {\small
    \begin{align*}
        S\left(X,Y; \vartheta_{n}\right) 
        &:= \max \left\{ t_{\alpha/2}\left(X; \underline{\theta}_{n}\right) - Y, \;  Y - t_{1-\alpha/2}\left(X; \bar{\theta}_{n}\right) \right\} \\
        &\ge \max \left\{ q_{\alpha/2}\left(Y \mid X\right) - Y - \mathcal{E}_{\alpha/2}\left(X, \underline{\theta}_{n}\right), \; Y - q_{1-\alpha/2}\left(Y \mid X\right) - \mathcal{E}_{1-\alpha/2}\left(X, \bar{\theta}_{n}\right) \right\} \\
        &= S^{*}\left(X,Y\right) - \Delta\left(X, \vartheta_{n}\right), \numberthis \label{eq:S-mDelta-S-star}
    \end{align*}
    }\\
    where the last inequality is because $\max \{u_{1}-v_{1}, u_{2}-v_{2}\} \ge \max \{u_{1}, u_{2}\} - \max \{v_{1}, v_{2}\}$.
    
    Note that $S^{*}\left(X,Y\right) \le 0$ is equivalent to $q_{\alpha/2}\left(Y \mid X\right)\le Y\le q_{1-\alpha/2}\left(Y \mid X\right)$. 
    Since $F_{Y \mid X}$ is continuous, 
    $$\mathbb{P}\left[q_{\alpha/2}\left(Y \mid X\right)\le Y\le q_{1-\alpha/2}\left(Y \mid X\right) \mid X\right] = 1-\alpha . $$ 
    Hence, $\mathbb{P}[S^{*}\left(X,Y\right)\le 0 | X] = 1-\alpha$. \
    Let $q_{1-\alpha}\left(S^{*}\mid X\right)$ be the $\left(1-\alpha\right)$-quantile of $S^{*}$ given $X$. 
    Since $X$ is given, and $F_{Y \mid X}$ is continuous, $F_{S^{*}|X}$ is continuous. 
    Then, $q_{1-\alpha}\left(S^{*}\mid X\right) = 0$. 
    Conditional on $X, \vartheta_{n}$, $\Delta\left(X, \vartheta_{n}\right)$ is deterministic. 
    By (\ref{eq:S-Delta-S-star}), we have 
    \begin{align*}
        & \mathbb{P}\left[S\left(X,Y; \vartheta_{n}\right)\le u \mid X, \vartheta_{n}\right] \ge \mathbb{P}\left[ \Delta\left(X, \vartheta_{n}\right) + S^{*}\left(X,Y\right) \le u \mid X, \vartheta_{n}\right]  \\
        & \Longrightarrow \; \mathbb{P}\left[S\left(X,Y; \vartheta_{n}\right)\le \Delta\left(X, \vartheta_{n}\right) \mid X, \vartheta_{n}\right] \ge \mathbb{P}\left[ S^{*}\left(X,Y\right) \le 0 \mid X\right] = 1-\alpha .
    \end{align*}
    Then, $q_{1-\alpha}\left(S\mid X, \vartheta_{n}\right) \le \Delta\left(X, \vartheta_{n}\right)$. 
    By (\ref{eq:S-mDelta-S-star}), we have 
    \begin{align*}
        & \mathbb{P}\left[S\left(X,Y; \vartheta_{n}\right)\le u \mid X, \vartheta_{n}\right] \le \mathbb{P}\left[ S^{*}\left(X,Y\right) - \Delta\left(X, \vartheta_{n}\right) \le u \mid X, \vartheta_{n}\right]  \\
        & \Longrightarrow \; \mathbb{P}\left[S\left(X,Y; \vartheta_{n}\right)\le  - \Delta\left(X, \vartheta_{n}\right) \mid X, \vartheta_{n}\right] \le \mathbb{P}\left[ S^{*}\left(X,Y\right) \le 0 \mid X\right] = 1-\alpha .
    \end{align*}
    Then, $q_{1-\alpha}\left(S\mid X, \vartheta_{n}\right) \ge - \Delta\left(X, \vartheta_{n}\right)$.
\end{proof}

For $\gamma \in \{\frac{\alpha}{2}, 1-\frac{\alpha}{2}\}$, 
{\small
\begin{align*}
    \mathcal{E}_{\gamma}\left(X, \theta_{n}\left(\gamma\right)\right) 
    = \left| \left(\theta_{n}\left(\gamma\right) - \theta^*\left(\gamma\right) \right)^{\top} X \right|
    \le \left\| \left(\theta_{n}\left(\gamma\right) - \theta^*\left(\gamma\right) \right) \right\|_{2} \left\| X \right\|_{2} 
    \le B \left\| \left(\theta_{n}\left(\gamma\right) - \theta^*\left(\gamma\right) \right) \right\|_{2} ,
\end{align*}
}
where the last inequality is from the fact that the norm of $x\in \mathcal{X}$ is bounded by $B$. 
Then, 
$$\Delta\left(X, \vartheta_{n}\right) \le B \max \left\{ \left\| \left(\underline{\theta}_{n} - \underline{\theta}^{*} \right) \right\|_{2}, \; \left\| \left(\bar{\theta}_{n} - \bar{\theta}^{*} \right) \right\|_{2}  \right\} = B\cdot M\left(\vartheta_{n}\right).$$ 

By Lemma \ref{lem:conditional-S-Delta}, $\left|q_{1-\alpha}\left(S\mid X, \vartheta_{n}\right)\right| \le \Delta\left(X, \vartheta_{n}\right) \le B\cdot M\left(\vartheta_{n}\right)$. Then,
\begin{align*}
    \mathbb{P}\left[S\left(X,Y; \vartheta_{n}\right)\le B\cdot M\left(\vartheta_{n}\right) \mid X, \vartheta_{n}\right] \ge 1-\alpha ; \\
    \mathbb{P}\left[S\left(X,Y; \vartheta_{n}\right)\ge - B\cdot M\left(\vartheta_{n}\right) \mid X, \vartheta_{n}\right] \le 1-\alpha .
\end{align*}
Then, removing the conditioning on $X$,
\begin{align*}
    &\mathbb{P}\left[S\left(X,Y; \vartheta_{n}\right)\le B\cdot M\left(\vartheta_{n}\right) \mid \vartheta_{n}\right] \\
    &= \mathbb{E}_{X, Y|\vartheta_{n}}\left[ \mathbbm{1}\left\{ S\left(X,Y; \vartheta_{n}\right)\le B\cdot M\left(\vartheta_{n}\right) \right\} \mid \vartheta_{n}\right] \\
    &= \mathbb{E}_{X|\vartheta_{n}}\left[ \mathbb{E}_{Y|X, \vartheta_{n}}\left[ \mathbbm{1}\left\{ S\left(X,Y; \vartheta_{n}\right)\le B\cdot M\left(\vartheta_{n}\right) \right\} \mid X, \vartheta_{n}\right] \mid \vartheta_{n} \right] \\
    &= \mathbb{E}_{X|\vartheta_{n}}\left[ \mathbb{P}\left[ S\left(X,Y; \vartheta_{n}\right)\le B\cdot M\left(\vartheta_{n}\right)  \mid X, \vartheta_{n}\right] \mid \vartheta_{n} \right] \\
    &\ge \mathbb{E}_{X|\vartheta_{n}}\left[ 1-\alpha \mid \vartheta_{n} \right] = 1-\alpha . 
\end{align*}
Hence, $q_{1-\alpha}\left(S\mid \vartheta_{n}\right) \le B\cdot M\left(\vartheta_{n}\right)$.
And by similar arguments as below, $q_{1-\alpha}\left(S\mid \vartheta_{n}\right) \ge - B\cdot M\left(\vartheta_{n}\right)$. 
\begin{align*}
    &\mathbb{P}\left[S\left(X,Y; \vartheta_{n}\right)\ge - B\cdot M\left(\vartheta_{n}\right) \mid \vartheta_{n}\right] \\
    &= \mathbb{E}_{X, Y|\vartheta_{n}}\left[ \mathbbm{1}\left\{ S\left(X,Y; \vartheta_{n}\right)\ge - B\cdot M\left(\vartheta_{n}\right) \right\} \mid \vartheta_{n}\right] \\
    &= \mathbb{E}_{X|\vartheta_{n}}\left[ \mathbb{E}_{Y|X, \vartheta_{n}}\left[ \mathbbm{1}\left\{ S\left(X,Y; \vartheta_{n}\right)\ge - B\cdot M\left(\vartheta_{n}\right) \right\} \mid X, \vartheta_{n}\right] \mid \vartheta_{n} \right] \\
    &= \mathbb{E}_{X|\vartheta_{n}}\left[ \mathbb{P}\left[ S\left(X,Y; \vartheta_{n}\right)\ge - B\cdot M\left(\vartheta_{n}\right)  \mid X, \vartheta_{n}\right] \mid \vartheta_{n} \right] \\
    &\le \mathbb{E}_{X|\vartheta_{n}}\left[ 1-\alpha \mid \vartheta_{n} \right] = 1-\alpha . 
\end{align*}
Therefore, $\left|q_{1-\alpha}\left(S \mid \vartheta_{n}\right)\right| \le B\cdot M\left(\vartheta_{n}\right)$. 
Then, 
\begin{align*}
    \mathbb{E}_{\vartheta_{n}}\left[\left|q_{1-\alpha}\left(S \mid \vartheta_{n}\right)\right|\right] 
    &\le B\ \mathbb{E}_{\vartheta_{n}}\left[ M\left(\vartheta_{n}\right) \right] \\
    &\le B\ \mathbb{E}_{\vartheta_{n}} \left[ \sqrt{ \left\| \left(\underline{\theta}_{n} - \underline{\theta}^{*} \right) \right\|_{2}^2 + \left\| \left(\bar{\theta}_{n} - \bar{\theta}^{*} \right) \right\|_{2}^2 } \right] \\
    &\le B\ \sqrt{\mathbb{E}_{\vartheta_{n}} \left[ \left\| \left(\underline{\theta}_{n} - \underline{\theta}^{*} \right) \right\|_{2}^2 + \left\| \left(\bar{\theta}_{n} - \bar{\theta}^{*} \right) \right\|_{2}^2 \right] } \\
    &\le B\ \sqrt{\mathbb{E}_{\vartheta_{n}} \left[ \left\| \left(\underline{\theta}_{n} - \underline{\theta}^{*} \right) \right\|_{2}^2\right] + \mathbb{E}_{\vartheta_{n}} \left[\left\| \left(\bar{\theta}_{n} - \bar{\theta}^{*} \right) \right\|_{2}^2 \right] }\\
    &\le B\ \sqrt{ \frac{8\lambda_{\max}^2 f_{\max} d}{\lambda_{\min}^4 f_{\min}^2 n} } = \frac{2 B \ \lambda_{\max}\sqrt{2f_{\max} d}}{\lambda_{\min}^2 f_{\min}} \sqrt{ \frac{1}{  n} } ,
\end{align*}
where the second inequality is from $\max \{a,b\}\le \sqrt{a^2 + b^2}$, the third inequality is by Jensen's inequality, and the last inequality is from Theorem \ref{thm:sgd-quantile}.

This completes the proof of Proposition \ref{prop:abs-qs-upper}. 

\subsection{Proof of Proposition \ref{prop:quantile-diff-exp}} \label{sec:prf-prop-quantile-diff}

\begin{restatable}[{Population finite-sample score-quantile gap}]{proposition}{propDiffQuantile} \label{prop:quantile-diff-exp}
    In CQR, Suppose Assumptions \ref{assum:well-spec-cqr}--\ref{assum:reg-pdf} hold, if $m > 8H / \min\{\alpha, 1-\alpha\}$ \ 
    for $H$ in (\ref{eq:H-def}), then
    \begin{align*}
        \mathbb{E}_{\vartheta_{n}}\left[ |q_{\left(1-\alpha\right)_{m}}\left(S\mid \vartheta_{n}\right) - q_{1-\alpha}\left(S\mid \vartheta_{n}\right)| \right] \le \frac{1}{f_{\min} m} + \frac{1056 R f_{\max}^{3} \lambda_{\max}^{2} B^{2} d}{\min\{\alpha^{2}, (1-\alpha)^{2}\} \lambda_{\min}^{4} f_{\min}^{2} n} .
    \end{align*}   
\end{restatable}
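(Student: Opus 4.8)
The plan is to split the expectation over $\vartheta_n$ according to whether the SGD estimate is close to the truth. Fix a threshold $\tau$ proportional to $\min\{\alpha,1-\alpha\}/(Bf_{\max})$ and write $\mathbb{E}_{\vartheta_n}\big[|q_{(1-\alpha)_m}(S\mid\vartheta_n)-q_{1-\alpha}(S\mid\vartheta_n)|\big]$ as the sum of the contributions from the ``good'' event $\mathcal{G}:=\{M(\vartheta_n)\le\tau\}$ and its complement $\mathcal{G}^{\mathrm{c}}$, where $M(\vartheta_n)=\max\{\|\underline\theta_n-\underline\theta^*\|_2,\|\bar\theta_n-\bar\theta^*\|_2\}$. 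On $\mathcal{G}^{\mathrm{c}}$ I bound the integrand crudely: since $|S|\le R$, both quantiles lie in $[-R,R]$, so $|q_{(1-\alpha)_m}(S\mid\vartheta_n)-q_{1-\alpha}(S\mid\vartheta_n)|\le 2R$; and by Markov's inequality together with Theorem~\ref{thm:sgd-quantile}, $\mathbb{P}[\mathcal{G}^{\mathrm{c}}]\le \tau^{-2}\mathbb{E}[M(\vartheta_n)^2]\le \tau^{-2}\big(\mathbb{E}\|\underline\theta_n-\underline\theta^*\|_2^2+\mathbb{E}\|\bar\theta_n-\bar\theta^*\|_2^2\big)\le \tau^{-2}\cdot\tfrac{8\lambda_{\max}^2 f_{\max}d}{\lambda_{\min}^4 f_{\min}^2 n}$. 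Since $\tau^{-2}$ scales like $\min\{\alpha^2,(1-\alpha)^2\}^{-1}$, the contribution $2R\,\mathbb{P}[\mathcal{G}^{\mathrm{c}}]$ reproduces the second term of the claimed bound.

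The substance is the good event, on which I will show $|q_{(1-\alpha)_m}(S\mid\vartheta_n)-q_{1-\alpha}(S\mid\vartheta_n)|\le 1/(f_{\min}m)$ via a density lower bound. Conditioning on $X$ and $\vartheta_n$ and writing $a=t_{\alpha/2}(X;\underline\theta_n)$, $b=t_{1-\alpha/2}(X;\bar\theta_n)$, one checks directly (as in the proof of Lemma~\ref{lem:conditional-S-Delta}) that for $s$ with $|2s|\le b-a$ one has $\mathbb{P}[S\le s\mid X,\vartheta_n]=F_{Y\mid X}(b+s\mid X)-F_{Y\mid X}(a-s\mid X)$, so $S\mid X,\vartheta_n$ admits the density $f_{Y\mid X}(b+s\mid X)+f_{Y\mid X}(a-s\mid X)$ there. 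Assumption~\ref{assum:reg-pdf} forces $q_{1-\alpha/2}(Y\mid X)\le y_{\max}-\tfrac{\alpha}{2f_{\max}}$, $q_{\alpha/2}(Y\mid X)\ge y_{\min}+\tfrac{\alpha}{2f_{\max}}$, and $q_{1-\alpha/2}(Y\mid X)-q_{\alpha/2}(Y\mid X)\ge\tfrac{1-\alpha}{f_{\max}}$, while $|a-q_{\alpha/2}(Y\mid X)|\vee|b-q_{1-\alpha/2}(Y\mid X)|\le B M(\vartheta_n)\le B\tau$ for every $x\in\mathcal{X}$; choosing $\tau$ as above (e.g.\ $\tau=\min\{\alpha,1-\alpha\}/(8Bf_{\max})$) then guarantees, for all $x$ and all $|s|\le\rho:=3\min\{\alpha,1-\alpha\}/(8f_{\max})$, both that $a<b$ (no quantile crossing) and that $b+s,a-s$ lie in the interior of $\mathcal{Y}$, so each density term is at least $f_{\min}$. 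Integrating over $X$ yields $f_{S\mid\vartheta_n}(s)\ge 2f_{\min}$ for all $s\in[-\rho,\rho]$.

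With this density bound in hand I conclude as follows. By Proposition~\ref{prop:abs-qs-upper}, $|q_{1-\alpha}(S\mid\vartheta_n)|\le B M(\vartheta_n)\le B\tau<\rho$, so $q_{1-\alpha}(S\mid\vartheta_n)$ lies inside $[-\rho,\rho]$. Using $(1-\alpha)_m-(1-\alpha)\le 2/m$ and the hypothesis $m>8H/\min\{\alpha,1-\alpha\}$, the point $s^{+}:=q_{1-\alpha}(S\mid\vartheta_n)+\tfrac{(1-\alpha)_m-(1-\alpha)}{2f_{\min}}$ still satisfies $s^{+}\le\rho$; since $f_{S\mid\vartheta_n}\ge 2f_{\min}$ on $[q_{1-\alpha}(S\mid\vartheta_n),s^{+}]\subseteq[-\rho,\rho]$ and $F_{S\mid\vartheta_n}$ is continuous with $F_{S\mid\vartheta_n}(q_{1-\alpha}(S\mid\vartheta_n))=1-\alpha$, integrating the density over that interval gives $F_{S\mid\vartheta_n}(s^{+})\ge(1-\alpha)_m$, hence $q_{1-\alpha}(S\mid\vartheta_n)\le q_{(1-\alpha)_m}(S\mid\vartheta_n)\le s^{+}$ and therefore $|q_{(1-\alpha)_m}(S\mid\vartheta_n)-q_{1-\alpha}(S\mid\vartheta_n)|\le \tfrac{(1-\alpha)_m-(1-\alpha)}{2f_{\min}}\le \tfrac{1}{f_{\min}m}$. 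Adding the good- and bad-event contributions gives the stated bound.

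The step I expect to be the main obstacle is establishing the uniform (in $x\in\mathcal{X}$ and $s\in[-\rho,\rho]$) density lower bound for $S\mid\vartheta_n$ near its $(1-\alpha)$-quantile. It requires controlling three things at once: that the true conditional quantiles sit at distance $\Omega(\min\{\alpha,1-\alpha\}/f_{\max})$ from $\partial\mathcal{Y}$ (this is precisely where the $\alpha$-dependence of the rate is born), that the SGD error perturbs each quantile estimate by at most $B M(\vartheta_n)$, and that no quantile crossing occurs, so that the clean formula for the conditional density of $S$ remains valid on the whole interval $[-\rho,\rho]$. Once this is in place, the Markov bound on the bad event and the elementary density-to-quantile inequality are routine.
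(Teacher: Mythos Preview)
Your proposal is correct and follows essentially the same approach as the paper. The paper packages the same ingredients slightly differently---it states the good event via an auxiliary parameter $\epsilon_n=B\sqrt{2A/(n\delta)}$ and a separate Proposition~\ref{prop:quantile-neighbor} for the density lower bound, then invokes a general quantile-perturbation Lemma~\ref{lem:general-neighbor} and finally picks $\delta=33AB^2/(\beta^2 n)$---but the logical structure (Markov on $M(\vartheta_n)$, uniform density bound $f_{S\mid\vartheta_n}\ge 2f_{\min}$ near $0$, Lipschitz inversion of the CDF, crude $2R$ bound on the bad event) is identical, and your direct parameterization by $\tau$ even yields the marginally better constant $1024$ in place of $1056$.
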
 

To prove Proposition \ref{prop:quantile-diff-exp}, we first need the following critical proposition:
\begin{proposition} \label{prop:quantile-neighbor}
    Suppose $\alpha\in \left(0, 1\right)$ is a constant. Define
    \begin{align*}
        \beta:= \min \left\{\frac{\alpha}{2 f_{\max}}, \ \frac{1-\alpha}{2 f_{\max}}\right\}, \qquad 
        A:= \frac{4\lambda_{\max}^2 f_{\max} d}{\lambda_{\min}^4 f_{\min}^2}, \qquad 
        \epsilon_{n} := B\sqrt{\frac{2A}{n\delta}}.
    \end{align*} 
    Under the same setting of Theorem \ref{thm:sgd-quantile}, if $\epsilon_{n}< \beta/4$ (equivalently $n>\frac{32 A B^2}{\beta^2 \delta}$), then for $\delta\in (0,1)$, with probability at least $1-\delta$ over $\vartheta_{n}$, the following (denoted by event $V$) hold simultaneously:
    \begin{enumerate}
        \item For $s$ with $|s|< \beta - \epsilon_{n}$, $f_{S \mid \vartheta_{n}}\left(s\mid \vartheta_{n}\right) \ge 2 f_{\min}$. 
        \item $\left|q_{1-\alpha}\left(S \mid \vartheta_{n}\right)\right| \le \epsilon_{n} < \beta/4$. 
    \end{enumerate}
\end{proposition}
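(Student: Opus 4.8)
The plan is to isolate a single event $V$, of probability at least $1-\delta$, on which the trained quantile curves are uniformly close to the true conditional quantiles, and then to verify both assertions deterministically on $V$. By the second inequality of Theorem~\ref{thm:sgd-quantile} (which applies since Assumptions~\ref{assum:well-spec-cqr}--\ref{assum:reg-pdf} are in force), $\mathbb{E}_{\vartheta_n}[\|\underline\theta_n-\underline\theta^*\|_2^2]\le A/n$ and likewise for $\bar\theta_n$, so $\mathbb{E}_{\vartheta_n}[M(\vartheta_n)^2]\le 2A/n$; Markov's inequality then gives $\mathbb{P}[B\,M(\vartheta_n)\ge\epsilon_n]\le\delta$. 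I would set $V:=\{B\,M(\vartheta_n)<\epsilon_n\}$, note that the standing hypothesis $n>32AB^2/(\beta^2\delta)$ is exactly the inequality $\epsilon_n<\beta/4$, and observe that the second claim is then immediate from Proposition~\ref{prop:abs-qs-upper}, since on $V$ one has $|q_{1-\alpha}(S\mid\vartheta_n)|\le B\,M(\vartheta_n)<\epsilon_n<\beta/4$.

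For the first claim I would first record margin bounds on the true quantiles that hold for every $x\in\mathcal{X}$ and depend only on $f_{Y\mid X}(\cdot\mid x)\le f_{\max}$, the continuity of $F_{Y\mid X}(\cdot\mid x)$, and the definition of $\beta$: namely $q_{\alpha/2}(Y\mid x)-y_{\min}\ge\alpha/(2f_{\max})\ge\beta$, \ $y_{\max}-q_{1-\alpha/2}(Y\mid x)\ge\beta$, and $q_{1-\alpha/2}(Y\mid x)-q_{\alpha/2}(Y\mid x)\ge(1-\alpha)/f_{\max}\ge2\beta$. On $V$, well-specification together with Cauchy--Schwarz and $\|x\|_2\le B$ give $|t_{\alpha/2}(x;\underline\theta_n)-q_{\alpha/2}(Y\mid x)|\le B\,M(\vartheta_n)<\epsilon_n$ and the analogous bound for the upper curve; combined with $\epsilon_n<\beta/4$, this shows that on $V$ the fitted interval has positive width $>2\beta-2\epsilon_n$ for every $x$ (so no quantile crossing occurs), its lower endpoint lies above $y_{\min}+\beta-\epsilon_n$, and its upper endpoint below $y_{\max}-\beta+\epsilon_n$.

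Abbreviating $\underline t(x):=t_{\alpha/2}(x;\underline\theta_n)$ and $\bar t(x):=t_{1-\alpha/2}(x;\bar\theta_n)$, one has $S\le s$ iff $\underline t(x)-s\le Y\le\bar t(x)+s$, so for any $s$ with $\bar t(x)+s>\underline t(x)-s$,
$F_{S\mid X=x,\vartheta_n}(s)=F_{Y\mid X}(\bar t(x)+s\mid x)-F_{Y\mid X}(\underline t(x)-s\mid x)$, which is $C^1$ in $s$ with derivative $f_{Y\mid X}(\bar t(x)+s\mid x)+f_{Y\mid X}(\underline t(x)-s\mid x)$. Fixing $s$ with $|s|<\beta-\epsilon_n$ and working on $V$, four short estimates using the margin bounds of the previous paragraph establish $y_{\min}<\underline t(x)-s<y_{\max}$ and $y_{\min}<\bar t(x)+s<y_{\max}$ for every $x$; hence both evaluation points lie in $\mathcal{Y}$, Assumption~\ref{assum:reg-pdf} gives $f_{S\mid X=x,\vartheta_n}(s)\ge 2f_{\min}$, and $\bar t(x)+s-(\underline t(x)-s)>0$ shows the density formula is valid in a neighborhood of $s$. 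Since $f_{S\mid X=x,\vartheta_n}\le 2f_{\max}$ uniformly there, dominated convergence lets me differentiate $F_{S\mid\vartheta_n}(s)=\mathbb{E}_X[F_{S\mid X,\vartheta_n}(s)\mid\vartheta_n]$ under the expectation, giving $f_{S\mid\vartheta_n}(s)=\mathbb{E}_X[f_{S\mid X,\vartheta_n}(s)\mid\vartheta_n]\ge 2f_{\min}$, which is the first claim. The only delicate point — and the reason $\beta$ is taken as the minimum of $\alpha/(2f_{\max})$ and $(1-\alpha)/(2f_{\max})$ and why $\epsilon_n$ must be this small — is the simultaneous containment $\underline t(x)-s,\ \bar t(x)+s\in\mathcal{Y}$ over all covariate values $x$ and all $s$ in the window, i.e.\ ensuring the perturbation of the fitted quantile curves neither pushes an endpoint out of $\mathcal{Y}$ nor causes crossing; once that is secured, the density lower bound $2f_{\min}$ and the marginalization are routine.
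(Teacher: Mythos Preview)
Your proposal is correct and follows essentially the same route as the paper: isolate via Markov the high-probability event where the fitted quantile curves are $\epsilon_n$-close to the true ones, use the $f_{\max}$-Lipschitzness of $F_{Y\mid X}$ to get the margin bounds on the true quantiles, verify the two evaluation points stay in $\mathcal{Y}$ for $|s|<\beta-\epsilon_n$, and then read off $f_{S\mid\vartheta_n}(s)\ge 2f_{\min}$ after marginalizing over $X$. The only cosmetic differences are that the paper applies Markov to $\|\underline\theta_n-\underline\theta^{*}\|_2^2$ and $\|\bar\theta_n-\bar\theta^{*}\|_2^2$ separately (each with $\delta/2$) and then a union bound, whereas you bound $\mathbb{E}[M(\vartheta_n)^2]\le 2A/n$ directly; and you make explicit the dominated-convergence step for differentiating $F_{S\mid\vartheta_n}(s)=\mathbb{E}_X[F_{S\mid X,\vartheta_n}(s)]$, which the paper leaves implicit.
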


\begin{proof}
    By the definition of $S$ in (\ref{eq:score-cqr}),
    \begin{align*}
        \mathbb{P} \left[ S\le s |X, \vartheta_{n} \right] 
        =  \mathbb{P}\left[ 
            \begin{array}{l}
            \left. t_{\alpha/2}\left(x; \underline{\theta}_{n}\right) - s \le Y \le  t_{1-\alpha/2}\left(x; \bar{\theta}_{n}\right) + s] \right.\\
            \text{ and }  s \ge \frac{t_{\alpha/2}\left(x; \underline{\theta}_{n}\right)- t_{1-\alpha/2}\left(x; \bar{\theta}_{n}\right)}{2} 
            \end{array}
            \middle|  X, \vartheta_{n} \right] .
    \end{align*}
    Hence, 
    {\small
    \begin{align} \label{eq:conditional-cdf-s}
        F_{S| X, \vartheta_{n}} \left(s\right) =
        \begin{cases}
            0, \; & \text{if } s < \frac{t_{\alpha/2}\left(x; \underline{\theta}_{n}\right)- t_{1-\alpha/2}\left(x; \bar{\theta}_{n}\right)}{2}, 
            \\
            F_{Y| X, \vartheta_{n}} \left(t_{1-\alpha/2}\left(x; \bar{\theta}_{n}\right) + s \right) \\
            \quad - F_{Y| X, \vartheta_{n}} \left(t_{\alpha/2}\left(x; \underline{\theta}_{n}\right) - s \right), \; &\text{otherwise.}
        \end{cases}
    \end{align}
    }
    We now show that with high probability, it holds for $s$ in the neighbourhood of $0$ that
    \begin{align*}
        s \ge \frac{t_{\alpha/2}\left(x; \underline{\theta}_{n}\right)- t_{1-\alpha/2}\left(x; \bar{\theta}_{n}\right)}{2}, \quad 
        t_{1-\alpha/2}\left(x; \bar{\theta}_{n}\right) + s \in \mathcal{Y}, \quad
        t_{\alpha/2}\left(x; \underline{\theta}_{n}\right) - s \in \mathcal{Y}.
    \end{align*}
    Let $y_{\max}:=\sup\{y\in\mathcal{Y}\}$ and $y_{\min}:=\inf \{y\in\mathcal{Y}\}$. 
    Then, under Assumption \ref{assum:reg-pdf}, $y_{\max} > y_{\min}$. 
    \begin{align*}
        & q_{\alpha/2}\left(Y \mid X=x\right), q_{1-\alpha/2}\left(Y \mid X=x\right)\in [y_{\min}, y_{\max}], \\
        & q_{\alpha/2}\left(Y \mid X=x\right) - y_{\min} \ge \frac{\alpha}{2 f_{\max}} \ge \beta, \qquad y_{\max} - q_{1-\alpha/2}\left(Y \mid X=x\right) \ge \frac{\alpha}{2 f_{\max}} \ge \beta, \\
        & \frac{q_{1-\alpha/2}\left(Y \mid X=x\right)- q_{\alpha/2}\left(Y \mid X=x\right)}{2} \ge \frac{1-\alpha}{2 f_{\max}} \ge \beta.
    \end{align*}
    
    By Theorem \ref{thm:sgd-quantile}, $\mathbb{E}_{\theta_{n}}\left[ \|\theta_{n}\left(\gamma\right) - \theta^*\left(\gamma\right)\|_{2}^2  \right] \le \frac{A}{n}$ for $\gamma \in \{\frac{\alpha}{2}, 1-\frac{\alpha}{2}\}$. By Markov's inequality, 
    \begin{align*}
        \mathbb{P}\left[ \|\theta_{n}\left(\gamma\right) - \theta^*\left(\gamma\right)\|_{2} \le \sqrt{\frac{2A}{n\delta}}\right] \ge 1-\frac{\delta}{2}.
    \end{align*}
    Applying the union bound, we have
    \begin{align*}
        \mathbb{P}\left[ \max_{\gamma \in \{\frac{\alpha}{2}, 1-\frac{\alpha}{2}\}} \|\theta_{n}\left(\gamma\right) - \theta^*\left(\gamma\right)\|_{2} \le \sqrt{\frac{2A}{n\delta}}\right] \ge 1-\delta.
    \end{align*}
    Since for each $x\in\mathcal{X}$, 
    \begin{align*}
        \mathcal{E}_{\gamma}\left(x, \theta_{n}\left(\gamma\right)\right) 
        &= \left|t_{\gamma}\left(x; \theta_{n}\left(\gamma\right)\right) - t_{\gamma}\left(x; \theta^*\left(\gamma\right)\right)\right| 
        = \left| \left(\theta_{n}\left(\gamma\right) - \theta^*\left(\gamma\right) \right)^{\top} x \right| \\
        &\le \left\| \left(\theta_{n}\left(\gamma\right) - \theta^*\left(\gamma\right) \right) \right\|_{2} \left\| x \right\|_{2} 
        \le B \left\| \left(\theta_{n}\left(\gamma\right) - \theta^*\left(\gamma\right) \right) \right\|_{2}. 
    \end{align*}
    we have that with probability at least $1-\delta$, 
    \begin{align*}
        \sup_{x} \ \Delta\left(x, \vartheta_{n}\right) 
        & \le B \max_{\gamma \in \{\frac{\alpha}{2}, 1-\frac{\alpha}{2}\}} \|\theta_{n}\left(\gamma\right) - \theta^*\left(\gamma\right)\|_{2}
        \le B \sqrt{\frac{2A}{n\delta}} =: \epsilon_{n} .
    \end{align*}
    and by Proposition \ref{prop:abs-qs-upper}, it also holds that  
    \begin{align}  \label{eq:abs-qs-en}
        \left|q_{1-\alpha}\left(S \mid \vartheta_{n}\right)\right| \le \epsilon_{n}.
    \end{align}

    Then, w.p. $\ge 1-\delta$, for any $x\in\mathcal{X}$, 
    \begin{align*}
        t_{\alpha/2}\left(x; \underline{\theta}_{n}\right) &\ge q_{\alpha/2}\left(Y \mid X=x\right) - \Delta\left(x, \vartheta_{n}\right) \ge y_{\min} + \beta - \epsilon_{n}; \\
        t_{1-\alpha/2}\left(x; \bar{\theta}_{n}\right) &\le q_{1-\alpha/2}\left(Y \mid X=x\right) + \Delta\left(x, \vartheta_{n}\right) \le y_{\max} - \beta + \epsilon_{n}; \\
        \frac{t_{1-\alpha/2}\left(x; \bar{\theta}_{n}\right)- t_{\alpha/2}\left(x; \underline{\theta}_{n}\right)}{2} & \ge  \frac{q_{1-\alpha/2}\left(Y \mid X=x\right)- q_{\alpha/2}\left(Y \mid X=x\right)}{2}  - \Delta\left(x, \vartheta_{n}\right) \ge \beta - \epsilon_{n}.
    \end{align*}
    The last inequality above shows that with high probability, quantile crossing will not occur given $n$ is large enough. 
    
    In this case, for any $s$ with $|s|< r_{n} := \beta - \epsilon_{n}$, we have $\forall x\in \mathcal{X}$, 
    \begin{align*}
        &t_{\alpha/2}\left(x; \underline{\theta}_{n}\right) - s 
        > y_{\min} + \beta - \epsilon_{n} - r_{n} 
        \ge y_{\min}; \\
        &t_{\alpha/2}\left(x; \underline{\theta}_{n}\right) - s 
        < q_{\alpha/2}\left(Y \mid X=x\right) + \epsilon_{n} + r_{n} 
        \le q_{1-\alpha/2}\left(Y \mid X=x\right) + \beta 
        \le y_{\max}; \\
        &t_{1-\alpha/2}\left(x; \bar{\theta}_{n}\right) + s 
        < y_{\max} - \beta + \epsilon_{n} + r_{n} 
        \le y_{\max}; \\
        &t_{1-\alpha/2}\left(x; \bar{\theta}_{n}\right) + s 
        > q_{1-\alpha/2}\left(Y \mid X=x\right) - \epsilon_{n} - r_{n} 
        \ge q_{\alpha/2}\left(Y \mid X=x\right) - \beta 
        \ge y_{\min}; \\
        & s \ge -|s| \ge - r_{n} = \epsilon_{n} - \beta \ge \frac{t_{\alpha/2}\left(x; \underline{\theta}_{n}\right)- t_{1-\alpha/2}\left(x; \bar{\theta}_{n}\right)}{2} .
    \end{align*}
    Since $\mathcal{Y}$ is an interval,  
    \begin{align*}
        t_{\alpha/2}\left(x; \underline{\theta}_{n}\right) - s \in \mathcal{Y}, \qquad t_{1-\alpha/2}\left(x; \bar{\theta}_{n}\right) + s \in \mathcal{Y}.
    \end{align*}
    
    Therefore, by (\ref{eq:conditional-cdf-s}), conditioning on $\vartheta_{n}$, for $s$ with $|s|< r_{n} = \beta - \epsilon_{n}$,
    \begin{alignat*}{2}
        f_{S \mid \vartheta_{n}} \left(s\mid \vartheta_{n}\right) 
        &= \mathbb{E}_{X|\vartheta_{n}} \left[ \right. && \left. f_{Y|X,\vartheta_{n}}\left(t_{\alpha/2}\left(x; \underline{\theta}_{n}\right) - s \mid X, \vartheta_{n} \right) \right. \\
        &&& \left. + f_{Y|X,\vartheta_{n}}\left(t_{1-\alpha/2}\left(x; \bar{\theta}_{n} \right)+ s \mid X, \vartheta_{n} \right) \right]\\
        & \ge 2 f_{\min}. &&
    \end{alignat*}
    Suppose $n>\frac{32 A B^2}{\beta^2 \delta}$, which is equivalent to $\epsilon_{n}< \beta/4$. Then, $r_{n} = \beta - \epsilon_{n} \ge 3\beta/4 \ge \epsilon_{n}$. 
    By (\ref{eq:abs-qs-en}), $\left|q_{1-\alpha}\left(S \mid \vartheta_{n}\right)\right| \le \beta - \epsilon_{n}$. 
\end{proof}

The proof of Proposition \ref{prop:quantile-diff-exp} also relies on the following useful lemma, {which is a classical result \citep{bobkov2019one}. We include the proof here for completeness. }
\begin{lemma}\label{lem:general-neighbor}
    Let $F$ be a c.d.f. with p.d.f. $f$. Suppose there exists an interval $\mathcal{I}\in\mathbb{R}$ and a constant $c_{0}>0$ such that $f(s)
    \ge c_{0}$ for all $s\in \mathcal{I}$. For $p\in \left(0,1\right)$, $q_{p} := \inf \{u: F\left(u\right) \ge p\} \in \mathcal{I}$, define $r_{0} := \min \{ q_{p}- \inf \mathcal{I}, \sup \mathcal{I} - q_{p} \} \ge 0$. 
    Then, for any $p'$ such that $|p' - p|< c_{0} r_{0}$, it holds that $q_{p'}\in \mathcal{I}$, and $|q_{p'} - q_{p}|\le \frac{|p' - p|}{c_{0}}$. 
\end{lemma}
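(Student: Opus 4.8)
The plan is to exploit the only quantitative input available — that $f\ge c_0$ on $\mathcal I$ — via the elementary \emph{rate inequality}: for $u\le v$ with $u,v\in\mathcal I$ we have $F(v)-F(u)=\int_u^v f(t)\,dt\ge c_0(v-u)$. In words, $F$ grows on $\mathcal I$ with slope at least $c_0$; the lemma is then bookkeeping around this fact together with the monotonicity of the quantile map $p\mapsto q_p$.

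First I would dispense with a degenerate case: if $r_0=0$ there is no $p'$ with $|p'-p|<c_0 r_0=0$, so we may assume $r_0>0$; then $\inf\mathcal I<q_p<\sup\mathcal I$, so $q_p$ is interior to $\mathcal I$ and an entire neighbourhood of $q_p$ lies in $\mathcal I$. Since $F$ has a density it is continuous, and $q_p=\inf\{u:F(u)\ge p\}$ then forces $F(q_p)=p$. Put $\delta:=|p'-p|/c_0<r_0$.

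The core step is to establish $q_{p'}\in\mathcal I$, after which the Lipschitz bound is automatic: applying the rate inequality to the pair $q_p,q_{p'}\in\mathcal I$ gives $|p-p'|=|F(q_p)-F(q_{p'})|\ge c_0\,|q_p-q_{p'}|$. To locate $q_{p'}$ I split on the sign of $p'-p$. If $p'\ge p$, then $u^+:=q_p+\delta\in\mathcal I$ (since $\delta<r_0\le\sup\mathcal I-q_p$), and the rate inequality on $[q_p,u^+]$ gives $F(u^+)\ge p+c_0\delta=p'$, hence $q_{p'}\le u^+$; since $\{u:F(u)\ge p'\}\subseteq\{u:F(u)\ge p\}$ we also have $q_{p'}\ge q_p$, so $q_{p'}\in[q_p,q_p+\delta]\subseteq\mathcal I$. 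If $p'<p$, then $u^-:=q_p-\delta\in\mathcal I$, and I would show $F(u)<p'$ for every $u<u^-$ — directly from the rate inequality on $[u,u^-]$ when $u\in\mathcal I$, and by first passing to an intermediate point $w\in(\inf\mathcal I,u^-)\subseteq\mathcal I$ when $u<\inf\mathcal I$ — which forces $q_{p'}\ge u^-$; combined with $q_{p'}\le q_p$ we again get $q_{p'}\in[q_p-\delta,q_p]\subseteq\mathcal I$.

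The main — though not deep — obstacle is ensuring $q_{p'}$ cannot slip out of $\mathcal I$ and correctly handling the distinction between ``$F(u)\ge p'$'' and ``$F(u)>p'$'' just to the left of $u^-$, since $F$ may be flat at level exactly $p'$ somewhere outside $\mathcal I$; the intermediate-point device, together with strict positivity $f\ge c_0>0$ on $\mathcal I$, settles this. Everything else reduces to the one-line rate inequality.
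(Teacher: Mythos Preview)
Your proposal is correct and follows essentially the same approach as the paper: both arguments rest on the single rate inequality $F(v)-F(u)\ge c_0(v-u)$ for $u,v\in\mathcal I$, together with monotonicity of $p\mapsto q_p$. The paper's proof evaluates $F$ at $q_p\pm r_0$ to first trap $q_{p'}$ in $\mathcal I$ and then asserts the Lipschitz bound, whereas you evaluate directly at $q_p\pm\delta$ with $\delta=|p'-p|/c_0$, obtaining both conclusions in one stroke; you are also more explicit than the paper about why $q_{p'}$ cannot fall below $u^-$ in the case $p'<p$.
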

\begin{proof}
    By assumption,
    \begin{align*}
        &F\left(q_{p}-r_{0}\right)\le F\left(q_{p}\right) - c_{0}r_{0} = p - c_{0}r_{0}; \\
        &F\left(q_{p}+r_{0}\right)\ge F\left(q_{p}\right) + c_{0}r_{0} = p + c_{0}r_{0}.
    \end{align*}
    Since $|p' - p|< c_{0} r_{0}$, either $p \le p' < p + c_{0} r_{0}$ or $p' \le p < p' + c_{0} r_{0}$. If $p \le p' < p + c_{0} r_{0}$, then $p\le p' < F\left(q_{p}+r_{0}\right)$. Since $F$ is non-decreasing, $q_{p} \le q_{p'} < q_{p}+r_{0}$. Similarly, if $p - c_{0} r_{0} < p' \le p$, then $F\left(q_{p}-r_{0}\right) < p' \le p$, and $q_{p}-r_{0} < q_{p'} \le q_{p}$. Hence, $q_{p'}\in\mathcal{I}$, and $|q_{p'} - q_{p}|\le \frac{|p' - p|}{c_{0}}$. 
\end{proof}

With Proposition \ref{prop:quantile-neighbor}, we apply Lemma \ref{lem:general-neighbor} and get Lemma \ref{lem:quantile-m-neighbor}. 
\begin{lemma} \label{lem:quantile-m-neighbor}
    Under the same setting of Proposition \ref{prop:quantile-neighbor}, if the event in Proposition \ref{prop:quantile-neighbor} occurs, and if \ $m>\frac{4}{f_{\min} \beta}$, then it holds that $|q_{\left(1-\alpha\right)_{m}}\left(S\mid \vartheta_{n}\right)| \le \beta/2$, $f_{S \mid \vartheta_{n}}\left(q_{\left(1-\alpha\right)_{m}}\left(S\mid \vartheta_{n}\right)\right) \ge 2 f_{\min}$, and $|q_{\left(1-\alpha\right)_{m}}\left(S\mid \vartheta_{n}\right) - q_{1-\alpha}\left(S\mid \vartheta_{n}\right)|\le \frac{1}{f_{\min} m}$. 
\end{lemma}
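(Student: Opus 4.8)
The plan is to deduce Lemma~\ref{lem:quantile-m-neighbor} from Lemma~\ref{lem:general-neighbor}, applied with $F = F_{S\mid\vartheta_n}$ (the conditional c.d.f.\ of the score given $\vartheta_n$), $p = 1-\alpha$, $p' = (1-\alpha)_m$, constant $c_0 = 2 f_{\min}$, and the interval $\mathcal{I} := [-3\beta/4,\, 3\beta/4]$. The two facts that the event of Proposition~\ref{prop:quantile-neighbor} provides — that $f_{S\mid\vartheta_n}(s\mid\vartheta_n) \ge 2 f_{\min}$ for $|s| < \beta - \epsilon_n$, and that $|q_{1-\alpha}(S\mid\vartheta_n)| \le \epsilon_n < \beta/4$ — are exactly what is needed to check the hypotheses: since $\epsilon_n < \beta/4$ we have $3\beta/4 < \beta - \epsilon_n$, so the density lower bound $2 f_{\min}$ holds throughout $\mathcal{I}$ (the conditional density exists on $\mathcal{I}$ by the computation underlying Proposition~\ref{prop:quantile-neighbor}, based on the explicit formula (\ref{eq:conditional-cdf-s}) and Assumption~\ref{assum:reg-pdf}), and $q_{1-\alpha}(S\mid\vartheta_n)$ lies in the interior of $\mathcal{I}$ with $r_0 = 3\beta/4 - |q_{1-\alpha}(S\mid\vartheta_n)| > \beta/2$ in the notation of Lemma~\ref{lem:general-neighbor}.

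First I would control the shift in quantile level: from $(1-\alpha)_m = \lceil (1-\alpha)(m+1)\rceil / m$ and $0 \le \lceil (1-\alpha)(m+1)\rceil - (1-\alpha)m \le (1-\alpha)+1 < 2$ one gets $0 \le (1-\alpha)_m - (1-\alpha) < 2/m$, and the hypothesis $m > 4/(f_{\min}\beta)$ — which, since $\beta = \min\{\alpha,1-\alpha\}/(2 f_{\max})$ and $H = f_{\max}/f_{\min}$ (see (\ref{eq:H-def})), coincides with $m > 8H/\min\{\alpha,1-\alpha\}$ — also forces $(1-\alpha)_m \in (1-\alpha,1)$, so the quantile is well defined. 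Then $|p'-p| < 2/m \le f_{\min}\beta < 2 f_{\min} r_0 = c_0 r_0$, so Lemma~\ref{lem:general-neighbor} applies and yields $q_{(1-\alpha)_m}(S\mid\vartheta_n) \in \mathcal{I}$ together with
\[
    \bigl|q_{(1-\alpha)_m}(S\mid\vartheta_n) - q_{1-\alpha}(S\mid\vartheta_n)\bigr| \;\le\; \frac{|p'-p|}{c_0} \;\le\; \frac{2/m}{2 f_{\min}} \;=\; \frac{1}{f_{\min} m},
\]
which is the third claim. For the first claim, $m > 4/(f_{\min}\beta)$ gives $1/(f_{\min}m) < \beta/4$, hence $|q_{(1-\alpha)_m}(S\mid\vartheta_n)| \le |q_{1-\alpha}(S\mid\vartheta_n)| + 1/(f_{\min}m) < \beta/4 + \beta/4 = \beta/2$; and since then $|q_{(1-\alpha)_m}(S\mid\vartheta_n)| < \beta/2 < \beta - \epsilon_n$, part~1 of Proposition~\ref{prop:quantile-neighbor} immediately gives $f_{S\mid\vartheta_n}(q_{(1-\alpha)_m}(S\mid\vartheta_n)\mid\vartheta_n) \ge 2 f_{\min}$, the second claim.

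The argument is essentially bookkeeping, so there is no genuinely hard step; the only point requiring care is the tuning of constants. The factor $4$ (rather than $2$) in $m > 4/(f_{\min}\beta)$ is used twice: once so that $|p'-p| < c_0 r_0$ holds with slack, and once so that the quantile-gap bound $1/(f_{\min}m)$ falls below $\beta/4$, which is precisely what upgrades the localization from the weaker $|q_{(1-\alpha)_m}(S\mid\vartheta_n)| < 3\beta/4$ (enough to stay in the density window but not to land in $\mathcal{I}$) to the sharp $|q_{(1-\alpha)_m}(S\mid\vartheta_n)| < \beta/2$. The other thing worth double-checking is that $f_{S\mid\vartheta_n}$ really exists on all of $\mathcal{I}$ and not merely at isolated points, which is inherited verbatim from the differentiation argument used to establish part~1 of Proposition~\ref{prop:quantile-neighbor}.
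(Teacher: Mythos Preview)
Your proposal is correct and follows essentially the same route as the paper: both apply Lemma~\ref{lem:general-neighbor} with $c_0 = 2f_{\min}$ on the interval $(-3\beta/4,\,3\beta/4)$, use the bound $|(1-\alpha)_m - (1-\alpha)| < 2/m$ together with $m > 4/(f_{\min}\beta)$ to verify $|p'-p| < c_0 r_0$, and then combine the resulting quantile-gap bound $1/(f_{\min}m) < \beta/4$ with $|q_{1-\alpha}| < \beta/4$ via the triangle inequality to obtain $|q_{(1-\alpha)_m}| < \beta/2$.
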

\begin{proof}
    For simplicity, in the proof we denote $q_{p}\left(S\mid \vartheta_{n}\right)$ by $q_{p}$.
    {\small
    \begin{align*}
        &\left(1-\alpha\right)\left(m+1\right) \le \lceil\left(1-\alpha\right)\left(m+1\right)\rceil < \left(1-\alpha\right)\left(m+1\right) + 1 \\
        \Rightarrow \; &\left(1-\alpha\right)\left(m+1\right) - \left(1-\alpha\right)m \le \lceil\left(1-\alpha\right)\left(m+1\right)\rceil  - \left(1-\alpha\right)m < \left(1-\alpha\right)\left(m+1\right) + 1  - \left(1-\alpha\right)m \\
         \Rightarrow \; & 0< \frac{1-\alpha}{m} \le |\left(1-\alpha\right)_{m} - \left(1-\alpha\right)| < \frac{2-\alpha}{m} < \frac{2}{m}.
    \end{align*}
    }
    Since $\epsilon_{n}<\beta/4$, from Proposition \ref{prop:quantile-neighbor}, with probability at least $1-\delta$, for $s$ with $|s|<3\beta/4$, $f_{S \mid \vartheta_{n}}\left(s\mid \vartheta_{n}\right) \ge 2 f_{\min}$, and $|q_{1-\alpha}|< \beta/4$. 
    In this case, $r_{0} := \min \{ q_{1-\alpha} + 3\beta/4, 3\beta/4 - q_{1-\alpha} \} > \beta/2$. 
    If \ $m>\frac{4}{f_{\min} \beta}$, then $|\left(1-\alpha\right)_{m} - \left(1-\alpha\right)| < \frac{2}{m} < 2f_{\min} \frac{\beta}{4} < 2f_{\min} \frac{\beta}{2} < 2f_{\min} \cdot r_{0}$. 
    Then by Lemma \ref{lem:general-neighbor}, $|q_{\left(1-\alpha\right)_{m}}| \le 3\beta/4$, $f_{S \mid \vartheta_{n}}\left(q_{\left(1-\alpha\right)_{m}}\left(S\mid \vartheta_{n}\right)\right) \ge 2 f_{\min}$, and $|q_{\left(1-\alpha\right)_{m}} - q_{1-\alpha}| < \frac{|\left(1-\alpha\right)_{m} - \left(1-\alpha\right)|}{2 f_{\min}} < \frac{1}{f_{\min} m} \le \beta/4$. Hence,  $|q_{\left(1-\alpha\right)_{m}}| \le |q_{1-\alpha}| + |q_{\left(1-\alpha\right)_{m}} - q_{1-\alpha}| < \beta/4 + \beta/4 = \beta/2$. 
\end{proof}

Notice that $|q_{\left(1-\alpha\right)_{m}}\left(S\mid \vartheta_{n}\right) - q_{1-\alpha}\left(S\mid \vartheta_{n}\right)|$ is bounded by $2 R$. Let $V$ denote the event in Proposition \ref{prop:quantile-neighbor}, and $V^{c}$ its complement.  Then, by Lemma \ref{lem:quantile-m-neighbor},
\begin{align*}
    &\mathbb{E}_{\vartheta_{n}}\left[ |q_{\left(1-\alpha\right)_{m}}\left(S\mid \vartheta_{n}\right) - q_{1-\alpha}\left(S\mid \vartheta_{n}\right)| \right] \\
    &= \mathbb{P}[V] \cdot \mathbb{E}_{\vartheta_{n}}\left[ |q_{\left(1-\alpha\right)_{m}}\left(S\mid \vartheta_{n}\right) - q_{1-\alpha}\left(S\mid \vartheta_{n}\right)| \mid V \right] \\
    & \quad + \mathbb{P}\left[V^{c}\right] \cdot \mathbb{E}_{\vartheta_{n}}\left[ |q_{\left(1-\alpha\right)_{m}}\left(S\mid \vartheta_{n}\right) - q_{1-\alpha}\left(S\mid \vartheta_{n}\right)| \mid V^{c} \right] \\
    &\le \frac{1}{f_{\min} m} + 2 R \delta.
\end{align*} 

Picking $\delta = \frac{33 A B^{2}}{\beta^2 n}$ completes the proof of Proposition \ref{prop:quantile-diff-exp}.

\subsection{Proof of Proposition \ref{prop:emp-quantile-exp}} \label{sec:prf-prop-emp-quantile}

\begin{restatable}[{Empirical score-quantile concentration}]{proposition}{propEmpQuantile} \label{prop:emp-quantile-exp}
    In CQR, Suppose Assumptions \ref{assum:well-spec-cqr}--\ref{assum:reg-pdf} hold, if $m > 8H / \min\{\alpha, 1-\alpha\}$ \ 
    for $H$ in (\ref{eq:H-def}), then
    \begin{align*}
        &\mathbb{E}_{\vartheta_{n},\mathcal{D}_{\text{cal}}}\left[ |\hat{q}_{\left(1-\alpha\right)_{m}}\left(S_{m}\mid \vartheta_{n}\right) - q_{\left(1-\alpha\right)_{m}}\left(S\mid \vartheta_{n}\right)| \right] 
        \\
        &\quad \le \frac{\sqrt{\pi}}{2 f_{\min}\sqrt{2m}} + 4R \exp{\left(- \frac{\min \{\alpha^2, (1-\alpha)^2\} f_{\min}^{2}}{8 f_{\max}^2}  m \right)} + \frac{1056 R f_{\max}^{3} \lambda_{\max}^{2} B^{2} d}{\min\{\alpha^{2}, (1-\alpha)^{2}\} \lambda_{\min}^{4} f_{\min}^{2} n}.
    \end{align*}   
\end{restatable}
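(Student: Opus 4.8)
The plan is to condition on the high-probability event $V$ of Proposition~\ref{prop:quantile-neighbor} and, on that event, reduce the quantile deviation to the Kolmogorov--Smirnov deviation of the calibration empirical c.d.f., exploiting the local density lower bound. By Proposition~\ref{prop:quantile-neighbor} and Lemma~\ref{lem:quantile-m-neighbor}, on $V$ we have $|q_{(1-\alpha)_m}(S\mid\vartheta_n)|\le\beta/2$ and $f_{S\mid\vartheta_n}(s\mid\vartheta_n)\ge 2f_{\min}$ for all $|s|<3\beta/4$; hence $F_{S\mid\vartheta_n}$ is continuous and strictly increasing with slope at least $2f_{\min}$ on $\mathcal{I}:=\bigl(q_{(1-\alpha)_m}-\beta/4,\ q_{(1-\alpha)_m}+\beta/4\bigr)$, so in particular $F_{S\mid\vartheta_n}(q_{(1-\alpha)_m})=(1-\alpha)_m$. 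Throughout I write $\Delta_m:=\sup_s\bigl|\hat{F}^{(m)}_{S\mid\vartheta_n}(s)-F_{S\mid\vartheta_n}(s)\bigr|$ and recall $\hat{q}_{(1-\alpha)_m}(S_m\mid\vartheta_n)=\inf\{s:\hat{F}^{(m)}_{S\mid\vartheta_n}(s)\ge(1-\alpha)_m\}$.

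First I would establish the stability claim: on $V$, if $\Delta_m<\beta f_{\min}/2$ then $\bigl|\hat{q}_{(1-\alpha)_m}-q_{(1-\alpha)_m}\bigr|\le\Delta_m/(2f_{\min})$. Setting $q^\pm:=q_{(1-\alpha)_m}\pm\Delta_m/(2f_{\min})$, which lie in $\mathcal{I}$ because $\Delta_m/(2f_{\min})<\beta/4$, the density bound gives $F_{S\mid\vartheta_n}(q^+)\ge(1-\alpha)_m+\Delta_m$ and $F_{S\mid\vartheta_n}(q^-)\le(1-\alpha)_m-\Delta_m$, so $\hat{F}^{(m)}_{S\mid\vartheta_n}(q^+)\ge(1-\alpha)_m$, giving $\hat{q}_{(1-\alpha)_m}\le q^+$; for the lower bound, right-continuity makes the defining infimum attained, so $F_{S\mid\vartheta_n}(\hat{q}_{(1-\alpha)_m})\ge(1-\alpha)_m-\Delta_m$, which combined with the fact that $F_{S\mid\vartheta_n}$ falls strictly below $(1-\alpha)_m-\Delta_m$ to the left of $\mathcal{I}$ forces $\hat{q}_{(1-\alpha)_m}\in\mathcal{I}$, and then strict monotonicity of $F_{S\mid\vartheta_n}$ on $\mathcal{I}$ together with $F_{S\mid\vartheta_n}(q^-)\le(1-\alpha)_m-\Delta_m\le F_{S\mid\vartheta_n}(\hat{q}_{(1-\alpha)_m})$ yields $\hat{q}_{(1-\alpha)_m}\ge q^-$.

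Next I would assemble the bound by a three-way split. Conditionally on $\vartheta_n$, the Dvoretzky--Kiefer--Wolfowitz inequality gives $\mathbb{P}[\Delta_m>\epsilon\mid\vartheta_n]\le 2e^{-2m\epsilon^2}$ for every $\epsilon\ge0$; integrating over $\epsilon$ yields $\mathbb{E}[\Delta_m]\le\int_0^\infty 2e^{-2m\epsilon^2}\,d\epsilon=\sqrt{\pi/(2m)}$, while taking $\epsilon=\beta f_{\min}/2$ gives $\mathbb{P}\bigl[\Delta_m\ge\beta f_{\min}/2\bigr]\le 2\exp\!\bigl(-m\beta^2 f_{\min}^2/2\bigr)$. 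Splitting the expectation over $V\cap\{\Delta_m<\beta f_{\min}/2\}$ (where the stability claim applies), $V\cap\{\Delta_m\ge\beta f_{\min}/2\}$, and $V^c$, using $\bigl|\hat{q}_{(1-\alpha)_m}-q_{(1-\alpha)_m}\bigr|\le 2R$ always and $\mathbb{P}[V^c]\le\delta$ from Proposition~\ref{prop:quantile-neighbor}, gives
\[
\mathbb{E}_{\vartheta_n,\mathcal{D}_{\text{cal}}}\bigl[\bigl|\hat{q}_{(1-\alpha)_m}(S_m\mid\vartheta_n)-q_{(1-\alpha)_m}(S\mid\vartheta_n)\bigr|\bigr]\ \le\ \frac{1}{2f_{\min}}\sqrt{\frac{\pi}{2m}}+4R\exp\!\Bigl(-\frac{m\beta^2 f_{\min}^2}{2}\Bigr)+2R\delta .
\]
Substituting $\beta=\min\{\alpha,1-\alpha\}/(2f_{\max})$ converts the middle term into $4R\exp\!\bigl(-\min\{\alpha^2,(1-\alpha)^2\}f_{\min}^2 m/(8f_{\max}^2)\bigr)$, and plugging in the same $\delta=33AB^2/(\beta^2 n)$ with $A=4\lambda_{\max}^2 f_{\max} d/(\lambda_{\min}^4 f_{\min}^2)$ used in the proof of Proposition~\ref{prop:quantile-diff-exp} converts $2R\delta$ into $1056\,R f_{\max}^3\lambda_{\max}^2 B^2 d/(\min\{\alpha^2,(1-\alpha)^2\}\lambda_{\min}^4 f_{\min}^2 n)$; note that the hypothesis $m>8H/\min\{\alpha,1-\alpha\}$ is exactly $m>4/(f_{\min}\beta)$, the condition needed to invoke Lemma~\ref{lem:quantile-m-neighbor}, and that $n>32AB^2/(\beta^2\delta)$ holds automatically for this $\delta$.

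The main obstacle is keeping the three scales mutually consistent: the neighborhood half-width $\beta/4$ around $q_{(1-\alpha)_m}$ supplied by Lemma~\ref{lem:quantile-m-neighbor}, the cutoff $\beta f_{\min}/2$ on $\Delta_m$ --- which must be small enough for $\hat{q}_{(1-\alpha)_m}$ to remain inside that neighborhood (so the density bound is legitimately applied in the stability claim) yet whose DKW failure probability reproduces precisely the exponential rate appearing in the statement --- and the choice of $\delta$ inherited from Proposition~\ref{prop:quantile-diff-exp} that makes the $2R\delta$ remainder collapse into the advertised $1/n$ term. Apart from that, the argument is routine substitution and arithmetic; the stability claim itself must be phrased with strict inequalities and strict monotonicity of $F_{S\mid\vartheta_n}$ on $\mathcal{I}$ to handle left-/right-limits and ties among the calibration scores cleanly.
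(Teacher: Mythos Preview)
Your proposal is correct and follows essentially the same approach as the paper: condition on the event $V$ from Proposition~\ref{prop:quantile-neighbor}, use the local density lower bound $f_{S\mid\vartheta_n}\ge 2f_{\min}$ (via Lemma~\ref{lem:quantile-m-neighbor}) to convert the quantile deviation into a Kolmogorov--Smirnov deviation, apply DKW, and pick the same $\delta=33AB^2/(\beta^2 n)$. The only cosmetic difference is that the paper uses the layer-cake representation for $\mathbb{E}[|\hat q-q|\mid V]$, splitting the integral at $u=\beta/4$, whereas you split by the indicator $\{\Delta_m<\beta f_{\min}/2\}$ and integrate the DKW tail of $\Delta_m$ directly; both routes yield identical constants.
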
 

To prove Proposition \ref{prop:emp-quantile-exp}, we first prove the following lemma:
\begin{lemma}\label{lem:emp-quantile-m-neighbor}
    Under the same setting of Lemma \ref{lem:quantile-m-neighbor}, if the high probability event $V$ in Proposition \ref{prop:quantile-neighbor} occurs,
    for any $u\in [0,\beta/4]$, if 
    $$\sup_{s}\left|F_{S \mid \vartheta_{n}}\left(s\right)-\hat{F}^{\left(m\right)}_{S \mid \vartheta_{n}}\left(s\right)\right|\le 2f_{\min} u,$$ 
    then $|\hat{q}_{\left(1-\alpha\right)_{m}}\left(S_{m}\mid \vartheta_{n}\right) - q_{\left(1-\alpha\right)_{m}}\left(S\mid \vartheta_{n}\right)|\le u$.
\end{lemma}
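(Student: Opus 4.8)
The plan is to convert the hypothesized uniform deviation bound $\varepsilon_0 := \sup_s \lvert F_{S\mid\vartheta_n}(s) - \hat F^{(m)}_{S\mid\vartheta_n}(s)\rvert \le 2 f_{\min} u$ into a two-sided sandwich of $\hat q_{(1-\alpha)_m}$ around $q_{(1-\alpha)_m}$, using the local density lower bound already established on the high-probability event $V$. Throughout, write $F := F_{S\mid\vartheta_n}$, $\hat F := \hat F^{(m)}_{S\mid\vartheta_n}$, $q := q_{(1-\alpha)_m}(S\mid\vartheta_n)$, $\hat q := \hat q_{(1-\alpha)_m}(S_m\mid\vartheta_n)$, and $p := (1-\alpha)_m$.

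First I would collect the structural facts available on $V$ under the standing setting. By Lemma~\ref{lem:quantile-m-neighbor} we have $\lvert q\rvert \le \beta/2$, and by Proposition~\ref{prop:quantile-neighbor} the density obeys $f_{S\mid\vartheta_n}(s) \ge 2f_{\min}$ whenever $\lvert s\rvert < \beta - \epsilon_n$; since $\epsilon_n < \beta/4$, this covers $[-3\beta/4,\,3\beta/4]$, which (using $u\le\beta/4$) in turn contains $[q-u,\,q+u]$, and moreover $q\pm u$ lie strictly inside $\{\lvert s\rvert < \beta-\epsilon_n\}$. Also $F$ is continuous (the law of $S$ given $\vartheta_n$ is absolutely continuous), so $F(q)=p$ exactly. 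Integrating the density bound over $[q,q+u]$ and over $[q-u,q]$ gives
\[
    F(q+u) \;\ge\; p + 2f_{\min}u \;\ge\; p + \varepsilon_0, \qquad F(q-u) \;\le\; p - 2f_{\min}u \;\le\; p - \varepsilon_0 .
\]

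For the upper bound, the left display together with $\lvert \hat F - F\rvert \le \varepsilon_0$ yields $\hat F(q+u) \ge F(q+u) - \varepsilon_0 \ge p$, and since $\hat q = \inf\{t : \hat F(t) \ge p\}$ this forces $\hat q \le q+u$. For the lower bound I would argue by contradiction: if $\hat q < q-u$, choose $t$ with $\hat q < t < q-u$ and $t > -(\beta-\epsilon_n)$, which is possible because $q-u \ge -3\beta/4 > -(\beta-\epsilon_n)$. On $[t,q-u]$ the density is $\ge 2f_{\min} > 0$, so $F(t) < F(q-u) \le p - \varepsilon_0$, hence $\hat F(t) \le F(t) + \varepsilon_0 < p$. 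But $\hat F$ is nondecreasing and $\hat F(\hat q) \ge p$ by definition of the empirical quantile, so $\hat F(t) \ge \hat F(\hat q) \ge p$, a contradiction. Thus $\hat q \ge q-u$, and combining the two bounds gives $\lvert \hat q - q\rvert \le u$.

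The main obstacle is the lower bound: the crude estimate only gives $\hat F(t) \le p$ for $t < q-u$, which is not strict and leaves open the degenerate possibility $\hat q < q-u$. Resolving this requires exploiting the \emph{strict} monotonicity of $F$ in a left-neighborhood of $q-u$ — guaranteed since $q-u$ is strictly interior to $\{\lvert s\rvert < \beta-\epsilon_n\}$ where the density is bounded below — and taking care that the auxiliary point $t$ can always be picked inside that good neighborhood even if $\hat q$ were to land far in the left tail. Everything else (the density integration and the upper bound) is routine.
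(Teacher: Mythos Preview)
Your argument is correct and follows the same route as the paper: use Lemma~\ref{lem:quantile-m-neighbor} to place $q_{(1-\alpha)_m}$ inside $[-\beta/2,\beta/2]$, invoke the $2f_{\min}$ density lower bound on $(-3\beta/4,3\beta/4)$ to push $F(q\pm u)$ at least $2f_{\min}u$ away from $p$, transfer to $\hat F$ via the sup-norm hypothesis, and sandwich $\hat q$. The paper's own proof is terser on the lower half---it just records $\hat F(q-u)\le (1-\alpha)_m$ and $\hat F(q+u)\ge (1-\alpha)_m$ and concludes $\hat q\in[q-u,q+u]$ without your contradiction step---so your strict-monotonicity patch for the edge case $\hat F(q-u)=p$ is more careful than what the paper writes, but it is the same approach, not a different one.
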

\begin{proof}
    For simplicity, in the proof we denote $q_{p}\left(S\mid \vartheta_{n}\right)$ by $q_{p}$. 
    By Lemma \ref{lem:quantile-m-neighbor}, for $u\in [0,\beta/4]$, $|q_{\left(1-\alpha\right)_{m}}-u|\le 3\beta/4$ and $|q_{\left(1-\alpha\right)_{m}}+u|\le 3\beta/4$. 
    Hence, in this case, 
    \begin{align*}
        F_{S \mid \vartheta_{n}}\left(q_{\left(1-\alpha\right)_{m}}-u\right) \le F_{S \mid \vartheta_{n}}\left(q_{\left(1-\alpha\right)_{m}}\right) - 2f_{\min} u = \left(1-\alpha\right)_{m} - 2f_{\min} u; \\
        F_{S \mid \vartheta_{n}}\left(q_{\left(1-\alpha\right)_{m}}+u\right) \ge F_{S \mid \vartheta_{n}}\left(q_{\left(1-\alpha\right)_{m}}\right) + 2f_{\min} u = \left(1-\alpha\right)_{m} + 2f_{\min} u.
    \end{align*}
    By assumption, 
    \begin{align*}
        \left|F_{S \mid \vartheta_{n}}\left(q_{\left(1-\alpha\right)_{m}}-u\right)-\hat{F}^{\left(m\right)}_{S \mid \vartheta_{n}}\left(q_{\left(1-\alpha\right)_{m}}-u\right)\right|\le 2f_{\min}u; \\
        \left|F_{S \mid \vartheta_{n}}\left(q_{\left(1-\alpha\right)_{m}}+u\right)-\hat{F}^{\left(m\right)}_{S \mid \vartheta_{n}}\left(q_{\left(1-\alpha\right)_{m}}+u\right)\right|\le 2f_{\min}u.
    \end{align*} 
    Then 
    \begin{align*}
        \hat{F}^{\left(m\right)}_{S \mid \vartheta_{n}}\left(q_{\left(1-\alpha\right)_{m}}-u\right) \le \left(1-\alpha\right)_{m}, \qquad 
        \hat{F}^{\left(m\right)}_{S \mid \vartheta_{n}}\left(q_{\left(1-\alpha\right)_{m}}+u\right) \ge \left(1-\alpha\right)_{m}.
    \end{align*} 
    Since $\hat{F}^{\left(m\right)}_{S \mid \vartheta_{n}}$ is non-decreasing, we have 
    \begin{align*}
        \hat{q}_{\left(1-\alpha\right)_{m}}\left(S_{m}\mid \vartheta_{n}\right) := \inf\{u' \in \mathcal{S}_{m}: \hat{F}^{\left(m\right)}_{S \mid \vartheta_{n}}\left(u'\right)\ge \left(1-\alpha\right)_{m}\} \in \left[q_{\left(1-\alpha\right)_{m}}-u, q_{\left(1-\alpha\right)_{m}}+u\right],
    \end{align*}
    where $\mathcal{S}_{m}$ is the set of scores of the calibration data. 
   
    Then, $|\hat{q}_{\left(1-\alpha\right)_{m}}\left(S_{m}\mid \vartheta_{n}\right) - q_{\left(1-\alpha\right)_{m}}\left(S\mid \vartheta_{n}\right)|\le u$. 
\end{proof}

\begin{lemma}[Dvoretzky–Kiefer–Wolfowitz Inequality \citep{dvoretzky1956asymptotic,massart1990tight}] \label{lem:DKW}
    Given a natural number $m$, let $X_{1}, \dots, X_{m}$ be real-valued i.i.d. random variables with c.d.f. $F\left(\cdot\right)$. Let $F^{\left(m\right)}$ denote the associated empirical distribution function defined by 
    \begin{align*}
        F^{\left(m\right)}\left(x\right) = \frac{1}{m} \sum_{j=1}^{m} \mathbbm{1}\{ X_{j} \le x \},  \qquad x\in\mathbb{R}.
    \end{align*}
    Then,  
    \begin{align*}
        \mathbb{P}\left[ \sup_{x\in\mathbb{R}} \left| F^{\left(m\right)}\left(x\right) - F\left(x\right) \right| > \varepsilon \right] &\le 2 e^{-2m\varepsilon^2}, \qquad \forall \varepsilon \ge 0.
    \end{align*}
\end{lemma}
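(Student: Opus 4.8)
The plan is to follow the classical two-step reduction — from a general c.d.f.\ to the uniform empirical process, then from the two-sided supremum to a one-sided first-passage quantity — and finally to invoke the sharp one-sided exponential bound of \citet{massart1990tight}. First I would reduce to uniform samples: writing $X_j = F^{-1}(U_j)$ with $U_1,\dots,U_m$ i.i.d.\ $\mathrm{Uniform}[0,1]$ and $F^{-1}$ the generalized inverse, one has $F^{(m)}(x) = G^{(m)}(F(x))$ in the appropriate sense, where $G^{(m)}$ is the empirical c.d.f.\ of the $U_j$. Since $\{F(x):x\in\mathbb{R}\}\subseteq[0,1]$ and atoms of $F$ can only collapse part of that range, one checks $\sup_{x\in\mathbb{R}}\lvert F^{(m)}(x)-F(x)\rvert \le \sup_{t\in[0,1]}\lvert G^{(m)}(t)-t\rvert$, so it suffices to prove the claimed bound for the uniform empirical process. (The verification that ties in $F$ only decrease the left-hand side, i.e.\ that the continuous/uniform case is the worst case, is short but should be stated explicitly.)

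Next I would peel off the two sides. Set $D_m^{+} := \sup_{t\in[0,1]}\bigl(G^{(m)}(t)-t\bigr)$ and $D_m^{-} := \sup_{t\in[0,1]}\bigl(t-G^{(m)}(t)\bigr)$, so that $\sup_{t}\lvert G^{(m)}(t)-t\rvert = \max\{D_m^{+},D_m^{-}\}$. The measure-preserving map $U_j\mapsto 1-U_j$ sends $D_m^{+}$ to $D_m^{-}$ in distribution, so a union bound gives $\mathbb{P}[\max\{D_m^{+},D_m^{-}\}>\varepsilon]\le 2\,\mathbb{P}[D_m^{+}>\varepsilon]$. Hence it remains to establish the one-sided estimate $\mathbb{P}[D_m^{+}>\varepsilon]\le e^{-2m\varepsilon^{2}}$ — this is where both the factor $2$ in front and the constant $2$ in the exponent originate.

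For the one-sided estimate I would pass to order statistics: because $G^{(m)}$ increases only at $U_{(1)}\le\cdots\le U_{(m)}$, the event $\{D_m^{+}\ge\varepsilon\}$ equals $\{\exists\,k\in[m]:\ U_{(k)}\le k/m-\varepsilon\}$, a ballot/first-passage event for the uniform order statistics. Two routes close the gap: (i) use the exact Birnbaum--Tingey expansion expressing $\mathbb{P}[D_m^{+}\ge\varepsilon]$ as a finite sum over $k$ and bound that sum by $e^{-2m\varepsilon^{2}}$; or (ii) run an induction on $m$ using a recursion for the first-passage probability. Either way, this tight one-sided bound is exactly the content of \citet{massart1990tight}, building on the original argument of \citet{dvoretzky1956asymptotic}, and I would invoke it here rather than reproduce it; combined with the previous paragraph it yields the stated inequality for all $\varepsilon\ge 0$ (the bound being vacuous for small $\varepsilon$).

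The main obstacle is precisely this last step: the reductions above are routine, but obtaining the \emph{sharp} multiplicative constant $2$ rather than an unspecified $C$ requires Massart's delicate combinatorial/analytic argument. For completeness I would note the soft alternative that already suffices for all downstream uses in this paper: $Z_m:=\sup_{x}\lvert F^{(m)}(x)-F(x)\rvert$ changes by at most $1/m$ when a single sample is altered, so the bounded-differences (McDiarmid) inequality gives $\mathbb{P}[Z_m\ge\mathbb{E} Z_m+t]\le e^{-2mt^{2}}$, while $\mathbb{E} Z_m = O(m^{-1/2})$ by symmetrization together with a VC/chaining bound; this produces $\mathbb{P}[Z_m>\varepsilon]\le e^{-2m(\varepsilon-O(m^{-1/2}))^{2}}$, which is weaker than the clean form stated in Lemma~\ref{lem:DKW} but has the same exponential rate. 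Given the classical status of the result, we simply cite it.
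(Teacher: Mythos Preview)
Your sketch of the classical argument (reduction to the uniform empirical process, symmetrization to reduce to the one-sided statistic, then Massart's sharp bound) is correct and is indeed the standard route to the constant $2$. However, there is nothing to compare against: the paper does not prove Lemma~\ref{lem:DKW} at all. It is stated as a named classical result with citations to \citet{dvoretzky1956asymptotic} and \citet{massart1990tight} and then immediately applied. So your final line --- ``Given the classical status of the result, we simply cite it'' --- is exactly what the paper does, and everything preceding it in your proposal, while accurate, goes beyond what the paper attempts.
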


By the Dvoretzky–Kiefer–Wolfowitz Inequality (Lemma \ref{lem:DKW}), 
$$\mathbb{P} \left[\sup_{s}\left|F_{S \mid \vartheta_{n}}\left(s\right)-\hat{F}^{\left(m\right)}_{S \mid \vartheta_{n}}\left(s\right)\right|\ge 2f_{\min} u \right] \le 2 \exp{\left(-8 m f_{\min}^{2} u^2\right)}.$$ 
Thus, by Lemma \ref{lem:emp-quantile-m-neighbor}, given that the event $V$ occurs, 
\begin{align*}
    \mathbb{P}\left[ |\hat{q}_{\left(1-\alpha\right)_{m}}\left(S_{m}\mid \vartheta_{n}\right) - q_{\left(1-\alpha\right)_{m}}\left(S\mid \vartheta_{n}\right)|\ge u \;\Big|\; V \right] 
    \le 2 \exp{\left(-8 m f_{\min}^{2} u^2\right)}, \quad u\in [0, \beta/4].
\end{align*}
Specifically, 
\begin{align*}
    \mathbb{P}\left[ |\hat{q}_{\left(1-\alpha\right)_{m}}\left(S_{m}\mid \vartheta_{n}\right) - q_{\left(1-\alpha\right)_{m}}\left(S\mid \vartheta_{n}\right)|\ge \beta/4 \;\Big|\; V \right] 
    \le 2 \exp{\left(-8 m f_{\min}^{2} (\beta/4)^2\right)}.
\end{align*}
Then, for any $u>\beta/4$, 
\begin{align*}
    \mathbb{P}\left[ |\hat{q}_{\left(1-\alpha\right)_{m}}\left(S_{m}\mid \vartheta_{n}\right) - q_{\left(1-\alpha\right)_{m}}\left(S\mid \vartheta_{n}\right)|\ge u \;\Big|\; V \right] 
    \le 2 \exp{\left(-8 m f_{\min}^{2} (\beta/4)^2\right)}.
\end{align*}
Since $|S|\le R$, $|\hat{q}_{\left(1-\alpha\right)_{m}}\left(S_{m}\mid \vartheta_{n}\right) - q_{\left(1-\alpha\right)_{m}}\left(S\mid \vartheta_{n}\right)| \le 2R$. 
By the layer cake representation of the expectation of a non-negative random variable $Z$, which is $\mathbb{E}[Z] = \int_{0}^{\infty} \mathbb{P}[Z\ge u] \ du$, 
\begin{align*}
    &\mathbb{E}_{\vartheta_{n}, \mathcal{D}_{\mathrm{cal}}}\left[ |\hat{q}_{\left(1-\alpha\right)_{m}}\left(S_{m}\mid \vartheta_{n}\right) - q_{\left(1-\alpha\right)_{m}}\left(S\mid \vartheta_{n}\right) | \;\Big|\; V  \right] \\
    &= \int_{0}^{2R}\mathbb{P}\left[ |\hat{q}_{\left(1-\alpha\right)_{m}}\left(S_{m}\mid \vartheta_{n}\right) - q_{\left(1-\alpha\right)_{m}}\left(S\mid \vartheta_{n}\right)| \ge u  \;\Big|\; V  \right] du \\
    &\le \int_{0}^{\beta/4} 2 \exp{\left(-8 m f_{\min}^{2} u^2\right)} du 
    + \int_{\beta/4}^{2R} 2 \exp{\left(-8 m f_{\min}^{2} (\beta/4)^2\right)} du \\
    &\le 2 \int_{0}^{\infty} \exp{\left(-8 m f_{\min}^{2} u^2\right)} du 
    + 4R \exp{\left(-8 f_{\min}^{2} (\beta/4)^2 m \right)} \\
    &= \frac{\sqrt{\pi}}{2 f_{\min}\sqrt{2m}} + 4R \exp{\left(-\frac{1}{2} f_{\min}^{2} \beta^2 m \right)}.
\end{align*}
Therefore, we have 
\begin{align*}
    &\mathbb{E}_{\vartheta_{n}, \mathcal{D}_{\mathrm{cal}}}\left[ |\hat{q}_{\left(1-\alpha\right)_{m}}\left(S_{m}\mid \vartheta_{n}\right) - q_{\left(1-\alpha\right)_{m}}\left(S\mid \vartheta_{n}\right)| \right] \\
    & \le \mathbb{P}\left[V\right] \cdot \mathbb{E}_{\vartheta_{n}}\left[ |\hat{q}_{\left(1-\alpha\right)_{m}}\left(S_{m}\mid \vartheta_{n}\right) - q_{\left(1-\alpha\right)_{m}}\left(S\mid \vartheta_{n}\right)| \;\Big|\; V  \right] + \mathbb{P}\left[V^{c}\right]\cdot 2 R\\
    & \le \frac{\sqrt{\pi}}{2 f_{\min}\sqrt{2m}} + 4R \exp{\left(-\frac{1}{2} f_{\min}^{2} \beta^2 m \right)} + 2 R \delta.
\end{align*}

Picking $\delta = \frac{33 A B^{2}}{\beta^2 n}$ completes the proof of Proposition \ref{prop:emp-quantile-exp}. 

\subsection{Proof of Theorem \ref{thm:cqr-main}} \label{sec:prf-thm-main-cqr}

\thmMainQuantile*

\begin{proof}
    By the definition of the prediction set (\ref{eq:cqr-prediction-set}),
    \begin{align*}
        \left| \mathcal{C}(x) \right|
        &= \max \left\{ 0, \; t_{1-\alpha/2}\left(x;\bar{\theta}_{n}\right)-t_{\alpha/2}\left(x;\underline{\theta}_{n}\right) + 2 \hat{q}_{(1-\alpha)_{m}} \right\} \\
        &\le \left|t_{1-\alpha/2}\left(x;\bar{\theta}_{n}\right)-t_{\alpha/2}\left(x;\underline{\theta}_{n}\right) + 2 \hat{q}_{(1-\alpha)_{m}}\right|.
    \end{align*}

    We further bound the right hand side by
    \begin{align*}
        & \left|t_{1-\alpha/2}\left(x;\bar{\theta}_{n}\right)-t_{\alpha/2}\left(x;\underline{\theta}_{n}\right) + 2 \hat{q}_{(1-\alpha)_{m}}\right| \\
         &= \left|t_{1-\alpha/2}\left(x;\bar{\theta}_{n}\right) - t_{1-\alpha/2}\left(x;\bar{\theta}^{*}\right) + t_{1-\alpha/2}\left(x;\bar{\theta}^{*}\right) - t_{\alpha/2}\left(x;\underline{\theta}_{n}\right) + t_{\alpha/2}\left(x;\underline{\theta}^{*}\right) - t_{\alpha/2}\left(x;\underline{\theta}^{*}\right) \right.\\
         & \qquad \left.+ 2 \hat{q}_{(1-\alpha)_{m}}\right| \\
        &\le \left|t_{1-\alpha/2}\left(x;\bar{\theta}_{n}\right) - t_{1-\alpha/2}\left(x;\bar{\theta}^{*}\right) \right| 
        + \left| t_{\alpha/2}\left(x;\underline{\theta}_{n}\right) - t_{\alpha/2}\left(x;\underline{\theta}^{*}\right) \right| + 2 \left| \hat{q}_{(1-\alpha)_{m}}\right| \\
        & \qquad + \left| t_{1-\alpha/2}\left(x;\bar{\theta}^{*}\right) - t_{\alpha/2}\left(x;\underline{\theta}^{*}\right) \right| \\
        &= \left|t_{1-\alpha/2}\left(x;\bar{\theta}_{n}\right) - t_{1-\alpha/2}\left(x;\bar{\theta}^{*}\right) \right| 
        + \left| t_{\alpha/2}\left(x;\underline{\theta}_{n}\right) - t_{\alpha/2}\left(x;\underline{\theta}^{*}\right) \right| + 2 \left| \hat{q}_{(1-\alpha)_{m}}\right| \\
        & \qquad + \left( t_{1-\alpha/2}\left(x;\bar{\theta}^{*}\right) - t_{\alpha/2}\left(x;\underline{\theta}^{*}\right) \right),
    \end{align*}
    where the last equality follows because $$t_{1-\alpha/2}\left(x;\bar{\theta}^{*}\right) = q_{1-\alpha/2}\left(Y \mid X\right) \ge q_{\alpha/2}\left(Y \mid X\right) = t_{\alpha/2}\left(x;\underline{\theta}^{*}\right).$$
    Hence, 
    \begin{align*}
        & \left| \mathcal{C}(X) \right| - \left( t_{1-\alpha/2}\left(x;\bar{\theta}^{*}\right) - t_{\alpha/2}\left(x;\underline{\theta}^{*}\right) \right) \\
        & \qquad \le \left|t_{1-\alpha/2}\left(x;\bar{\theta}_{n}\right) - t_{1-\alpha/2}\left(x;\bar{\theta}^{*}\right) \right| 
        + \left| t_{\alpha/2}\left(x;\underline{\theta}_{n}\right) - t_{\alpha/2}\left(x;\underline{\theta}^{*}\right) \right| + 2 \left| \hat{q}_{(1-\alpha)_{m}}\right|.
    \end{align*}

    We also have  
    \begin{align*}
        & - \left( \left| \mathcal{C}(X) \right| - \left( t_{1-\alpha/2}\left(x;\bar{\theta}^{*}\right) - t_{\alpha/2}\left(x;\underline{\theta}^{*}\right) \right) \right) \\
        & = \left( t_{1-\alpha/2}\left(x;\bar{\theta}^{*}\right) - t_{\alpha/2}\left(x;\underline{\theta}^{*}\right) \right) - \max \left\{ 0, \;  t_{1-\alpha/2}\left(x;\bar{\theta}_{n}\right)-t_{\alpha/2}\left(x;\underline{\theta}_{n}\right) + 2 \hat{q}_{(1-\alpha)_{m}} \right\} \\
        & \le t_{1-\alpha/2}\left(x;\bar{\theta}^{*}\right) - t_{\alpha/2}\left(x;\underline{\theta}^{*}\right) - t_{1-\alpha/2}\left(x;\bar{\theta}_{n}\right) + t_{\alpha/2}\left(x;\underline{\theta}_{n}\right) - 2 \hat{q}_{(1-\alpha)_{m}} \\
        & \le  \left|t_{1-\alpha/2}\left(x;\bar{\theta}_{n}\right) - t_{1-\alpha/2}\left(x;\bar{\theta}^{*}\right) \right| 
        + \left| t_{\alpha/2}\left(x;\underline{\theta}_{n}\right) - t_{\alpha/2}\left(x;\underline{\theta}^{*}\right) \right| + 2 \left| \hat{q}_{(1-\alpha)_{m}}\right|.
    \end{align*}
    Therefore,
    \begin{align*}
         & \left| \left| \mathcal{C}(X) \right| - \left( t_{1-\alpha/2}\left(x;\bar{\theta}^{*}\right) - t_{\alpha/2}\left(x;\underline{\theta}^{*}\right) \right) \right| \\
         & \qquad \le \left|t_{1-\alpha/2}\left(x;\bar{\theta}_{n}\right) - t_{1-\alpha/2}\left(x;\bar{\theta}^{*}\right) \right| 
        + \left| t_{\alpha/2}\left(x;\underline{\theta}_{n}\right) - t_{\alpha/2}\left(x;\underline{\theta}^{*}\right) \right| + 2 \left| \hat{q}_{(1-\alpha)_{m}}\right|.
    \end{align*}
        
    Hence, for test sample $(X,Y)$, 
    {\small
    \begin{align*}
        & \mathbb{E}_{X,\vartheta_{n},\mathcal{D}_{\text{cal}}}\left[ \left| \left| \mathcal{C}(X) \right| 
        -  t_{1-\alpha/2}\left(X;\bar{\theta}^{*}\right) - t_{\alpha/2}\left(X;\underline{\theta}^{*}\right) \right| \right] \\
        & \le \mathbb{E}_{X,\vartheta_{n}}\left[ \left|t_{1-\alpha/2}\left(X;\bar{\theta}_{n}\right) - t_{1-\alpha/2}\left(X;\bar{\theta}^{*}\right) \right|  \right] 
        + \mathbb{E}_{X,\vartheta_{n}}\left[ \left| t_{\alpha/2}\left(X;\underline{\theta}_{n}\right) - t_{\alpha/2}\left(X;\underline{\theta}^{*}\right) \right|  \right] \\
        & \qquad + 2 \mathbb{E}_{\vartheta_{n}, \mathcal{D}_{\text{cal}}}\left[ \left| \hat{q}_{(1-\alpha)_{m}}\left( S_{m} \mid \vartheta_{n} \right)\right| \right] \\
        & \le \mathbb{E}_{X,\vartheta_{n}}\left[ \left|t_{1-\alpha/2}\left(X;\bar{\theta}_{n}\right) - t_{1-\alpha/2}\left(X;\bar{\theta}^{*}\right) \right|  \right] 
        + \mathbb{E}_{X,\vartheta_{n}}\left[ \left| t_{\alpha/2}\left(X;\underline{\theta}_{n}\right) - t_{\alpha/2}\left(X;\underline{\theta}^{*}\right) \right|  \right] \\ 
        & \qquad 
        + 2 \mathbb{E}_{\vartheta_{n}}\left[ \left| q_{1-\alpha}\left( S \mid \vartheta_{n} \right)\right| \right]  
        + 2 \mathbb{E}_{\vartheta_{n},\mathcal{D}_{\text{cal}}}\left[ \left| q_{1-\alpha}\left( S \mid \vartheta_{n} \right) -\hat{q}_{(1-\alpha)_{m}}\left( S_{m} \mid \vartheta_{n} \right)\right| \right]  \\
        & \le \mathbb{E}_{X,\vartheta_{n}}\left[ \left|t_{1-\alpha/2}\left(X;\bar{\theta}_{n}\right) - t_{1-\alpha/2}\left(X;\bar{\theta}^{*}\right) \right|  \right] 
        + \mathbb{E}_{X,\vartheta_{n}}\left[ \left| t_{\alpha/2}\left(X;\underline{\theta}_{n}\right) - t_{\alpha/2}\left(X;\underline{\theta}^{*}\right) \right|  \right] \\
        & \qquad
        + 2 \mathbb{E}_{\vartheta_{n}}\left[ \left| q_{1-\alpha}\left( S \mid \vartheta_{n} \right)\right| \right]   
        + 2 \mathbb{E}_{\vartheta_{n}}\left[ \left| q_{1-\alpha}\left( S \mid \vartheta_{n} \right) -q_{(1-\alpha)_{m}}\left( S \mid \vartheta_{n} \right)\right| \right] \\
        & \qquad
        + 2 \mathbb{E}_{\vartheta_{n},\mathcal{D}_{\text{cal}}}\left[ \left| q_{(1-\alpha)_{m}}\left( S \mid \vartheta_{n} \right) -\hat{q}_{(1-\alpha)_{m}}\left( S_{m} \mid \vartheta_{n} \right)\right| \right] .
    \end{align*}
    }
    By Theorem \ref{thm:sgd-quantile},
    \begin{align*}
        \mathbb{E}_{X, \theta_{n}}\left[ \left| t_{\gamma}\left(X; \theta_{n}\left(\gamma\right)\right) - t_{\gamma}\left(X; \theta^*\left(\gamma\right)\right) \right|  \right] 
        &\le \sqrt{\mathbb{E}_{X, \theta_{n}}\left[ \left( t_{\gamma}\left(X; \theta_{n}\left(\gamma\right)\right) - t_{\gamma}\left(X; \theta^*\left(\gamma\right)\right) \right)^2  \right]}\\
        &\le \frac{2\lambda_{\max} \sqrt{f_{\max} d}}{\lambda_{\min} f_{\min} \sqrt{\lambda_{\min} n}}.
    \end{align*}
    By Proposition \ref{prop:abs-qs-upper},\ref{prop:quantile-diff-exp},\ref{prop:emp-quantile-exp}, 
    \begin{align*} \label{eq:cqr-full-bound}
        & \mathbb{E}_{X,\vartheta_{n},\mathcal{D}_{\text{cal}}}\left[ \left| \left| \mathcal{C}(X) \right| 
        -  t_{1-\alpha/2}\left(X;\bar{\theta}^{*}\right) - t_{\alpha/2}\left(X;\underline{\theta}^{*}\right) \right| \right] \\
        & \le \left( \frac{4\lambda_{\max} \sqrt{f_{\max} d}}{\lambda_{\min} f_{\min} \sqrt{\lambda_{\min} }}
        + \frac{2 B \ \lambda_{\max}\sqrt{2f_{\max} d}}{\lambda_{\min}^2 f_{\min}} \right) \sqrt{ \frac{1}{n} }
        + \frac{1}{f_{\min} m} \\
        & \qquad + \frac{\sqrt{\pi}}{2 f_{\min}\sqrt{2m}} + 4R \exp{\left(-\frac{1}{2} f_{\min}^{2} \beta^2 m \right)} + \frac{66 A B^2 R}{\beta^2 n} \\
        &= \left( \frac{4\lambda_{\max} \sqrt{f_{\max} d}}{\lambda_{\min} f_{\min} \sqrt{\lambda_{\min} }}
        + \frac{2 B \ \lambda_{\max}\sqrt{2f_{\max} d}}{\lambda_{\min}^2 f_{\min}} \right) \sqrt{ \frac{1}{n} }
        + \frac{\sqrt{\pi}}{2 f_{\min}\sqrt{2}} \sqrt{\frac{1}{m}}
        + \frac{1}{f_{\min} m} \\
        & \qquad + 4R \exp{\left(-\frac{\min\{\alpha^2, (1-\alpha)^2\} f_{\min}^{2}}{8 f_{\max}^2} m \right)} + \frac{1056 \lambda_{\max}^2 f_{\max}^3 B^2 R}{\min\{\alpha^2, (1-\alpha)^2\} \lambda_{\min}^4 f_{\min}^2 n} . \numberthis
    \end{align*}
    This completes the proof of Theorem \ref{thm:cqr-main}. 
\end{proof}

\section{Proofs of Results in CMR} \label{sec:prf-cmr}
To prove Theorem \ref{thm:cmr-main}, the goal is to upper bound 
\begin{align*}
    & \mathbb{E}_{X, \check{\theta}_{n},\mathcal{D}_{\mathrm{cal}}}\left[ \left| 2 \ \hat{q}_{(1-\alpha)_{m}}\left(S\mid \check{\theta}_{n}\right) - \left( q_{1-\alpha/2}\left(Y \mid X\right) - q_{\alpha/2}\left(Y \mid X\right) \right) \right| \right] \\
    &= 2 \ \mathbb{E}_{X, \check{\theta}_{n},\mathcal{D}_{\mathrm{cal}}}\left[ \left| \hat{q}_{(1-\alpha)_{m}}\left(S\mid \check{\theta}_{n}\right) - \left( q_{1/2}\left(Y \mid X\right) - q_{\alpha/2}\left(Y \mid X\right) \right) \right| \right].
\end{align*}
Further decompose it, and we have
\begin{align*}
    & \left| \hat{q}_{(1-\alpha)_{m}}\left(S\mid \check{\theta}_{n}\right) - \left( q_{1/2}\left(Y \mid X\right) - q_{\alpha/2}\left(Y \mid X\right) \right) \right|  \\
    & = \left| \hat{q}_{(1-\alpha)_{m}}\left(S\mid \check{\theta}_{n}\right) - q_{(1-\alpha)_{m}}\left(S\mid \check{\theta}_{n}\right) + q_{(1-\alpha)_{m}}\left(S\mid \check{\theta}_{n}\right) - q_{1-\alpha}\left(S\mid \check{\theta}_{n}\right) \right. \\ 
    &\qquad  \left. + q_{1-\alpha}\left(S\mid \check{\theta}_{n}\right) - \left( q_{1/2}\left(Y \mid X\right) - q_{\alpha/2}\left(Y \mid X\right) \right) \right| \\
    &\le \left| \hat{q}_{(1-\alpha)_{m}}\left(S\mid \check{\theta}_{n}\right) - q_{(1-\alpha)_{m}}\left(S\mid \check{\theta}_{n}\right) \right| + \left| q_{(1-\alpha)_{m}}\left(S\mid \check{\theta}_{n}\right) - q_{1-\alpha}\left(S\mid \check{\theta}_{n}\right) \right| \\ 
    &\qquad + \left| q_{1-\alpha}\left(S\mid \check{\theta}_{n}\right) - \left( q_{1/2}\left(Y \mid X\right) - q_{\alpha/2}\left(Y \mid X\right) \right) \right|.
\end{align*}
Thus, the expectation is decomposed into three parts as follows, and we will upper bound each of them in Proposition \ref{prop:cmr-emp-quantile}, \ref{prop:cmr-quantile-diff}, and \ref{prop:cmr-abs-qs}:
\begin{align*} \label{eq:cmr-full-bound}
    & \mathbb{E}_{X, \check{\theta}_{n},\mathcal{D}_{\mathrm{cal}}}\left[ \left| 2 \ \hat{q}_{(1-\alpha)_{m}}\left(S\mid \check{\theta}_{n}\right) - \left( q_{1-\alpha/2}\left(Y \mid X\right) - q_{\alpha/2}\left(Y \mid X\right) \right) \right| \right] \\
    &= 2 \ \mathbb{E}_{\check{\theta}_{n},\mathcal{D}_{\mathrm{cal}}}\left[ \left| \hat{q}_{(1-\alpha)_{m}}\left(S\mid \check{\theta}_{n}\right) - q_{(1-\alpha)_{m}}\left(S\mid \check{\theta}_{n}\right) \right| \right] \\
    &\qquad + 2\ \mathbb{E}_{\check{\theta}_{n}}\left[ \left| q_{(1-\alpha)_{m}}\left(S\mid \check{\theta}_{n}\right) - q_{1-\alpha}\left(S\mid \check{\theta}_{n}\right) \right| \right] \\
    &\qquad + 2\ \mathbb{E}_{X, \check{\theta}_{n}}\left[ \left|  q_{1-\alpha}\left(S\mid \check{\theta}_{n}\right)  - \left(q_{1/2}\left(Y \mid X\right) - q_{\alpha/2}\left(Y \mid X\right) \right) \right|  \right] \\
    &\le \frac{\sqrt{\pi}}{f_{\min}\sqrt{2m}} + 8R \exp{\left(-\frac{f_{\min}^{2} \min \{\alpha^2,(1-\alpha)^2\}}{8 f_{\max}^2}  m \right)} + \frac{2056 R \lambda_{\max}^2 f_{\max}^3 B^2 d}{\lambda_{\min}^4 f_{\min}^2 \min\{\alpha^2, (1-\alpha)^2\} n} \\
    &\qquad + \frac{2}{f_{\min} m}  + \frac{4 B \ \lambda_{\max}\sqrt{f_{\max} d}}{\lambda_{\min}^2 f_{\min}} \sqrt{ \frac{1}{  n} }. \numberthis
\end{align*}

To proceed, we define some random variables for simplicity. 
\begin{align}
    \Delta\left(X, \check{\theta}_{n}\right) &:= \left|t_{1/2}\left(X; \check{\theta}_{n}\right) - t_{1/2}\left(X; \check{\theta}^{*}\right)\right| \ge 0; \label{eq:def-Delta-mae}\\
    S^{*}\left(X,Y\right) &:= \left| q_{1/2}(Y \mid X) - Y \right|; \\
    M\left(\check{\theta}_{n}\right) &:= \left\| \left(\check{\theta}_{n} - \check{\theta}^{*} \right) \right\|_{2}.
\end{align}

\subsection{Proof of Proposition \ref{prop:cmr-abs-qs}} \label{sec:prf-cmr-abs-qs}
\begin{proposition} \label{prop:cmr-abs-qs}
    In CMR, suppose Assumption \ref{assum:sym-cmr} holds, we have
    \begin{align} \label{eq:cmr-q-zeta-M}
        \left|q_{1-\alpha}\left(S\mid X, \check{\theta}_{n}\right) - \zeta \right| \le B \cdot M\left( \check{\theta}_{n} \right).
    \end{align}
    If Assumptions \ref{assum:well-spec-cmr},\ref{assum:bounded-cov},\ref{assum:reg-pdf} further hold, then
    \begin{align} \label{eq:cmr-q-abs}
        \mathbb{E}_{X,\check{\theta}_{n}}\left[ \left| q_{1-\alpha}\left(S\mid \check{\theta}_{n}\right) - \left(q_{1/2}\left(Y \mid X\right) - q_{\alpha/2}\left(Y \mid X\right)\right) \right| \right] 
        \le
        \frac{2 B \ \lambda_{\max}\sqrt{f_{\max} d}}{\lambda_{\min}^2 f_{\min}} \sqrt{ \frac{1}{  n} }.
    \end{align} 
\end{proposition}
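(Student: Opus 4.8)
The plan is to specialize the CQR argument (Lemma~\ref{lem:conditional-S-Delta} and Proposition~\ref{prop:abs-qs-upper}) to the single-regressor, symmetric setting, with Assumption~\ref{assum:sym-cmr} playing the role that the threshold value $0$ played for CQR: here the relevant reference value is the common half-width $\zeta$. First I would sandwich the CMR score around the oracle score. Using well-specification (Assumption~\ref{assum:well-spec-cmr}), $t_{1/2}(X;\check{\theta}^{*}) = q_{1/2}(Y\mid X)$, so the triangle inequality gives $\bigl| S(X,Y;\check{\theta}_{n}) - S^{*}(X,Y) \bigr| \le \bigl| t_{1/2}(X;\check{\theta}_{n}) - q_{1/2}(Y\mid X) \bigr| = \Delta(X,\check{\theta}_{n})$, and $\Delta(X,\check{\theta}_{n}) = \bigl|(\check{\theta}_{n}-\check{\theta}^{*})^{\top}X\bigr| \le \|\check{\theta}_{n}-\check{\theta}^{*}\|_{2}\,\|X\|_{2} \le B\,M(\check{\theta}_{n})$ by~(\ref{eq:l2-bound-X}). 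Hence, conditionally on $(X,\check{\theta}_{n})$, we have $S^{*}-\Delta(X,\check{\theta}_{n}) \le S \le S^{*}+\Delta(X,\check{\theta}_{n})$ with $\Delta(X,\check{\theta}_{n})$ deterministic.

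Next I would pin down the oracle conditional quantile. Since $S^{*}(X,Y) = |q_{1/2}(Y\mid X) - Y|$ and $F_{Y\mid X}$ is continuous (Assumption~\ref{assum:reg-pdf}), $\mathbb{P}[S^{*}\le\zeta\mid X] = F_{Y\mid X}(q_{1/2}(Y\mid X)+\zeta) - F_{Y\mid X}(q_{1/2}(Y\mid X)-\zeta) = F_{Y\mid X}(q_{1-\alpha/2}(Y\mid X)) - F_{Y\mid X}(q_{\alpha/2}(Y\mid X)) = 1-\alpha$, invoking Assumption~\ref{assum:sym-cmr} and $F_{Y\mid X}(q_{\gamma}) = \gamma$. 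Feeding the sandwich through: $\{S^{*}\le\zeta\}\subseteq\{S\le\zeta+\Delta(X,\check{\theta}_{n})\}$ yields $\mathbb{P}[S\le\zeta+\Delta(X,\check{\theta}_{n})\mid X,\check{\theta}_{n}]\ge 1-\alpha$, so $q_{1-\alpha}(S\mid X,\check{\theta}_{n})\le\zeta+\Delta(X,\check{\theta}_{n})$; symmetrically $\{S\le\zeta-\Delta(X,\check{\theta}_{n})\}\subseteq\{S^{*}\le\zeta\}$ gives $q_{1-\alpha}(S\mid X,\check{\theta}_{n})\ge\zeta-\Delta(X,\check{\theta}_{n})$, the latter by the same infimum-of-the-quantile bookkeeping as in Lemma~\ref{lem:conditional-S-Delta}. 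Combined with $\Delta(X,\check{\theta}_{n})\le B\,M(\check{\theta}_{n})$ this is exactly~(\ref{eq:cmr-q-zeta-M}).

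For~(\ref{eq:cmr-q-abs}) I would de-condition on $X$ exactly as in the proof of Proposition~\ref{prop:abs-qs-upper}: from the previous step, for every $x$ one has $\mathbb{P}[S\le\zeta+B\,M(\check{\theta}_{n})\mid X=x,\check{\theta}_{n}]\ge 1-\alpha$ and $\mathbb{P}[S\le u\mid X=x,\check{\theta}_{n}]<1-\alpha$ for all $u<\zeta-B\,M(\check{\theta}_{n})$; averaging over $x$ via the tower property transfers both statements to the $X$-marginal, giving $\bigl|q_{1-\alpha}(S\mid\check{\theta}_{n})-\zeta\bigr|\le B\,M(\check{\theta}_{n})$. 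Finally, Assumption~\ref{assum:sym-cmr} makes $q_{1/2}(Y\mid X)-q_{\alpha/2}(Y\mid X)=\zeta$ constant in $X$, so the left-hand side of~(\ref{eq:cmr-q-abs}) equals $\mathbb{E}_{\check{\theta}_{n}}\bigl[|q_{1-\alpha}(S\mid\check{\theta}_{n})-\zeta|\bigr]\le B\,\mathbb{E}_{\check{\theta}_{n}}[M(\check{\theta}_{n})]\le B\sqrt{\mathbb{E}_{\check{\theta}_{n}}[\|\check{\theta}_{n}-\check{\theta}^{*}\|_{2}^{2}]}$ by Jensen's inequality, and Theorem~\ref{thm:sgd-quantile} with $\gamma=1/2$ delivers the stated bound $\tfrac{2B\,\lambda_{\max}\sqrt{f_{\max}d}}{\lambda_{\min}^{2}f_{\min}}\sqrt{1/n}$.

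The substantive content is light; the only delicate point is the same as in the CQR analysis, namely the one-sided quantile inequality $q_{1-\alpha}(S\mid X,\check{\theta}_{n})\ge\zeta-\Delta(X,\check{\theta}_{n})$ extracted from $\mathbb{P}[S\le\zeta-\Delta(X,\check{\theta}_{n})\mid X,\check{\theta}_{n}]\le 1-\alpha$, together with the strict-inequality form needed so that the $X$-average stays strictly below $1-\alpha$ in the de-conditioning step. Both are handled verbatim as in Lemma~\ref{lem:conditional-S-Delta} and Proposition~\ref{prop:abs-qs-upper}, so no genuinely new obstacle arises: Proposition~\ref{prop:cmr-abs-qs} is essentially the CMR specialization of those results, with $\zeta$ in place of $0$.
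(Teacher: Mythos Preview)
Your proposal is correct and follows essentially the same approach as the paper: sandwich $S$ around $S^{*}$ via $|S-S^{*}|\le\Delta(X,\check{\theta}_{n})\le B\,M(\check{\theta}_{n})$, identify $q_{1-\alpha}(S^{*}\mid X)=\zeta$ using Assumption~\ref{assum:sym-cmr} and continuity of $F_{Y\mid X}$, transfer the bound to $q_{1-\alpha}(S\mid X,\check{\theta}_{n})$, de-condition on $X$ by the tower property, and finish with Jensen's inequality and Theorem~\ref{thm:sgd-quantile}. Your remark that this is the CMR specialization of Lemma~\ref{lem:conditional-S-Delta} and Proposition~\ref{prop:abs-qs-upper} with $\zeta$ replacing $0$ is exactly how the paper proceeds.
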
 

\begin{proof}
    Notice that 
    \begin{align*}
        S\left(X,Y; \check{\theta}_{n} \right) 
        &:= \left| t_{1/2}\left(X; \check{\theta}_{n}\right) - Y \right| \\
        &\le \left| q_{1/2}(Y \mid X) - Y \right| + \left| t_{1/2}\left(X; \check{\theta}_{n}\right) - q_{1/2}(Y \mid X) \right| \\
        &= S^{*}\left(X,Y\right) + \Delta\left(X, \check{\theta}_{n}\right).
    \end{align*}
    Similarly, $S\left(X,Y; \check{\theta}_{n} \right)  \ge S^{*}\left(X,Y\right) - \Delta\left(X, \check{\theta}_{n}\right)$. Hence, 
    \begin{align*}
        \left|S\left(X,Y; \check{\theta}_{n} \right) - S^{*}\left(X,Y\right) \right| 
        \le \Delta\left(X, \check{\theta}_{n}\right) 
        \le \|X\|_{2} \left\| \left(\check{\theta}_{n} - \check{\theta}^{*} \right) \right\|_{2} 
        \le B \cdot \left\| \left(\check{\theta}_{n} - \check{\theta}^{*} \right) \right\|_{2}.
    \end{align*} 
    
    Now we show that $q_{1-\alpha}\left(S^{*}\mid X\right) = q_{1/2}(Y \mid X) - q_{\alpha/2}(Y \mid X)$. 
    Note that given $X$, 
    \begin{align*}
        & S^{*}\left(X,Y\right) \le q_{1/2}(Y \mid X) - q_{\alpha/2}(Y \mid X)  \\
        \Longleftrightarrow \; 
        &  - \left(q_{1/2}(Y \mid X) - q_{\alpha/2}(Y \mid X) \right) \le Y - q_{1/2}(Y \mid X) \le q_{1/2}(Y \mid X) - q_{\alpha/2}(Y \mid X) \\
        \Longleftrightarrow \; 
        &  q_{\alpha/2}(Y \mid X) \le Y \le q_{1-\alpha/2}(Y \mid X),
    \end{align*} 
    where the last step is from Assumption \ref{assum:sym-cmr}. 
    Since $F_{Y \mid X}$ is continuous, $$\mathbb{P}\left[q_{\alpha/2}\left(Y \mid X\right)\le Y\le q_{1-\alpha/2}\left(Y \mid X\right) \mid X\right] = 1-\alpha.$$
    Hence, 
    $$\mathbb{P}[S^{*}\left(X,Y\right)\le q_{1/2}(Y \mid X) - q_{\alpha/2}(Y \mid X) | X] = 1-\alpha.$$
    Let $q_{1-\alpha}\left(S^{*}\mid X\right)$ be the $\left(1-\alpha\right)$-quantile of $S^{*}$ given $X$. 
    Since $X$ is given, and $F_{Y \mid X}$ is continuous, $F_{S^{*}|X}$ is continuous. 
    Then, $q_{1-\alpha}\left(S^{*}\mid X\right) = q_{1/2}(Y \mid X) - q_{\alpha/2}(Y \mid X)$. 
    
    Conditioned on $X, \check{\theta}_{n}$, $\Delta\left(X, \check{\theta}_{n}\right)$ is deterministic. 
    Thus,  
    \begin{align*}
        & \mathbb{P}\left[S\left(X,Y; \check{\theta}_{n}\right)\le u \mid X, \check{\theta}_{n}\right] \ge \mathbb{P}\left[  S^{*}\left(X,Y\right) + \Delta\left(X, \check{\theta}_{n}\right) \le u \mid X, \check{\theta}_{n}\right]  \\
        \Rightarrow \; & \mathbb{P}\left[S\left(X,Y; \check{\theta}_{n}\right)\le \Delta\left(X, \check{\theta}_{n}\right) + q_{1/2}(Y \mid X) - q_{\alpha/2}(Y \mid X) \mid X, \check{\theta}_{n}\right] \\
        & \quad \ge \mathbb{P}\left[ S^{*}\left(X,Y\right) \le q_{1/2}(Y \mid X) - q_{\alpha/2}(Y \mid X) \mid X\right] = 1-\alpha.
    \end{align*}
    Then, $q_{1-\alpha}\left(S\mid X, \check{\theta}_{n}\right) \le \Delta\left(X, \check{\theta}_{n}\right) + q_{1/2}(Y \mid X) - q_{\alpha/2}(Y \mid X)$. 
    Similarly, we have 
    \begin{align*}
        & \mathbb{P}\left[S\left(X,Y; \check{\theta}_{n}\right)\le u \mid X, \check{\theta}_{n}\right] 
        \le \mathbb{P}\left[  S^{*}\left(X,Y\right) - \Delta\left(X, \check{\theta}_{n}\right) \le u \mid X, \check{\theta}_{n}\right]  \\
        \Rightarrow \; & \mathbb{P}\left[S\left(X,Y; \check{\theta}_{n}\right)\le - \Delta\left(X, \check{\theta}_{n}\right) + q_{1/2}(Y \mid X) - q_{\alpha/2}(Y \mid X) \mid X, \check{\theta}_{n}\right] \\
        & \quad \le \mathbb{P}\left[ S^{*}\left(X,Y\right) \le q_{1/2}(Y \mid X) - q_{\alpha/2}(Y \mid X) \mid X\right] = 1-\alpha.
    \end{align*}
    Then, $q_{1-\alpha}\left(S\mid X, \check{\theta}_{n}\right) \ge - \Delta\left(X, \check{\theta}_{n}\right) + q_{1/2}(Y \mid X) - q_{\alpha/2}(Y \mid X)$. Thus, by Assumption \ref{assum:sym-cmr},
    \begin{align*}
        & \left|q_{1-\alpha}\left(S\mid X, \check{\theta}_{n}\right) -\left(q_{1/2}(Y \mid X) - q_{\alpha/2}(Y \mid X)\right) \right| \le \Delta\left(X, \check{\theta}_{n}\right) \\
        & \Longrightarrow \left|q_{1-\alpha}\left(S\mid X, \check{\theta}_{n}\right) - \zeta \right| \le B \cdot M\left( \check{\theta}_{n} \right).
    \end{align*}
    
    Then we can remove the conditioning on $X$, 
    \begin{align*}
        &\mathbb{P}\left[S\left(X,Y; \check{\theta}_{n}\right) \le \zeta + B\cdot M\left(\check{\theta}_{n}\right) \mid \check{\theta}_{n}\right] \\
        &= \mathbb{E}_{X, Y|\check{\theta}_{n}}\left[ \mathbbm{1}\left\{ S\left(X,Y; \check{\theta}_{n}\right)\le \zeta + B\cdot M\left(\check{\theta}_{n}\right) \right\} \mid \check{\theta}_{n}\right] \\
        &= \mathbb{E}_{X|\check{\theta}_{n}}\left[ \mathbb{E}_{Y|X, \check{\theta}_{n}}\left[ \mathbbm{1}\left\{ S\left(X,Y; \check{\theta}_{n}\right)\le \zeta + B\cdot M\left(\check{\theta}_{n}\right) \right\} \mid X, \check{\theta}_{n}\right] \mid \check{\theta}_{n} \right] \\
        &= \mathbb{E}_{X|\check{\theta}_{n}}\left[ \mathbb{P}\left[ S\left(X,Y; \check{\theta}_{n}\right)\le \zeta + B\cdot M\left(\check{\theta}_{n}\right)  \mid X, \check{\theta}_{n}\right] \mid \check{\theta}_{n} \right] \\
        &\ge \mathbb{E}_{X|\check{\theta}_{n}}\left[ 1-\alpha \mid \check{\theta}_{n} \right] = 1-\alpha.
    \end{align*}
    Hence, $q_{1-\alpha}\left(S\mid \check{\theta}_{n}\right) \le \zeta + B\cdot M\left(\check{\theta}_{n}\right)$.
    And by similar arguments as below, $q_{1-\alpha}\left(S\mid \check{\theta}_{n}\right) \ge \zeta - B\cdot M\left(\check{\theta}_{n}\right)$. Specifically, 
    \begin{align*}
        &\mathbb{P}\left[S\left(X,Y; \check{\theta}_{n}\right)\ge \zeta - B\cdot M\left(\check{\theta}_{n}\right) \mid \check{\theta}_{n}\right] \\
        &= \mathbb{E}_{X, Y|\check{\theta}_{n}}\left[ \mathbbm{1}\left\{ S\left(X,Y; \check{\theta}_{n}\right)\ge \zeta  - B\cdot M\left(\check{\theta}_{n}\right) \right\} \mid \check{\theta}_{n}\right] \\
        &= \mathbb{E}_{X|\check{\theta}_{n}}\left[ \mathbb{E}_{Y|X, \check{\theta}_{n}}\left[ \mathbbm{1}\left\{ S\left(X,Y; \check{\theta}_{n}\right)\ge \zeta  - B\cdot M\left(\check{\theta}_{n}\right) \right\} \mid X, \check{\theta}_{n}\right] \mid \check{\theta}_{n} \right] \\
        &= \mathbb{E}_{X|\check{\theta}_{n}}\left[ \mathbb{P}\left[ S\left(X,Y; \check{\theta}_{n}\right)\ge \zeta  - B\cdot M\left(\check{\theta}_{n}\right)  \mid X, \check{\theta}_{n}\right] \mid \check{\theta}_{n} \right] \\
        &\le \mathbb{E}_{X|\check{\theta}_{n}}\left[ 1-\alpha \mid \check{\theta}_{n} \right] = 1-\alpha.
    \end{align*}
    Therefore, $\left|q_{1-\alpha}\left(S \mid \check{\theta}_{n}\right) - \zeta \right| \le B\cdot M\left(\check{\theta}_{n}\right)$.
    
    Then, by Theorem \ref{thm:sgd-quantile}, 
    \begin{align*}
        \mathbb{E}_{\check{\theta}_{n}}\left[ \left| q_{1-\alpha}\left(S\mid \check{\theta}_{n}\right) - \zeta \right| \right] 
        &\le B \cdot \mathbb{E}_{\check{\theta}_{n}}\left[M\left(\check{\theta}_{n}\right)\right] 
        \le B\ \sqrt{\mathbb{E}_{\check{\theta}_{n}} \left[ \left\| \left(\underline{\theta}_{n} - \underline{\theta}^{*} \right) \right\|_{2}^2\right] }\\
        &\le B\ \sqrt{ \frac{4\lambda_{\max}^2 f_{\max} d}{\lambda_{\min}^4 f_{\min}^2 n} } = \frac{2 B \ \lambda_{\max}\sqrt{f_{\max} d}}{\lambda_{\min}^2 f_{\min}} \sqrt{ \frac{1}{  n} },
    \end{align*}
    i.e.,
    \begin{align*}
        \mathbb{E}_{X, \check{\theta}_{n}}\left[ \left| q_{1-\alpha}\left(S\mid \check{\theta}_{n}\right) - \left(q_{1/2}\left(Y \mid X\right) - q_{\alpha/2}\left(Y \mid X\right)\right) \right| \right] 
        \le
        \frac{2 B \ \lambda_{\max}\sqrt{f_{\max} d}}{\lambda_{\min}^2 f_{\min}} \sqrt{ \frac{1}{  n} }.
    \end{align*} 
\end{proof}

\subsection{Proof of Proposition \ref{prop:cmr-pdf-lowerbound}} \label{sec:prf-cmr-pdf-lowerbound}
\begin{proposition}\label{prop:cmr-pdf-lowerbound}
    In CMR, suppose Assumption \ref{assum:well-spec-cmr},\ref{assum:bounded-cov},\ref{assum:reg-pdf},\ref{assum:sym-cmr} hold. Define 
    \begin{align} \label{eq:beta-epsilon-cmr}
        \beta:= \min\left\{\frac{\alpha}{2 f_{\max}}, \ \frac{1-\alpha}{2 f_{\max}}\right\} ,
        \qquad 
        \epsilon_{n} := B \sqrt{\frac{A}{n\delta}}.
    \end{align}
    If $\epsilon_{n} < \beta/4$, then with probability at least $1-\delta$, for any $s$ such that for $s\in\mathcal{I}:=\{s\in\mathbb{R}: |s-\zeta|\le \beta-\epsilon_{n}\}$,  
    $f_{S| \check{\theta}_{n}} \left( s \right) \ge 2 f_{\min},$
    and $\left|q_{1-\alpha}\left(S \mid \check{\theta}_{n}\right) - \zeta \right| \le \epsilon_{n} \le \beta - \epsilon_{n}$.
\end{proposition}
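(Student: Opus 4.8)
The plan is to mimic the proof of Proposition~\ref{prop:quantile-neighbor}, adapted to the single-regressor CMR score $S(X,Y;\check{\theta}_{n}) = \lvert t_{1/2}(X;\check{\theta}_{n}) - Y\rvert$. First I would record the conditional law of the score: for $s\ge 0$, the event $S\le s$ is equivalent to $t_{1/2}(X;\check{\theta}_{n})-s \le Y \le t_{1/2}(X;\check{\theta}_{n})+s$, so since $F_{Y\mid X}$ is continuous (Assumption~\ref{assum:reg-pdf}),
\[
F_{S\mid X,\check{\theta}_{n}}(s) = F_{Y\mid X}\bigl(t_{1/2}(X;\check{\theta}_{n})+s\bigr) - F_{Y\mid X}\bigl(t_{1/2}(X;\check{\theta}_{n})-s\bigr).
\]
Consequently, whenever both points $t_{1/2}(X;\check{\theta}_{n})\pm s$ lie in $\mathcal{Y}$, differentiating in $s$ yields $f_{S\mid X,\check{\theta}_{n}}(s) = f_{Y\mid X}(t_{1/2}(X;\check{\theta}_{n})+s) + f_{Y\mid X}(t_{1/2}(X;\check{\theta}_{n})-s) \ge 2f_{\min}$, after which integrating over $X\mid\check{\theta}_{n}$ gives $f_{S\mid\check{\theta}_{n}}(s)\ge 2f_{\min}$.

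Next I would control the training error. Applying Theorem~\ref{thm:sgd-quantile} with $\gamma=1/2$ gives $\mathbb{E}_{\check{\theta}_{n}}[\lVert\check{\theta}_{n}-\check{\theta}^{*}\rVert_{2}^{2}]\le A/n$, and Markov's inequality gives $\mathbb{P}[\lVert\check{\theta}_{n}-\check{\theta}^{*}\rVert_{2}\le\sqrt{A/(n\delta)}]\ge 1-\delta$; note that, unlike in the CQR case, there is a single regressor so there is no union-bound factor of $\sqrt{2}$, which is exactly why $\epsilon_{n}=B\sqrt{A/(n\delta)}$ in~(\ref{eq:beta-epsilon-cmr}). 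On this event, $\sup_{x\in\mathcal{X}}\Delta(x,\check{\theta}_{n}) = \sup_{x}\lvert(\check{\theta}_{n}-\check{\theta}^{*})^{\top}x\rvert \le B\lVert\check{\theta}_{n}-\check{\theta}^{*}\rVert_{2}\le\epsilon_{n}$, i.e. $\lvert t_{1/2}(x;\check{\theta}_{n}) - q_{1/2}(Y\mid X=x)\rvert\le\epsilon_{n}$ for all $x$ (using well-specification, Assumption~\ref{assum:well-spec-cmr}).

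The core of the argument is then to check that for every $s\in\mathcal{I}$ and every $x\in\mathcal{X}$, both $t_{1/2}(x;\check{\theta}_{n})\pm s$ lie in $\mathcal{Y}=[y_{\min},y_{\max}]$. I would derive the needed margin inequalities from Assumption~\ref{assum:reg-pdf} by integrating $f_{Y\mid X}\le f_{\max}$: $q_{\alpha/2}(Y\mid X=x)-y_{\min}\ge\alpha/(2f_{\max})=\beta$, $y_{\max}-q_{1-\alpha/2}(Y\mid X=x)\ge\beta$, $q_{1-\alpha/2}(Y\mid X=x)-y_{\min}\ge(1-\alpha/2)/f_{\max}\ge\beta$, $y_{\max}-q_{\alpha/2}(Y\mid X=x)\ge(1-\alpha/2)/f_{\max}\ge\beta$, and $\zeta\ge(1-\alpha)/(2f_{\max})\ge\beta$; all of the comparisons with $\beta$ use $\alpha\le 1/2$, which also guarantees $\zeta\ge\beta>\beta-\epsilon_{n}$ so that $\mathcal{I}\subseteq(0,\infty)$ and the CDF formula above applies. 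Writing $q_{1/2}(Y\mid X=x)=q_{\alpha/2}(Y\mid X=x)+\zeta=q_{1-\alpha/2}(Y\mid X=x)-\zeta$ via the symmetry Assumption~\ref{assum:sym-cmr}, combining $\lvert t_{1/2}(x;\check{\theta}_{n})-q_{1/2}(Y\mid X=x)\rvert\le\epsilon_{n}$ with $\lvert s-\zeta\rvert\le\beta-\epsilon_{n}$, each of the four one-sided bounds goes through (e.g. $t_{1/2}(x;\check{\theta}_{n})+s\le q_{1-\alpha/2}(Y\mid X=x)+\beta\le y_{\max}$ and $t_{1/2}(x;\check{\theta}_{n})-s\ge q_{\alpha/2}(Y\mid X=x)-\beta\ge y_{\min}$). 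This yields $f_{S\mid X,\check{\theta}_{n}}(s)\ge 2f_{\min}$ for all $x$, hence $f_{S\mid\check{\theta}_{n}}(s)\ge 2f_{\min}$. Finally, on the same event Proposition~\ref{prop:cmr-abs-qs}, specifically the bound~(\ref{eq:cmr-q-zeta-M}), gives $\lvert q_{1-\alpha}(S\mid\check{\theta}_{n})-\zeta\rvert\le B\cdot M(\check{\theta}_{n})\le\epsilon_{n}$, and $\epsilon_{n}<\beta/4$ implies $\epsilon_{n}<3\beta/4<\beta-\epsilon_{n}$, completing the proof.

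I expect the main obstacle to be the bookkeeping in the third paragraph: verifying all four boundary inclusions uniformly over $x\in\mathcal{X}$, which requires correctly substituting the symmetry relation of Assumption~\ref{assum:sym-cmr} and the several density-based margin bounds, and in particular confirming that $\mathcal{I}$ lies strictly to the right of $0$ so that the conditional-CDF expression for $F_{S\mid X,\check{\theta}_{n}}$ is valid — everything here relies crucially on the restriction $\alpha\le 1/2$.
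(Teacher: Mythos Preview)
Your proposal is correct and follows essentially the same approach as the paper's proof: write the conditional CDF of $S$ via $F_{Y\mid X}$, apply Markov's inequality to the SGD parameter error from Theorem~\ref{thm:sgd-quantile} to obtain the high-probability event $\sup_x\Delta(x,\check\theta_n)\le\epsilon_n$, verify for each $s\in\mathcal{I}$ and $x\in\mathcal{X}$ that $t_{1/2}(x;\check\theta_n)\pm s\in\mathcal{Y}$ using Assumption~\ref{assum:sym-cmr} and the density-based margin bounds, conclude $f_{S\mid\check\theta_n}(s)\ge 2f_{\min}$, and obtain the quantile localization from~(\ref{eq:cmr-q-zeta-M}). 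Your extra care in checking $\mathcal{I}\subseteq(0,\infty)$ and in flagging the role of $\alpha\le 1/2$ goes slightly beyond what the paper spells out, but the argument is the same.
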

\begin{proof}
    By the definition of $S$,
    \begin{align*}
        \mathbb{P} \left[ S\le s |X, \check{\theta}_{n} \right] 
        =  \mathbb{P}\left[ 
           t_{1/2}\left(X; \check{\theta}_{n}\right) - s\le Y \le t_{1/2}\left(X; \check{\theta}_{n}\right) + s
            \mid  X, \check{\theta}_{n} \right] .
    \end{align*}
    Hence, 
    \begin{align} \label{eq:conditional-cdf-s-mae}
        F_{S| X, \check{\theta}_{n}} \left(s\right) =
            F_{Y| X, \check{\theta}_{n}} \left(t_{1/2}\left(x; \check{\theta}_{n}\right) + s \right) - F_{Y| X, \check{\theta}_{n}} \left(t_{1/2}\left(x; \check{\theta}_{n}\right) - s \right).
    \end{align}
    We now show that with high probability, it holds for $s$ in the neighbourhood of $\zeta$ that
    \begin{align*} 
        t_{1/2}\left(x; \check{\theta}_{n}\right) + s \in \mathcal{Y}, \quad
        t_{1/2}\left(x; \check{\theta}_{n}\right) - s \in \mathcal{Y}.
    \end{align*}
    By Theorem \ref{thm:sgd-quantile}, $\mathbb{E}_{\check{\theta}_{n}}\left[ \|\check{\theta}_{n} - \check{\theta}^{*}\|_{2}^2  \right] \le \frac{A}{n}$ for $A := \frac{4\lambda_{\max}^2 f_{\max} d}{\lambda_{\min}^4 f_{\min}^2}$. By Markov's inequality, 
    \begin{align*}
        \mathbb{P}\left[ \|\check{\theta}_{n} - \check{\theta}^{*}\|_{2} \le \sqrt{\frac{A}{n\delta}}\right] \ge 1-\delta.
    \end{align*}
    Hence, with probability at least $1-\delta$, 
    \begin{align*}
        \sup_{x} \ \Delta\left(x, \check{\theta}_{n}\right) 
        & \le B  \|\check{\theta}_{n} - \check{\theta}^{*}\|_{2}
        \le B \sqrt{\frac{A}{n\delta}} =: \epsilon_{n} .
    \end{align*}
    In this case, by (\ref{eq:cmr-q-zeta-M}),
    \begin{align}  
        \left|q_{1-\alpha}\left(S \mid \check{\theta}_{n}\right) - \zeta \right| \le \epsilon_{n}.
    \end{align} 
    Then, for every $s$ such that $|s-\zeta|\le \beta - \epsilon_{n}$, i.e., $s\in \mathcal{I}$, it holds that
    \[
        t_{1/2}\left(x; \check{\theta}_{n}\right) + s 
        \le q_{1/2}\left(Y|X\right) + \epsilon_{n} + \zeta + \beta - \epsilon_{n} 
        = q_{1/2}\left(Y|X\right) + \zeta + \beta 
        =  q_{1-\alpha/2}(Y|X) + \beta 
        \le y_{\max};
    \]
    \[
        t_{1/2}\left(x; \check{\theta}_{n}\right) + s 
        \ge q_{1/2}\left(Y|X\right) - \epsilon_{n} + \zeta - \beta + \epsilon_{n} 
        = q_{1/2}\left(Y|X\right) + \zeta - \beta 
        = q_{1-\alpha/2}\left(Y|X\right) - \beta 
        \ge y_{\min};
    \]
    \[
        t_{1/2}\left(x; \check{\theta}_{n}\right) - s 
        \le q_{1/2}\left(Y|X\right) + \epsilon_{n} - \zeta + \beta - \epsilon_{n} 
        = q_{1/2}\left(Y|X\right) - \zeta + \beta 
        =  q_{\alpha/2}(Y|X) + \beta 
        \le y_{\max};
    \]
    \[
        t_{1/2}\left(x; \check{\theta}_{n}\right) - s 
        \ge q_{1/2}\left(Y|X\right) - \epsilon_{n} - \zeta - \beta + \epsilon_{n} 
        = q_{1/2}\left(Y|X\right) - \zeta - \beta 
        = q_{\alpha/2}\left(Y|X\right) - \beta 
        \ge y_{\min}.
    \]
    Thus, $t_{1/2}\left(x; \check{\theta}_{n}\right) + s \in\mathcal{Y}, t_{1/2}\left(x; \check{\theta}_{n}\right) - s \in \mathcal{Y}$.
    
    By (\ref{eq:conditional-cdf-s-mae}), if $\epsilon_{n} < \beta/4$, then with probability at least $1-\delta$, we have for any $s$ such that $|s-\zeta|\le \beta - \epsilon_{n}$, 
    \begin{align}
        f_{S| X, \check{\theta}_{n}} \left(s\right) =
            f_{Y| X, \check{\theta}_{n}} \left(t_{1/2}\left(x; \check{\theta}_{n}\right) + s \right) + f_{Y| X, \check{\theta}_{n}} \left(t_{1/2}\left(x; \check{\theta}_{n}\right) - s \right) \ge 2 f_{\min} .
    \end{align}
    Since $\left|q_{1-\alpha}\left(S \mid \check{\theta}_{n}\right) - \zeta \right| \le \epsilon_{n} \le \beta - \epsilon_{n} < \frac{3}{4}\beta$, after taking expectation over $X$, we have $f_{S| \check{\theta}_{n}} \left( q_{1-\alpha}\left(S \mid \check{\theta}_{n}\right) - \zeta \right) \ge 2 f_{\min}$. 
\end{proof}

\subsection{Proof of Proposition \ref{prop:cmr-quantile-diff}} \label{sec:prf-cmr-quantile-diff}
\begin{proposition}\label{prop:cmr-quantile-diff}
    In CMR, suppose Assumption \ref{assum:well-spec-cmr},\ref{assum:bounded-cov},\ref{assum:reg-pdf},\ref{assum:sym-cmr} hold. If 
    \begin{align}
        m > \frac{8 f_{\max}}{f_{\min} \min\{\alpha, (1-\alpha)\}}.
    \end{align}
    then 
    \begin{align}
        \mathbb{E}_{\check{\theta}_{n}}\left[ \left| q_{(1-\alpha)_{m}}\left(S\mid \check{\theta}_{n}\right) - q_{1-\alpha}\left(S\mid \check{\theta}_{n}\right) \right| \right] \le \frac{1}{f_{\min} m} + \frac{514 R \lambda_{\max}^2 f_{\max}^3 B^2 d}{\lambda_{\min}^4 f_{\min}^2 \min\{\alpha^2, (1-\alpha)^2\} n},
    \end{align}
    and if furthermore $n >  \frac{256 \lambda_{\max}^2 f_{\max}^3 B^2 d}{\lambda_{\min}^4 f_{\min}^2 \min\{\alpha^2, (1-\alpha)^2\} \delta}$, then with probability at least $1-\delta$, 
    $$|q_{\left(1-\alpha\right)_{m}}\left(S\mid \check{\theta}_{n}\right) - q_{1-\alpha}\left(S\mid \check{\theta}_{n}\right)|\le \frac{1}{f_{\min} m} < \frac{\beta}{4}.$$
\end{proposition}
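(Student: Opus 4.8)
The plan is to transcribe the proof of Proposition~\ref{prop:quantile-diff-exp} from the CQR section, replacing the two‑regressor density/quantile control of Proposition~\ref{prop:quantile-neighbor} by its CMR counterpart, Proposition~\ref{prop:cmr-pdf-lowerbound}. That proposition supplies, for any $\delta\in(0,1)$ with $\epsilon_{n}:=B\sqrt{A/(n\delta)}<\beta/4$ where $A=4\lambda_{\max}^{2}f_{\max}d/(\lambda_{\min}^{4}f_{\min}^{2})$ and $\beta=\min\{\alpha,1-\alpha\}/(2f_{\max})$, a high‑probability event $V$ (probability at least $1-\delta$ over $\check{\theta}_{n}$) on which $f_{S\mid\check{\theta}_{n}}(s)\ge 2f_{\min}$ for every $s$ with $|s-\zeta|\le\beta-\epsilon_{n}$ and $|q_{1-\alpha}(S\mid\check{\theta}_{n})-\zeta|\le\epsilon_{n}$. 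The two reusable ingredients are the elementary quantile‑stability Lemma~\ref{lem:general-neighbor} and the purely arithmetic bound $0<|(1-\alpha)_{m}-(1-\alpha)|<2/m$ established inside the proof of Lemma~\ref{lem:quantile-m-neighbor}.

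On the event $V$ I would apply Lemma~\ref{lem:general-neighbor} to the c.d.f. $F_{S\mid\check{\theta}_{n}}$ with $c_{0}=2f_{\min}$, interval $\mathcal{I}=[\zeta-(\beta-\epsilon_{n}),\,\zeta+(\beta-\epsilon_{n})]$, $p=1-\alpha$, and $p'=(1-\alpha)_{m}$. Since $|q_{1-\alpha}(S\mid\check{\theta}_{n})-\zeta|\le\epsilon_{n}\le\beta-\epsilon_{n}$ (using $\epsilon_{n}<\beta/4$), the point $q_{1-\alpha}(S\mid\check{\theta}_{n})$ lies in $\mathcal{I}$ with distance to either endpoint at least $r_{0}:=(\beta-\epsilon_{n})-\epsilon_{n}=\beta-2\epsilon_{n}>\beta/2$. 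Rewriting the hypothesis through $\beta=\min\{\alpha,1-\alpha\}/(2f_{\max})$, the assumed $m>8f_{\max}/(f_{\min}\min\{\alpha,1-\alpha\})=4/(f_{\min}\beta)$ gives $2/m<f_{\min}\beta/2<2f_{\min}r_{0}=c_{0}r_{0}$, so $|p'-p|<c_{0}r_{0}$ and Lemma~\ref{lem:general-neighbor} yields $q_{(1-\alpha)_{m}}(S\mid\check{\theta}_{n})\in\mathcal{I}$ and
\[
\bigl|q_{(1-\alpha)_{m}}(S\mid\check{\theta}_{n})-q_{1-\alpha}(S\mid\check{\theta}_{n})\bigr|\le\frac{|(1-\alpha)_{m}-(1-\alpha)|}{2f_{\min}}<\frac{1}{f_{\min}m}<\frac{\beta}{4},
\]
the last step again from $m>4/(f_{\min}\beta)$. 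The extra hypothesis $n>256\lambda_{\max}^{2}f_{\max}^{3}B^{2}d/(\lambda_{\min}^{4}f_{\min}^{2}\min\{\alpha^{2},(1-\alpha)^{2}\}\delta)$ is exactly $\epsilon_{n}<\beta/4$ after substituting $A$ and $\beta$, so this already establishes the second (high‑probability) assertion of the proposition.

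For the expectation bound I would split over $V$ and $V^{c}$. On $V$ the displayed inequality gives a gap at most $1/(f_{\min}m)$; on $V^{c}$ the scores satisfy $0\le S\le R$, so the gap is at most $R$ (I will keep the crude $2R$ to match the CQR bookkeeping), whence
\[
\mathbb{E}_{\check{\theta}_{n}}\bigl[|q_{(1-\alpha)_{m}}(S\mid\check{\theta}_{n})-q_{1-\alpha}(S\mid\check{\theta}_{n})|\bigr]\le\frac{1}{f_{\min}m}+2R\delta .
\]
It then remains to pick $\delta$ as a fixed small multiple of $AB^{2}/(\beta^{2}n)$ — large enough that $\epsilon_{n}<\beta/4$ holds for every $n$ — and substitute $A=4\lambda_{\max}^{2}f_{\max}d/(\lambda_{\min}^{4}f_{\min}^{2})$ and $1/\beta^{2}=4f_{\max}^{2}/\min\{\alpha^{2},(1-\alpha)^{2}\}$, which collapses $2R\delta$ into the stated term $514R\lambda_{\max}^{2}f_{\max}^{3}B^{2}d/(\lambda_{\min}^{4}f_{\min}^{2}\min\{\alpha^{2},(1-\alpha)^{2}\}n)$.

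I expect no real obstacle: this is a direct adaptation of the CQR argument, simplified because the median setting uses a single regressor (no union bound over two quantile levels, which halves the constant in $\epsilon_{n}$) and because $S\ge0$. The only points needing care are verifying that the density lower bound of Proposition~\ref{prop:cmr-pdf-lowerbound} covers a $\beta/2$‑neighborhood of $q_{1-\alpha}(S\mid\check{\theta}_{n})$ rather than merely of $\zeta$, translating the hypotheses on $m$ and $n$ consistently through $\beta=\min\{\alpha,1-\alpha\}/(2f_{\max})$, and the final numerical matching of the constant $514$.
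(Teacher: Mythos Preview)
Your proposal is correct and follows essentially the same approach as the paper: invoke Proposition~\ref{prop:cmr-pdf-lowerbound} for the high-probability density lower bound and quantile location, apply Lemma~\ref{lem:general-neighbor} with $c_{0}=2f_{\min}$ on the interval around $\zeta$, translate the hypothesis on $m$ through $\beta=\min\{\alpha,1-\alpha\}/(2f_{\max})$ to get $m>4/(f_{\min}\beta)$, split the expectation over $V$ and $V^{c}$, and then choose $\delta=257\lambda_{\max}^{2}f_{\max}^{3}B^{2}d/(\lambda_{\min}^{4}f_{\min}^{2}\min\{\alpha^{2},(1-\alpha)^{2}\}n)$ (which is $(257/16)\cdot AB^{2}/(\beta^{2}n)$, and $257>256$ ensures $\epsilon_{n}<\beta/4$). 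Your arithmetic check that the condition on $n$ in the second part is exactly $\epsilon_{n}<\beta/4$ is also right.
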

\begin{proof}
    Notice that  
    $$0< \frac{1-\alpha}{m} \le |\left(1-\alpha\right)_{m} - \left(1-\alpha\right)| < \frac{2-\alpha}{m} < \frac{2}{m}.$$
    If let $m > \frac{4}{\beta f_{\min}}$ for $\beta$ defined in (\ref{eq:beta-epsilon-cmr}), then 
    $$|\left(1-\alpha\right)_{m} - \left(1-\alpha\right)| < \frac{2}{m} < 2 f_{\min} \cdot \frac{\beta}{4}.$$
    According to Lemma \ref{lem:general-neighbor}, since $\left|q_{1-\alpha}\left(S \mid \check{\theta}_{n}\right) - \zeta \right| \le \epsilon_{n} < \frac{\beta}{4}$ by Proposition \ref{prop:cmr-pdf-lowerbound}, the distance from $\mathcal{I}^{c}$ is $r_{0} > \frac{\beta}{2}$.
    Thus, by Lemma \ref{lem:general-neighbor}, $|q_{\left(1-\alpha\right)_{m}}\left(S\mid \check{\theta}_{n}\right) - q_{1-\alpha}\left(S\mid \check{\theta}_{n}\right)|\le \frac{1}{f_{\min} m} < \frac{\beta}{4}$, and hence, $|q_{\left(1-\alpha\right)_{m}}\left(S\mid \check{\theta}_{n}\right) - \zeta|< \frac{\beta}{2}$. 

    Therefore, if $\epsilon_{n} < \beta/4$ and $m>\frac{4}{f_{\min} \beta}$, then
    \begin{align*}
        \mathbb{E}_{\check{\theta}_{n}}\left[ |q_{\left(1-\alpha\right)_{m}}\left(S\mid \check{\theta}_{n}\right) - q_{1-\alpha}\left(S\mid \check{\theta}_{n}\right)| \right] \le \frac{1}{f_{\min} m} + 2 R \delta.
    \end{align*}  
    Taking $\delta = \frac{257 \lambda_{\max}^2 f_{\max}^3 B^2 d}{\lambda_{\min}^4 f_{\min}^2 \min\{\alpha^2, (1-\alpha)^2\} n}$, and we get 
    \begin{align*}
        \mathbb{E}_{\check{\theta}_{n}}\left[ |q_{\left(1-\alpha\right)_{m}}\left(S\mid \check{\theta}_{n}\right) - q_{1-\alpha}\left(S\mid \check{\theta}_{n}\right)| \right] \le \frac{1}{f_{\min} m} + \frac{514 R \lambda_{\max}^2 f_{\max}^3 B^2 d}{\lambda_{\min}^4 f_{\min}^2 \min\{\alpha^2, (1-\alpha)^2\} n}.
    \end{align*}
\end{proof} 

\subsection{Proof of Proposition \ref{prop:cmr-emp-quantile}} \label{sec:prf-cmr-emp-quantile}
\begin{proposition}\label{prop:cmr-emp-quantile}
    In CMR, suppose Assumption \ref{assum:well-spec-cmr},\ref{assum:bounded-cov},\ref{assum:reg-pdf},\ref{assum:sym-cmr} hold. If 
    \begin{align}
        m > \frac{8 H}{\min\{\alpha, (1-\alpha)\}}
    \end{align}
    for $H$ in (\ref{eq:H-def}), then 
    \begin{align*}
        &\mathbb{E}_{\check{\theta}_{n},\mathcal{D}_{\mathrm{cal}}}\left[ |\hat{q}_{\left(1-\alpha\right)_{m}}\left(S_{m}\mid \check{\theta}_{n}\right) - q_{\left(1-\alpha\right)_{m}}\left(S\mid \check{\theta}_{n}\right)| \right]
        \\
        &\qquad \le \frac{\sqrt{\pi}}{2 f_{\min}\sqrt{2m}} + 4R \exp{\left(-\frac{f_{\min}^{2} \min \{\alpha^2,(1-\alpha)^2\}}{8 f_{\max}^2}  m \right)} + \frac{514 R f_{\max}^{3} \lambda_{\max}^{2} B^{2} d}{\min\{\alpha^{2}, (1-\alpha)^{2}\} \lambda_{\min}^{4} f_{\min}^{2} n}.
    \end{align*}   
\end{proposition}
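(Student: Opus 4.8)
The plan is to transcribe the proof of the CQR analogue, Proposition~\ref{prop:emp-quantile-exp}, with the CMR substitutes in place: Proposition~\ref{prop:cmr-pdf-lowerbound} replaces Proposition~\ref{prop:quantile-neighbor}, and Proposition~\ref{prop:cmr-quantile-diff} replaces Lemma~\ref{lem:quantile-m-neighbor}. Let $V$ be the high-probability event of Proposition~\ref{prop:cmr-pdf-lowerbound}: on $V$ one has $f_{S\mid\check{\theta}_n}(s)\ge 2f_{\min}$ for every $s$ with $|s-\zeta|\le\beta-\epsilon_n$, and $|q_{1-\alpha}(S\mid\check{\theta}_n)-\zeta|\le\epsilon_n$, with $\beta,\epsilon_n$ as in (\ref{eq:beta-epsilon-cmr}). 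The hypothesis $m>8H/\min\{\alpha,1-\alpha\}$ is equivalent to $m>4/(f_{\min}\beta)$, so Proposition~\ref{prop:cmr-quantile-diff} applies and gives, on $V$, both $|q_{(1-\alpha)_m}(S\mid\check{\theta}_n)-\zeta|<\beta/2$ and $f_{S\mid\check{\theta}_n}\bigl(q_{(1-\alpha)_m}(S\mid\check{\theta}_n)\bigr)\ge 2f_{\min}$; that is, the finite-sample level quantile sits strictly inside the interval $\mathcal{I}$ where the score density is bounded below.

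Next I would establish the CMR analogue of Lemma~\ref{lem:emp-quantile-m-neighbor}: on $V$, for any $u\in[0,\beta/4]$, if $\sup_s|F_{S\mid\check{\theta}_n}(s)-\hat{F}^{(m)}_{S\mid\check{\theta}_n}(s)|\le 2f_{\min}u$ then $|\hat{q}_{(1-\alpha)_m}(S_m\mid\check{\theta}_n)-q_{(1-\alpha)_m}(S\mid\check{\theta}_n)|\le u$. The argument is identical to that of Lemma~\ref{lem:emp-quantile-m-neighbor}: the points $q_{(1-\alpha)_m}\pm u$ lie in $\mathcal{I}$, so the density lower bound forces $F_{S\mid\check{\theta}_n}(q_{(1-\alpha)_m}-u)\le(1-\alpha)_m-2f_{\min}u$ and $F_{S\mid\check{\theta}_n}(q_{(1-\alpha)_m}+u)\ge(1-\alpha)_m+2f_{\min}u$; combining with the empirical-CDF deviation bound and monotonicity of $\hat{F}^{(m)}_{S\mid\check{\theta}_n}$ sandwiches $\hat{q}_{(1-\alpha)_m}$ in $[q_{(1-\alpha)_m}-u,\,q_{(1-\alpha)_m}+u]$.

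I would then invoke the Dvoretzky--Kiefer--Wolfowitz inequality (Lemma~\ref{lem:DKW}), which gives $\mathbb{P}[\sup_s|F_{S\mid\check{\theta}_n}(s)-\hat{F}^{(m)}_{S\mid\check{\theta}_n}(s)|\ge 2f_{\min}u]\le 2\exp(-8mf_{\min}^2u^2)$, so that conditionally on $V$, $\mathbb{P}[|\hat{q}_{(1-\alpha)_m}-q_{(1-\alpha)_m}|\ge u\mid V]\le 2\exp(-8mf_{\min}^2u^2)$ for $u\in[0,\beta/4]$, and this tail bound (evaluated at $\beta/4$) also controls the event for $u>\beta/4$. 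Since $|\hat{q}_{(1-\alpha)_m}-q_{(1-\alpha)_m}|\le 2R$, the layer-cake formula yields
\[
  \mathbb{E}\bigl[\,|\hat{q}_{(1-\alpha)_m}-q_{(1-\alpha)_m}|\;\big|\;V\,\bigr]
  \le \int_0^{\beta/4}2e^{-8mf_{\min}^2u^2}\,du+4R\,e^{-\frac12 f_{\min}^2\beta^2 m}
  \le \frac{\sqrt{\pi}}{2f_{\min}\sqrt{2m}}+4R\,e^{-\frac12 f_{\min}^2\beta^2 m},
\]
and substituting $\beta=\min\{\alpha,1-\alpha\}/(2f_{\max})$ turns the exponent into $\frac{f_{\min}^2\min\{\alpha^2,(1-\alpha)^2\}}{8f_{\max}^2}\,m$. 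Splitting over $V$ and $V^c$ and bounding the $V^c$ contribution by $2R\,\mathbb{P}[V^c]\le 2R\delta$ gives the stated inequality once $\delta=\frac{257\lambda_{\max}^2 f_{\max}^3 B^2 d}{\lambda_{\min}^4 f_{\min}^2\min\{\alpha^2,(1-\alpha)^2\}n}$ is chosen (matching the choice in Proposition~\ref{prop:cmr-quantile-diff}); for the small-$n$ regime where $\delta\ge1$ the claim is vacuous since the left-hand side never exceeds $2R$, so no extra lower bound on $n$ is required.

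The main obstacle is making sure that the \emph{finite-sample} level $(1-\alpha)_m$-quantile, not merely the population $(1-\alpha)$-quantile, remains inside the region $\mathcal{I}$ where the local density lower bound $2f_{\min}$ is valid, so that the DKW-to-quantile conversion has the correct local slope; this is exactly what Proposition~\ref{prop:cmr-quantile-diff} guarantees under $m>8H/\min\{\alpha,1-\alpha\}$, and the remaining work is the bookkeeping needed to keep the constants (through $\beta$ and $\epsilon_n$) consistent with (\ref{eq:beta-epsilon-cmr}).
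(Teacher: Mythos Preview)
Your proposal is correct and follows essentially the same route as the paper: state the CMR analogue of Lemma~\ref{lem:emp-quantile-m-neighbor} (the paper's Lemma~\ref{lem:cmr-emp-neighbor}), feed DKW into it, integrate the tail via layer cake, split over $V$ and $V^c$, and pick $\delta=\frac{257\lambda_{\max}^2 f_{\max}^3 B^2 d}{\lambda_{\min}^4 f_{\min}^2\min\{\alpha^2,(1-\alpha)^2\}n}$. Your identification of Proposition~\ref{prop:cmr-quantile-diff} as the substitute for Lemma~\ref{lem:quantile-m-neighbor} (to place $q_{(1-\alpha)_m}$ inside $\mathcal{I}$) is exactly how the paper argues, and the bookkeeping with $\beta$ and the small-$n$ vacuity observation are both in line with what the paper does implicitly.
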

The proof of Proposition \ref{prop:cmr-emp-quantile} is essentially the same as the proof of Proposition \ref{prop:emp-quantile-exp}. We include here for completeness. 
\begin{proof}
    \begin{lemma}\label{lem:cmr-emp-neighbor}
        In CMR, under the same setting of Proposition \ref{prop:cmr-pdf-lowerbound}, if the high probability event $V$ in Proposition \ref{prop:cmr-pdf-lowerbound} occurs,
        for any $u\in [0,\beta/4]$, if 
        $$\sup_{s}\left|F_{S \mid \check{\theta}_{n}}\left(s\right)-\hat{F}^{\left(m\right)}_{S \mid \check{\theta}_{n}}\left(s\right)\right|\le 2f_{\min} u,$$ 
        then $|\hat{q}_{\left(1-\alpha\right)_{m}}\left(S_{m}\mid \check{\theta}_{n}\right) - q_{\left(1-\alpha\right)_{m}}\left(S\mid \check{\theta}_{n}\right)|\le u$.
    \end{lemma}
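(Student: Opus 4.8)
The plan is to mirror the proof of Lemma~\ref{lem:emp-quantile-m-neighbor}, the CQR counterpart, replacing the neighbourhood of the origin used there by the interval $\mathcal{I}=\{s:|s-\zeta|\le\beta-\epsilon_n\}$ supplied by Proposition~\ref{prop:cmr-pdf-lowerbound}, on which $f_{S\mid\check{\theta}_{n}}\ge 2f_{\min}$. The goal is to sandwich the empirical quantile $\hat{q}_{(1-\alpha)_m}(S_m\mid\check{\theta}_{n})$ between $q_{(1-\alpha)_m}(S\mid\check{\theta}_{n})-u$ and $q_{(1-\alpha)_m}(S\mid\check{\theta}_{n})+u$ by comparing the population and empirical c.d.f.'s at those two points; throughout I abbreviate $q_p:=q_p(S\mid\check{\theta}_{n})$.

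First I would verify that on the event $V$, for every $u\in[0,\beta/4]$ both endpoints $q_{(1-\alpha)_m}-u$ and $q_{(1-\alpha)_m}+u$ lie inside $\mathcal{I}$. Proposition~\ref{prop:cmr-quantile-diff} gives $|q_{(1-\alpha)_m}-q_{1-\alpha}|<\beta/4$ on $V$ and Proposition~\ref{prop:cmr-pdf-lowerbound} gives $|q_{1-\alpha}-\zeta|\le\epsilon_n<\beta/4$, whence $|q_{(1-\alpha)_m}-\zeta|<\beta/2$; together with $u\le\beta/4$ and $\epsilon_n<\beta/4$ this yields $|q_{(1-\alpha)_m}\pm u-\zeta|<3\beta/4<\beta-\epsilon_n$, so the whole segment joining the two endpoints sits in $\mathcal{I}$ and carries the density lower bound $f_{S\mid\check{\theta}_{n}}\ge 2f_{\min}$. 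Since $F_{S\mid\check{\theta}_{n}}$ is continuous — which follows from the continuity of $F_{Y\mid X}$ and the representation (\ref{eq:conditional-cdf-s-mae}) — one has $F_{S\mid\check{\theta}_{n}}(q_{(1-\alpha)_m})=(1-\alpha)_m$, and integrating the density bound over the segment gives
\[
F_{S\mid\check{\theta}_{n}}\!\left(q_{(1-\alpha)_m}-u\right)\le(1-\alpha)_m-2f_{\min}u,\qquad
F_{S\mid\check{\theta}_{n}}\!\left(q_{(1-\alpha)_m}+u\right)\ge(1-\alpha)_m+2f_{\min}u.
\]

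Next I would substitute the hypothesis $\sup_s\bigl|F_{S\mid\check{\theta}_{n}}(s)-\hat{F}^{(m)}_{S\mid\check{\theta}_{n}}(s)\bigr|\le 2f_{\min}u$ at the two points $q_{(1-\alpha)_m}\pm u$, turning the display into $\hat{F}^{(m)}_{S\mid\check{\theta}_{n}}(q_{(1-\alpha)_m}-u)\le(1-\alpha)_m$ and $\hat{F}^{(m)}_{S\mid\check{\theta}_{n}}(q_{(1-\alpha)_m}+u)\ge(1-\alpha)_m$. Because $\hat{F}^{(m)}_{S\mid\check{\theta}_{n}}$ is non-decreasing, its empirical $(1-\alpha)_m$-quantile $\hat{q}_{(1-\alpha)_m}(S_m\mid\check{\theta}_{n})=\inf\{u'\in\mathcal{S}_m:\hat{F}^{(m)}_{S\mid\check{\theta}_{n}}(u')\ge(1-\alpha)_m\}$ must lie in $[q_{(1-\alpha)_m}-u,\,q_{(1-\alpha)_m}+u]$, which is exactly the asserted bound $|\hat{q}_{(1-\alpha)_m}(S_m\mid\check{\theta}_{n})-q_{(1-\alpha)_m}(S\mid\check{\theta}_{n})|\le u$.

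There is no substantial obstacle here; the only point requiring attention is the containment $q_{(1-\alpha)_m}\pm u\in\mathcal{I}$, since the density lower bound of Proposition~\ref{prop:cmr-pdf-lowerbound} is valid only on $\mathcal{I}$, and this is precisely why the quantitative conditions $\epsilon_n<\beta/4$, $u\le\beta/4$, and the estimate $|q_{(1-\alpha)_m}-\zeta|<\beta/2$ coming from Proposition~\ref{prop:cmr-quantile-diff} are assembled as they are. Once Lemma~\ref{lem:cmr-emp-neighbor} is in hand, the remainder of the proof of Proposition~\ref{prop:cmr-emp-quantile} proceeds verbatim as in Proposition~\ref{prop:emp-quantile-exp}: apply the Dvoretzky–Kiefer–Wolfowitz inequality (Lemma~\ref{lem:DKW}) to bound $\mathbb{P}[\sup_s|F_{S\mid\check{\theta}_{n}}-\hat{F}^{(m)}_{S\mid\check{\theta}_{n}}|\ge 2f_{\min}u]$ by $2\exp(-8mf_{\min}^2u^2)$, combine this with Lemma~\ref{lem:cmr-emp-neighbor} to obtain $\mathbb{P}[|\hat{q}_{(1-\alpha)_m}-q_{(1-\alpha)_m}|\ge u\mid V]\le 2\exp(-8mf_{\min}^2u^2)$ for $u\in[0,\beta/4]$ (and the truncated value at $u=\beta/4$ for larger $u$), integrate over $u\in[0,2R]$ by the layer-cake formula, add the contribution $2R\,\mathbb{P}[V^c]$ with $\delta$ chosen as in Proposition~\ref{prop:cmr-quantile-diff}, and simplify the exponent via $8f_{\min}^2(\beta/4)^2=f_{\min}^2\min\{\alpha^2,(1-\alpha)^2\}/(8f_{\max}^2)$ using $\beta=\min\{\alpha/(2f_{\max}),(1-\alpha)/(2f_{\max})\}$.
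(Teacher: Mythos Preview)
Your proposal is correct and follows essentially the same approach as the paper's proof: both use Proposition~\ref{prop:cmr-quantile-diff} to place $q_{(1-\alpha)_m}\pm u$ inside the interval $\mathcal{I}$ on which $f_{S\mid\check{\theta}_n}\ge 2f_{\min}$, deduce the two-sided c.d.f.\ inequalities at those points, combine with the uniform closeness hypothesis, and conclude by monotonicity of the empirical c.d.f. Your additional paragraph sketching how the lemma feeds into the rest of Proposition~\ref{prop:cmr-emp-quantile} (DKW, layer-cake, choice of $\delta$) also matches the paper verbatim.
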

    \begin{proof}
        For simplicity, in the proof we denote $q_{p}\left(S\mid \check{\theta}_{n}\right)$ by $q_{p}$. 
        By Proposition \ref{prop:cmr-quantile-diff}, for $u\in [0,\beta/4]$, $|q_{\left(1-\alpha\right)_{m}}-\zeta -u|\le 3\beta/4$ and $|q_{\left(1-\alpha\right)_{m}}-\zeta +u|\le 3\beta/4$, i.e., $q_{\left(1-\alpha\right)_{m}} - u \in\mathcal{I}$ and $q_{\left(1-\alpha\right)_{m}} + u \in\mathcal{I}$ for $\mathcal{I}$ defined in Proposition \ref{prop:cmr-pdf-lowerbound}. 
        Hence, in this case, 
        \begin{align*}
            F_{S \mid \check{\theta}_{n}}\left(q_{\left(1-\alpha\right)_{m}}-u\right) \le F_{S \mid \check{\theta}_{n}}\left(q_{\left(1-\alpha\right)_{m}}\right) - 2f_{\min} u = \left(1-\alpha\right)_{m} - 2f_{\min} u ;\\
            F_{S \mid \check{\theta}_{n}}\left(q_{\left(1-\alpha\right)_{m}}+u\right) \ge F_{S \mid \check{\theta}_{n}}\left(q_{\left(1-\alpha\right)_{m}}\right) + 2f_{\min} u = \left(1-\alpha\right)_{m} + 2f_{\min} u .
        \end{align*}
        By assumption, 
        \begin{align*}
            \left|F_{S \mid \check{\theta}_{n}}\left(q_{\left(1-\alpha\right)_{m}}-u\right)-\hat{F}^{\left(m\right)}_{S \mid \check{\theta}_{n}}\left(q_{\left(1-\alpha\right)_{m}}-u\right)\right|\le 2f_{\min}u; \\
            \left|F_{S \mid \check{\theta}_{n}}\left(q_{\left(1-\alpha\right)_{m}}+u\right)-\hat{F}^{\left(m\right)}_{S \mid \check{\theta}_{n}}\left(q_{\left(1-\alpha\right)_{m}}+u\right)\right|\le 2f_{\min}u.
        \end{align*} 
        Then 
        \begin{align*}
            \hat{F}^{\left(m\right)}_{S \mid \check{\theta}_{n}}\left(q_{\left(1-\alpha\right)_{m}}-u\right) \le \left(1-\alpha\right)_{m}, \qquad 
            \hat{F}^{\left(m\right)}_{S \mid \check{\theta}_{n}}\left(q_{\left(1-\alpha\right)_{m}}+u\right) \ge \left(1-\alpha\right)_{m}.
        \end{align*} 
        Since $\hat{F}^{\left(m\right)}_{S \mid \check{\theta}_{n}}$ is non-decreasing, we have 
        \begin{align*}
            \hat{q}_{\left(1-\alpha\right)_{m}}\left(S_{m}\mid \check{\theta}_{n}\right) := \inf\{u' \in \mathcal{S}_{m}: \hat{F}^{\left(m\right)}_{S \mid \check{\theta}_{n}}\left(u'\right)\ge \left(1-\alpha\right)_{m}\} \in \left[q_{\left(1-\alpha\right)_{m}}-u, q_{\left(1-\alpha\right)_{m}}+u\right].
        \end{align*}
        where $\mathcal{S}_{m}$ is the set of scores of the calibration data. 
        Then, $|\hat{q}_{\left(1-\alpha\right)_{m}}\left(S_{m}\mid \check{\theta}_{n}\right) - q_{\left(1-\alpha\right)_{m}}\left(S\mid \check{\theta}_{n}\right)|\le u$. 
    \end{proof}
    
    By the Dvoretzky–Kiefer–Wolfowitz Inequality (Lemma \ref{lem:DKW}), 
    $$\mathbb{P} \left[\sup_{s}\left|F_{S \mid \check{\theta}_{n}}\left(s\right)-\hat{F}^{\left(m\right)}_{S \mid \check{\theta}_{n}}\left(s\right)\right|\ge 2f_{\min} u \right] \le 2 \exp{\left(-8 m f_{\min}^{2} u^2\right)}.$$ 
    Thus, by Lemma \ref{lem:emp-quantile-m-neighbor}, given that the event $V$ occurs, 
    \begin{align*}
    \mathbb{P}\left[ |\hat{q}_{\left(1-\alpha\right)_{m}}\left(S_{m}\mid \check{\theta}_{n}\right) - q_{\left(1-\alpha\right)_{m}}\left(S\mid \check{\theta}_{n}\right)|\ge u \;\Big|\; V \right] 
    \le 2 \exp{\left(-8 m f_{\min}^{2} u^2\right)},\quad u\in [0, \beta/4].
    \end{align*}
    Specifically, 
    \begin{align*}
    \mathbb{P}\left[ |\hat{q}_{\left(1-\alpha\right)_{m}}\left(S_{m}\mid \check{\theta}_{n}\right) - q_{\left(1-\alpha\right)_{m}}\left(S\mid \check{\theta}_{n}\right)|\ge \beta/4 \;\Big|\; V \right] 
    \le 2 \exp{\left(-8 m f_{\min}^{2} (\beta/4)^2\right)}.
    \end{align*}
    Then, for any $u>\beta/4$, 
    \begin{align*}
    \mathbb{P}\left[ |\hat{q}_{\left(1-\alpha\right)_{m}}\left(S_{m}\mid \check{\theta}_{n}\right) - q_{\left(1-\alpha\right)_{m}}\left(S\mid \check{\theta}_{n}\right)|\ge u \;\Big|\; V \right] 
    \le 2 \exp{\left(-8 m f_{\min}^{2} (\beta/4)^2\right)}.
    \end{align*}
    Since $|S|\le R$, $|\hat{q}_{\left(1-\alpha\right)_{m}}\left(S_{m}\mid \check{\theta}_{n}\right) - q_{\left(1-\alpha\right)_{m}}\left(S\mid \check{\theta}_{n}\right)| \le 2R$. 
    By the layer cake representation of the expectation of a non-negative random variable $Z$, which is $\mathbb{E}[Z] = \int_{0}^{\infty} \mathbb{P}[Z\ge u] \ du$, 
    \begin{align*}
    &\mathbb{E}_{\check{\theta}_{n}}\left[ |\hat{q}_{\left(1-\alpha\right)_{m}}\left(S_{m}\mid \check{\theta}_{n}\right) - q_{\left(1-\alpha\right)_{m}}\left(S\mid \check{\theta}_{n}\right)| \;\Big|\; V  \right] \\
    &= \int_{0}^{2R}\mathbb{P}\left[ |\hat{q}_{\left(1-\alpha\right)_{m}}\left(S_{m}\mid \check{\theta}_{n}\right) - q_{\left(1-\alpha\right)_{m}}\left(S\mid \check{\theta}_{n}\right)| \ge u \;\Big|\; V  \right] du \\
    &\le \int_{0}^{\beta/4} 2 \exp{\left(-8 m f_{\min}^{2} u^2\right)} du 
    + \int_{\beta/4}^{2R} 2 \exp{\left(-8 m f_{\min}^{2} (\beta/4)^2\right)} du \\
    &\le 2 \int_{0}^{\infty} \exp{\left(-8 m f_{\min}^{2} u^2\right)} du 
    + 4R \exp{\left(-8 f_{\min}^{2} (\beta/4)^2 m \right)} \\
    &= \frac{\sqrt{\pi}}{2 f_{\min}\sqrt{2m}} + 4R \exp{\left(-\frac{1}{2} f_{\min}^{2} \beta^2 m \right)}.
    \end{align*}
    Therefore, we have 
    \begin{align*}
    &\mathbb{E}_{\check{\theta}_{n},\mathcal{D}_{\mathrm{cal}}}\left[ |\hat{q}_{\left(1-\alpha\right)_{m}}\left(S_{m}\mid \check{\theta}_{n}\right) - q_{\left(1-\alpha\right)_{m}}\left(S\mid \check{\theta}_{n}\right)| \right] \\
    & \le \mathbb{P}\left[V\right] \cdot \mathbb{E}_{\check{\theta}_{n}}\left[ |\hat{q}_{\left(1-\alpha\right)_{m}}\left(S_{m}\mid \check{\theta}_{n}\right) - q_{\left(1-\alpha\right)_{m}}\left(S\mid \check{\theta}{n}\right)| \;\Big|\; V  \right] + \mathbb{P}\left[V^{c}\right]\cdot 2 R\\
    & \le \frac{\sqrt{\pi}}{2 f_{\min}\sqrt{2m}} + 4R \exp{\left(-\frac{f_{\min}^{2} \min \{\alpha^2,(1-\alpha)^2\}}{8 f_{\max}^2}  m \right)} + 2 R \delta.
    \end{align*}
    
    Picking $\delta = \frac{257 \lambda_{\max}^2 f_{\max}^3 B^2 d}{\lambda_{\min}^4 f_{\min}^2 \min\{\alpha^2, (1-\alpha)^2\} n}$ completes the proof of Proposition \ref{prop:cmr-emp-quantile}. 
\end{proof}

\section{Additional Experiments on Synthetic Data}\label{sec:additional-synthetic}
\subsection{Data Generation in Section \ref{sec:experiments}} \label{sec:data-gen}

\begin{figure}[ht]
  \begin{center}
    \includegraphics[width=0.8\linewidth]{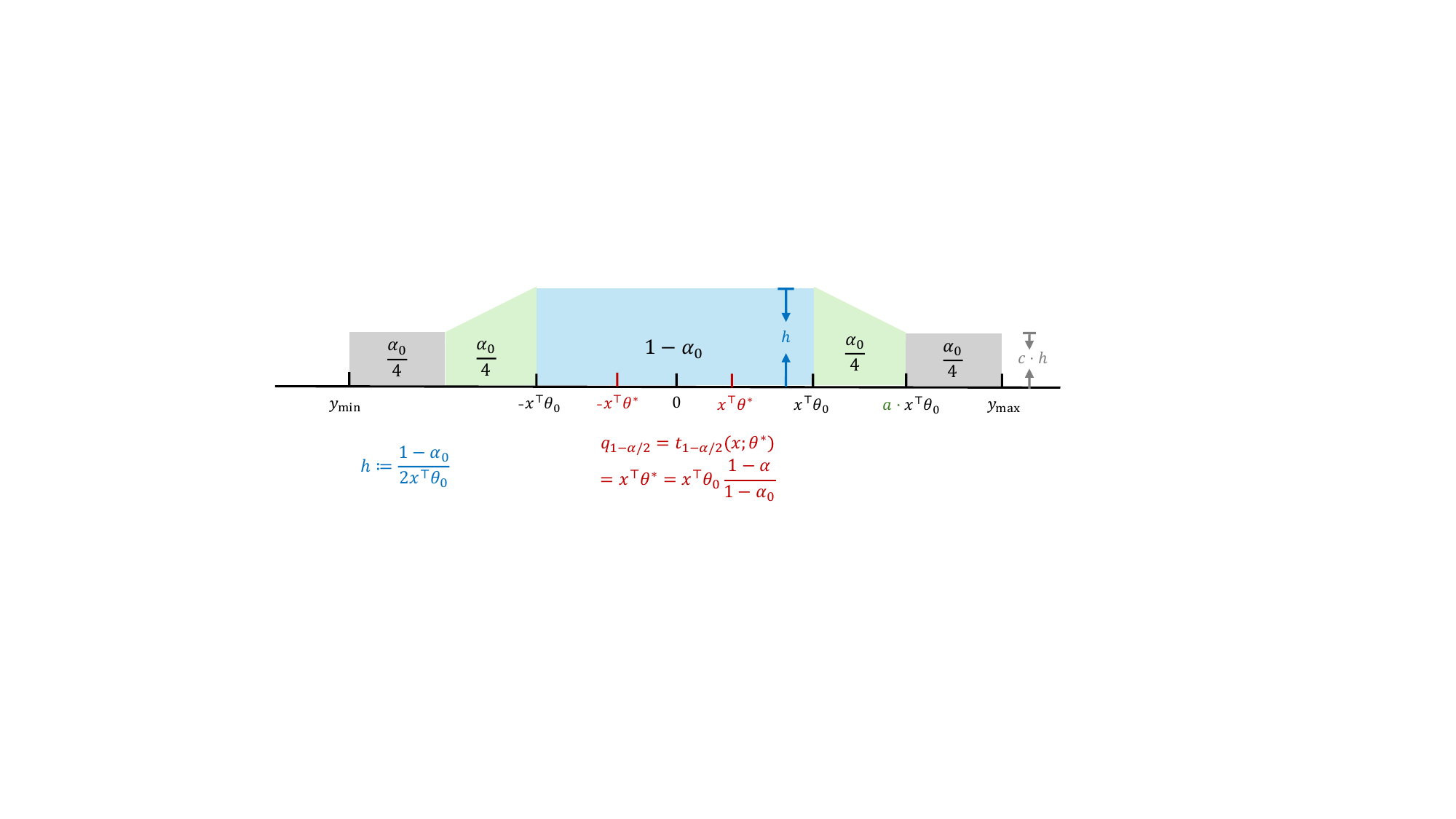}
  \end{center}
    \caption{The probability density function of $Y|X=x$ for synthetic dataset.}
    \label{fig:synthetic:pdf}
\end{figure}

The sampler of the data distribution $\mathcal{P}$ is constructed as follows. 
A vector $\theta_{0}$ is first drawn from $\theta_{0}\sim\text{Uniform}([1, 2]^2)$. 
The covariate $X$ is sampled uniformly from $\mathcal{X}=[1,20]^2$, i.e., $X\sim \text{Uniform}([1, 20]^2)$. 
Then, the probability density function of the conditional distribution  $Y|X=x$ is constructed over support $[y_{\min}, y_{\max}]$, where $y_\text{max}=[20, 20]^\top \theta_{0}$ and $y_\text{min}=-y_\text{max}$.
The conditional p.d.f., illustrated in Figure~\ref{fig:synthetic:pdf}, is piecewise affine with five segments, symmetric about zero. The central segment carries probability mass $(1-\alpha_{0})$, and each the other four segments carries $\alpha_{0}/4$, where $\alpha_{0}=0.005$ is chosen to be smaller than the smallest miscoverage level considered in the experiments.
The model is well-specified (Assumption~\ref{assum:well-spec-cqr}) for $\gamma\in\{\alpha/2,1-\alpha/2\}$ and all $\alpha\in (\alpha_{0}, {1}/{2})$ by taking $\theta^{*}(\gamma) = \frac{1-2(1-\gamma)}{1-\alpha_{0}}\theta_{0}$, and hence the true quantile functions $t_{\gamma}(x; \theta^{*}(\gamma)) = \frac{1-2(1-\gamma)}{1-\alpha_{0}} \theta_{0}^\top x$.
Then we can draw $y\sim Y|X=x$ from reject sampling to obtain $(x,y)$.

\subsection{\texorpdfstring{Validating Regime of $\mathcal{O}(1/(n\alpha^{2}))$}{Validating Regime of O(1/(n alpha\^2))}} \label{sec:middle-alpha}

In the regime where $\alpha = o(n^{-1/4})$ and $\alpha = \omega(n^{-1/2})$, theory predicts that the length deviation should scale as $\mathcal{O}(1/(n\alpha^{2}))$, corresponding to the middle regime (\textcolor{lightgreen}{green}) in Figure \ref{fig:alpha-regimes}. To validate this dependence, we pick $\alpha$ at several small values $\alpha= \{0.01, 0.02, 0.025, 0.03\}$ and vary the training size $n$, plotting the length deviation against $1/(n\alpha^{2})$ on a log--log scale. 
The fitted regression line (red) in Figure~\ref{fig:middle-regime-alpha} yields a slope of approximately $0.92$, which is close to the theoretical value of $1$. 
The empirical results support the predicted theoretical scaling, indicating the upper bound accurately captures the observed dependence. 

\begin{figure}[ht]
    \centering
    \includegraphics[width=0.33\linewidth]{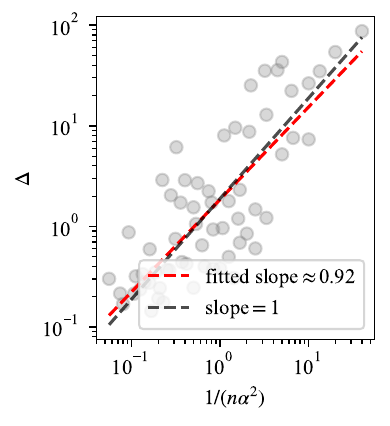}
    \caption{Log–log regression of length deviation $\Delta$ versus $1/(n\alpha^2)$ for relatively small $\alpha$.}
    \label{fig:middle-regime-alpha}
\end{figure}

\subsection{Training via Alternative Optimizers} \label{sec:synthetic-optimizers}

To demonstrate that our analytical framework extends directly to alternative optimization algorithms by substituting the corresponding estimation error rate, Figure \ref{fig:synthetic:msgd} reports the empirical results obtained using \emph{SGD with heavy-ball momentum} \citep{polyak1964some}. 
Theoretically, SGD with momentum achieves the same convergence rate as vanilla SGD, up to improved constants. According to Remark \ref{rem:replace_optimizer}, the efficiency with SGD with momentum scales in the same order as SGD. Consistent with this prediction, the empirical results show that the phase transition phenomenon identified in our analysis persists under SGD with momentum as well. Specifically, in Figure \ref{fig:synthetic:msgd} (c), the slope of curves in Figure \ref{fig:synthetic:msgd} (a) changes from $-1$ to $-0.5$ as $\alpha$ increases. 

Moreover, to demonstrate that our theoretical insights are not tied to optimizers with established convergence guarantee, 
Figure \ref{fig:synthetic:adamw} reports the empirical results obtained using AdamW \citep{loshchilov2019decoupled}. From Figure \ref{fig:synthetic:adamw} (c), we observe the phase transition phenomenon, where the slope of curves in Figure \ref{fig:synthetic:adamw} (a) changes from $-1$ to $-0.5$ as $\alpha$ increases.

\begin{figure}[htbp]
    \centering
    \includegraphics[width=0.8\textwidth]{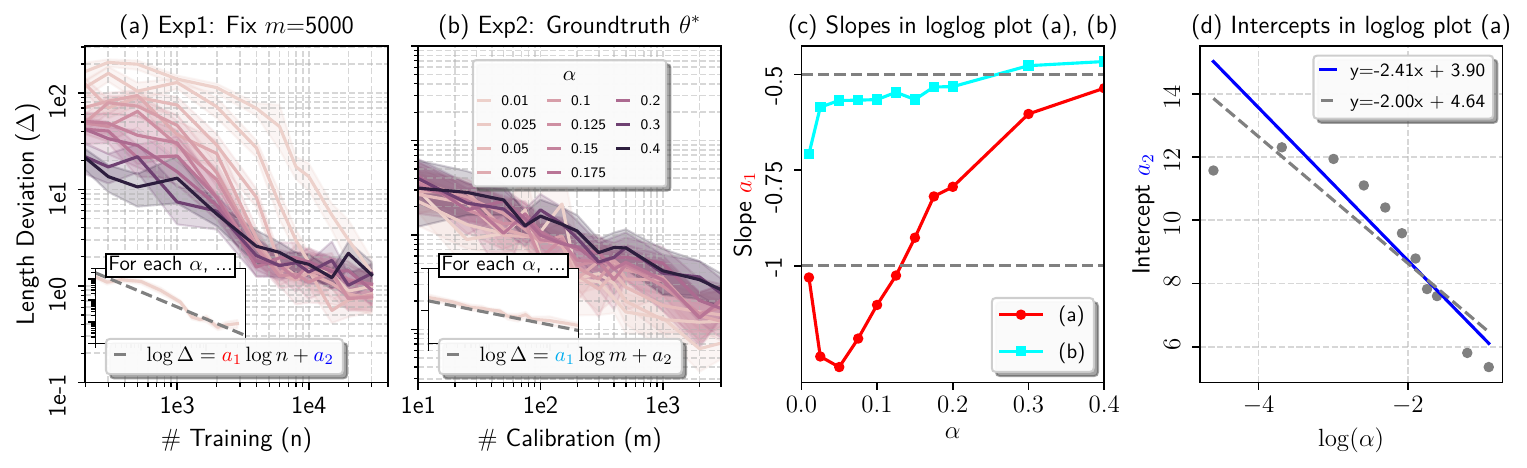}
    \caption{The length deviation of conformalized quantile regression with training via SGD with momentum. }
    \label{fig:synthetic:msgd}
\end{figure}

\begin{figure}[htbp]
    \centering
    \includegraphics[width=0.8\textwidth]{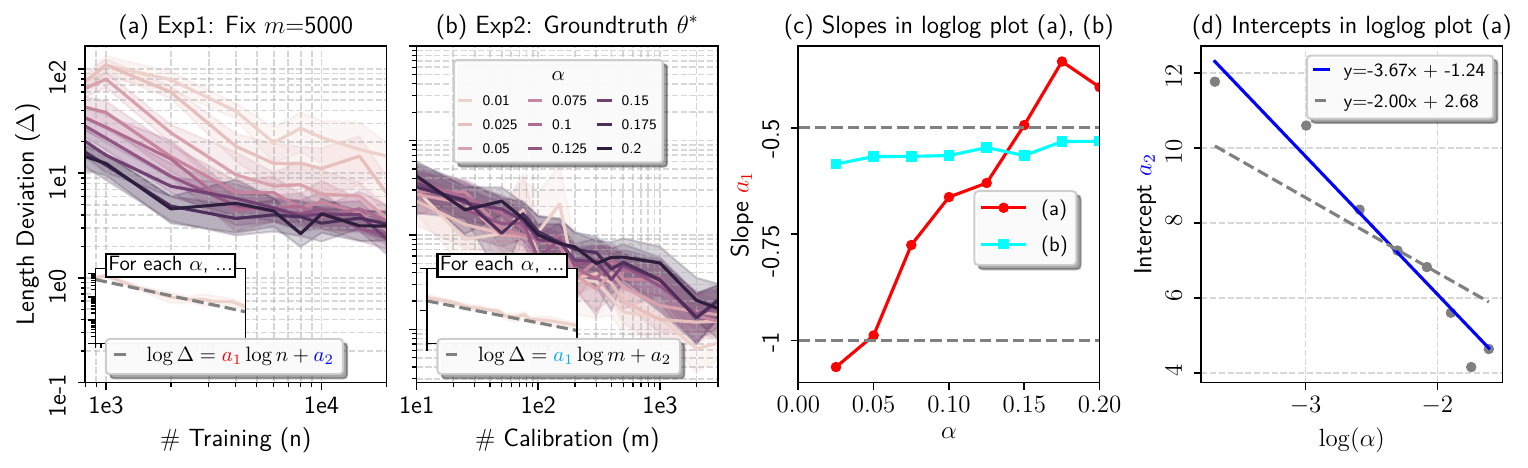}
    \caption{The length deviation of conformalized quantile regression with training via AdamW. }
    \label{fig:synthetic:adamw}
\end{figure}

\subsection{Non-Linear Ground-Truth Quantile Functions} \label{sec:synthetic-nonlinear}

To empirically show that our theoretical insights extend beyond linear models, we conducted experiments in a setting where the ground-truth quantile functions are no longer linear. This is achieved by applying Gaussian convolution kernels to the original linear conditional probability density functions, thereby introducing controlled non-linear distortion. As shown in Figure \ref{fig:synthetic:conv}, even with this non-linear distortion, the phase transition phenomenon persists, indicating that our theoretical insights remain valid in a broader setting.

\begin{figure}[htbp]
    \centering
    \includegraphics[width=0.8\textwidth]{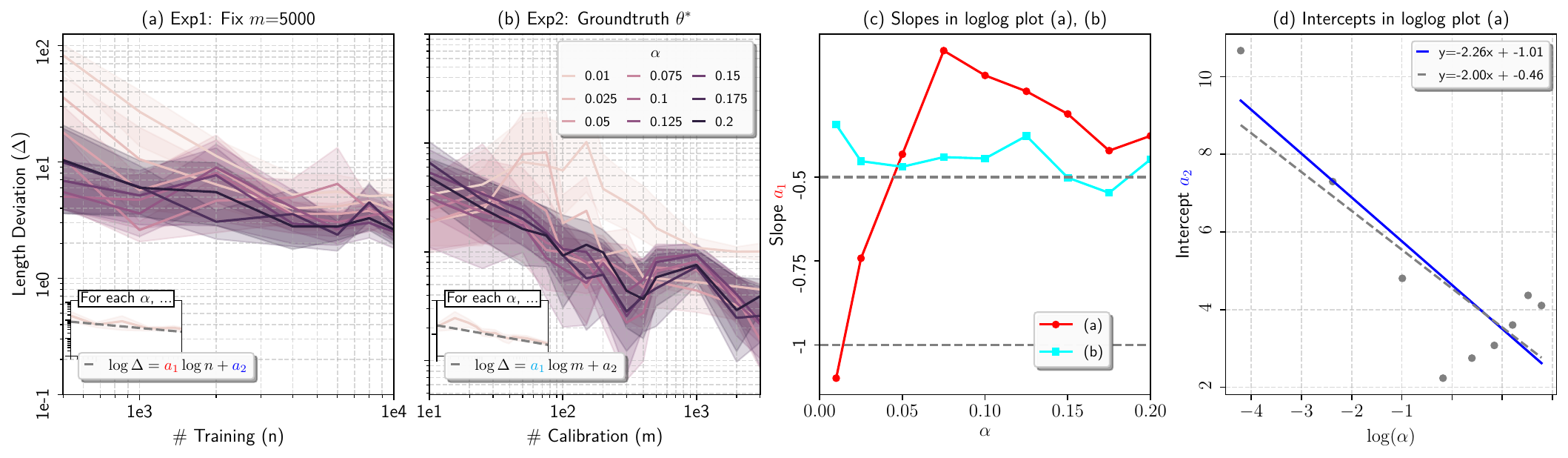}
    \caption{The length deviation of conformalized quantile regression on the data distribution with non-linear conditional quantile functions, where Gaussian convolution kernels are applied to the linear conditional probability density functions. We set $\sigma$ to be $0.1\times$  conditional quantile. }
    \label{fig:synthetic:conv}
\end{figure}

\subsection{Alternative Loss Functions} \label{sec:synthetic-losses}

To provide empirical evidence that similar efficiency scaling behavior persists for other convex models satisfying our assumptions, we report results using $\ell_1$-regularization during training in Figure \ref{fig:synthetic:l1} and Huber penalty \citep{huber1964robust} during training in Figure \ref{fig:synthetic:huber}. In both cases, the phase transition phenomenon remains clearly visible (the slope of (a) changes from $-1$ to $-0.5$ as $\alpha$ increases), further validating the generality of our theoretical insights. 

\begin{figure}[htbp]
    \centering
    \includegraphics[width=0.8\textwidth]{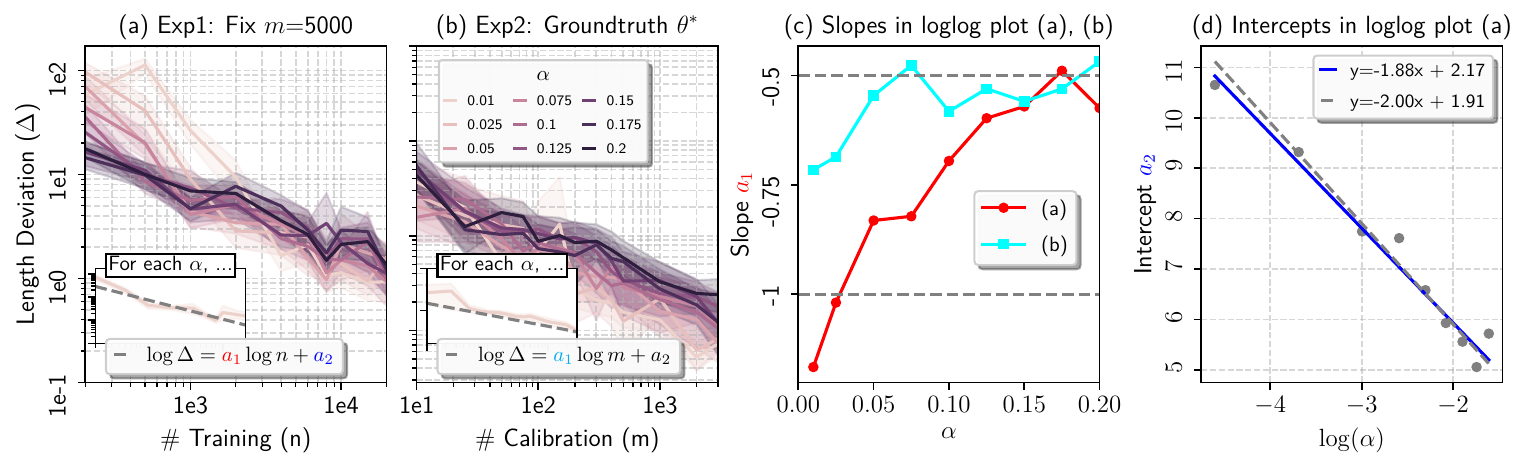}
    \caption{The length deviation of conformalized quantile regression, training with $\ell_1$ regularization. We set the coefficient of the regularization term to be $0.001$. }
    \label{fig:synthetic:l1}
\end{figure}

\begin{figure}[htbp]
    \centering
    \includegraphics[width=0.8\textwidth]{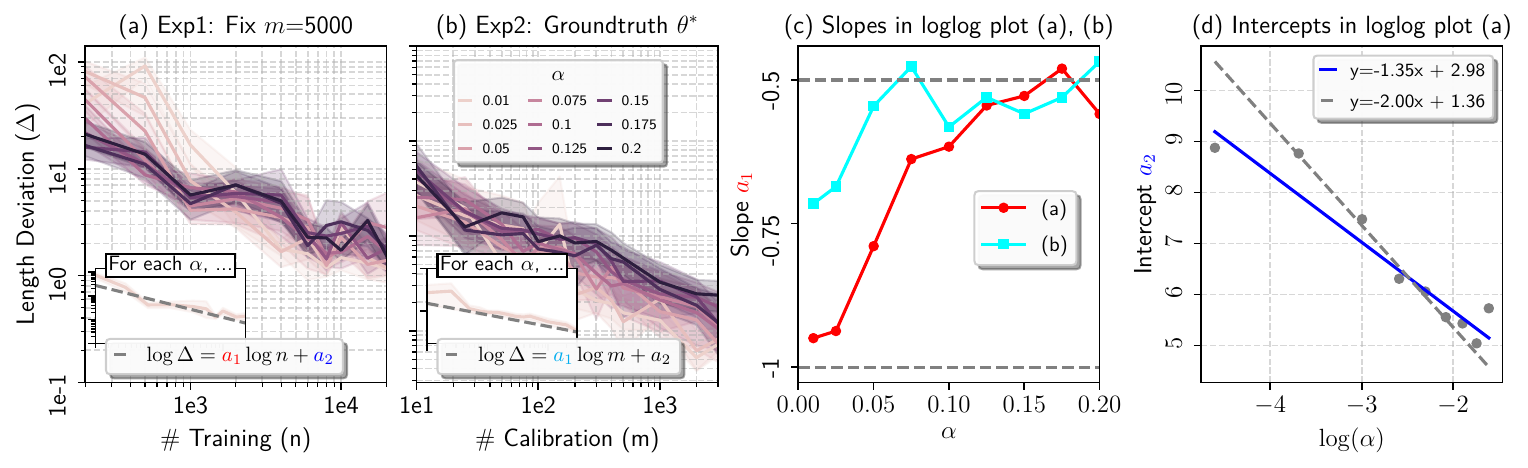}
    \caption{The length deviation of conformalized quantile regression, training with Huber penalty. We set Huber $\delta$ to be $0.5$, and Huber $\lambda$ to be $0.1$. }
    \label{fig:synthetic:huber}
\end{figure}

\section{Experiments on Real-World Data} \label{sec:real-experiments}

\subsection{Statistics of Datasets}
We list the statistics of multiple popular real world regression datasets used in this paper in Table \ref{tab:dataset-stats}. 

{The Medical Expenditure Panel Survey (\texttt{MEPS}) Panels \texttt{19}\footnote{\url{https://meps.ahrq.gov/mepsweb/data_stats/download_data_files_detail.jsp?cboPufNumber=HC-181}} and \texttt{20}\footnote{\url{https://meps.ahrq.gov/mepsweb/data_stats/download_data_files_detail.jsp?cboPufNumber=HC-192}} are standard datasets used for benchmarking and comparative analysis in the quantile regression literature. Each sample consists of 139 features, including 2 categorical features, 4 continuous features, and 133 boolean features.}

\begin{table}[ht]
    \centering
    \caption{Statistics of datasets.}
    \begin{tabular}{ccc}
    \toprule
    Dataset & \# Features & \# Number Samples \\\hline
       \texttt{MEPS 19}  & 139 & 15,785 \\
       \texttt{MEPS 20}  & 139 & 17,541 \\
       \texttt{cpusmall}~\citep{CC01a}  & 12 & 8,192 \\
       \texttt{abalone}~\citep{CC01a}  & 8 & 4,177 \\
       \texttt{California Housing}~\citep{pace1997sparse} & 8 & 20,640 \\
     \bottomrule
    \end{tabular}
    \label{tab:dataset-stats}
\end{table}

\subsection{Empirical Evaluation of Length Deviation} \label{sec:real-evaluation}
\subsubsection{Experimental Settings} \label{sec:real-settings}

We examine the effect of the training set size $n$ and the calibration set size $m$ on the prediction set length, comparing the empirical results with the theoretical bound in Theorem~\ref{thm:cqr-main}. Since the oracle quantile interval length $|\mathcal{C}^{*}(X)| = q_{1-\alpha/2}(Y|X) - q_{\alpha/2}(Y|X)$ depends on $\alpha$, we evaluate the expected absolute deviation $\mathbb{E}[||\mathcal{C}(X)| - |\mathcal{C}^{*}(X)||]$ for $\alpha\in[0.01,0.05,0.1,0.2]$, where the interval length $|\mathcal{C}^{*}(X)|$ is approximated by its estimate with same $\alpha$ and largest training and calibration sample sizes. We reserve 20\% of the dataset for testing length deviation. The remaining 80\% data was partitioned for 80\% training data and 20\% calibration data: the training size $n$ varied from 10\% to 80\% in increments of 10\%, while the calibration $m$ was chosen from ${5\%,10\%,15\%,20\%}$ of the remaining data. Throughout experiments, models are trained with a step size tuned by successive halving for 1 epoch.

\subsubsection{Empirical Results with Various Optimizers} \label{sec:real-optimers}

Figure~\ref{fig:real-optimizers} presents an empirical evaluation of length deviation of CMR and CQR under different optimizers on real-world datasets, comparing SGD, SGD with momentum (SGDM) \citep{polyak1964some}, Adam \citep{kingma2015adam}, and AdamW \citep{loshchilov2019decoupled}. Although our theory is based on analyzing SGD and directly extends to SGD with momentum, we include Adam and AdamW due to their widespread practical use.

The results confirm two key insights from our theoretical analysis. First, increasing the calibration set size $m$ reduces the expected length deviation. Second, for a fixed sample size, a larger miscoverage level $\alpha$ leads to a smaller deviation with lower variance, which aligns with the $\alpha$-dependence in the theoretical rate. Consistent with Theorem~\ref{thm:cmr-main}, we observe that smaller values of $\alpha$ yield significantly larger length deviations. 

Among the optimizers, we observe that Adam and AdamW generally achieve better efficiency (lower length deviation) but exhibit higher volatility, likely due to their scaled gradient norms. SGD decays more smoothly, providing a more consistent reduction in length deviation as the number of training samples increases. For Adam and AdamW, the benefit of additional training data can be less pronounced or even reversed on certain datasets (e.g., \texttt{MEPS}), where fast convergence makes efficiency more sensitive to stochasticity. On other datasets such as \texttt{abalone} and \texttt{cpusmall}, however, Adam also exhibits a clear decreasing trend, indicating a dataset-dependent behavior.

\begin{figure}[htbp]
    \centering
    \begin{subfigure}[b]{0.49\textwidth}
        \centering
        \includegraphics[height=1.27\textwidth]{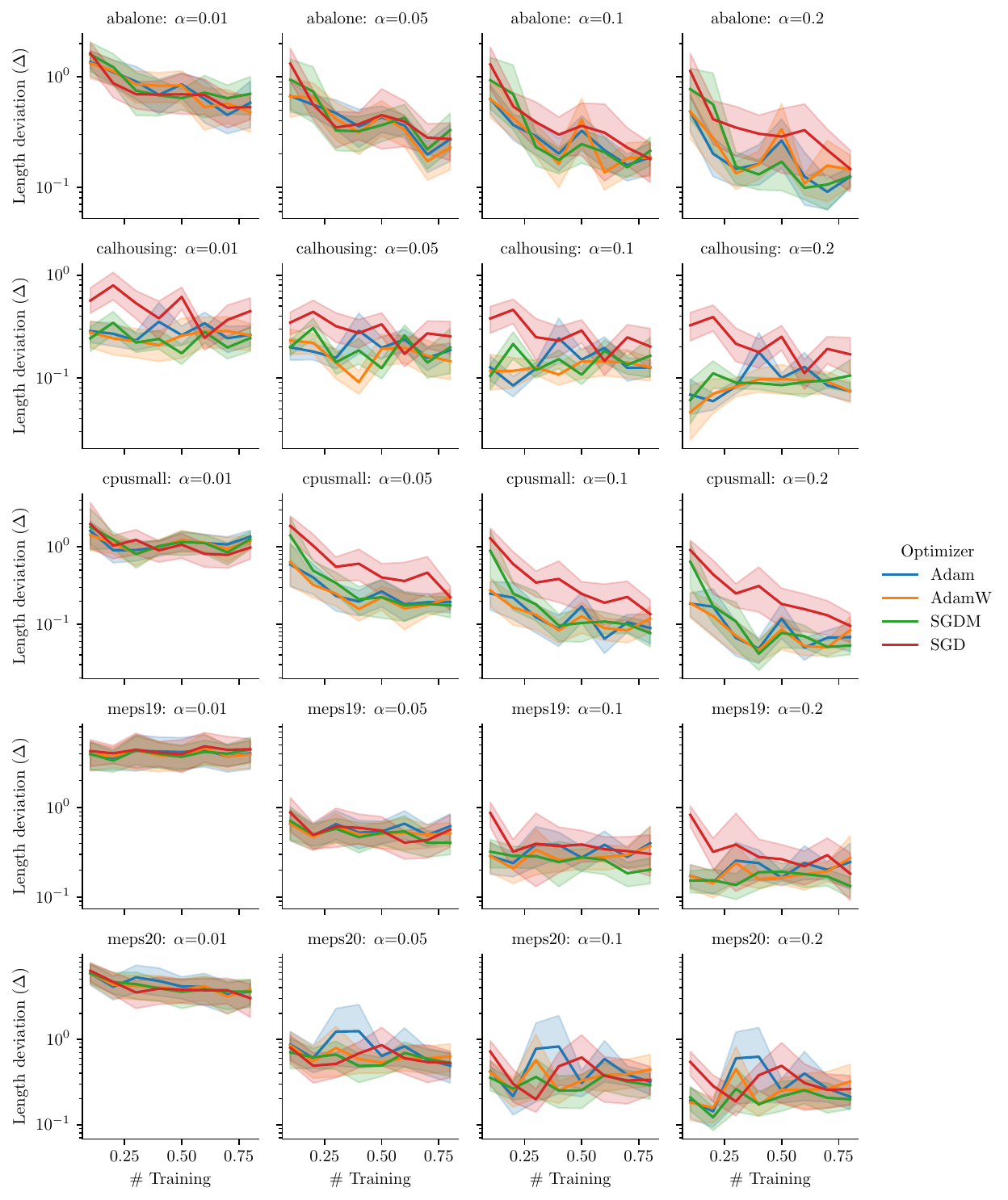}
        \caption{CMR}
        \label{fig:optimizers_cmr}
    \end{subfigure}
    \begin{subfigure}[b]{0.49\textwidth}
        \centering
        \includegraphics[height=1.27\textwidth]{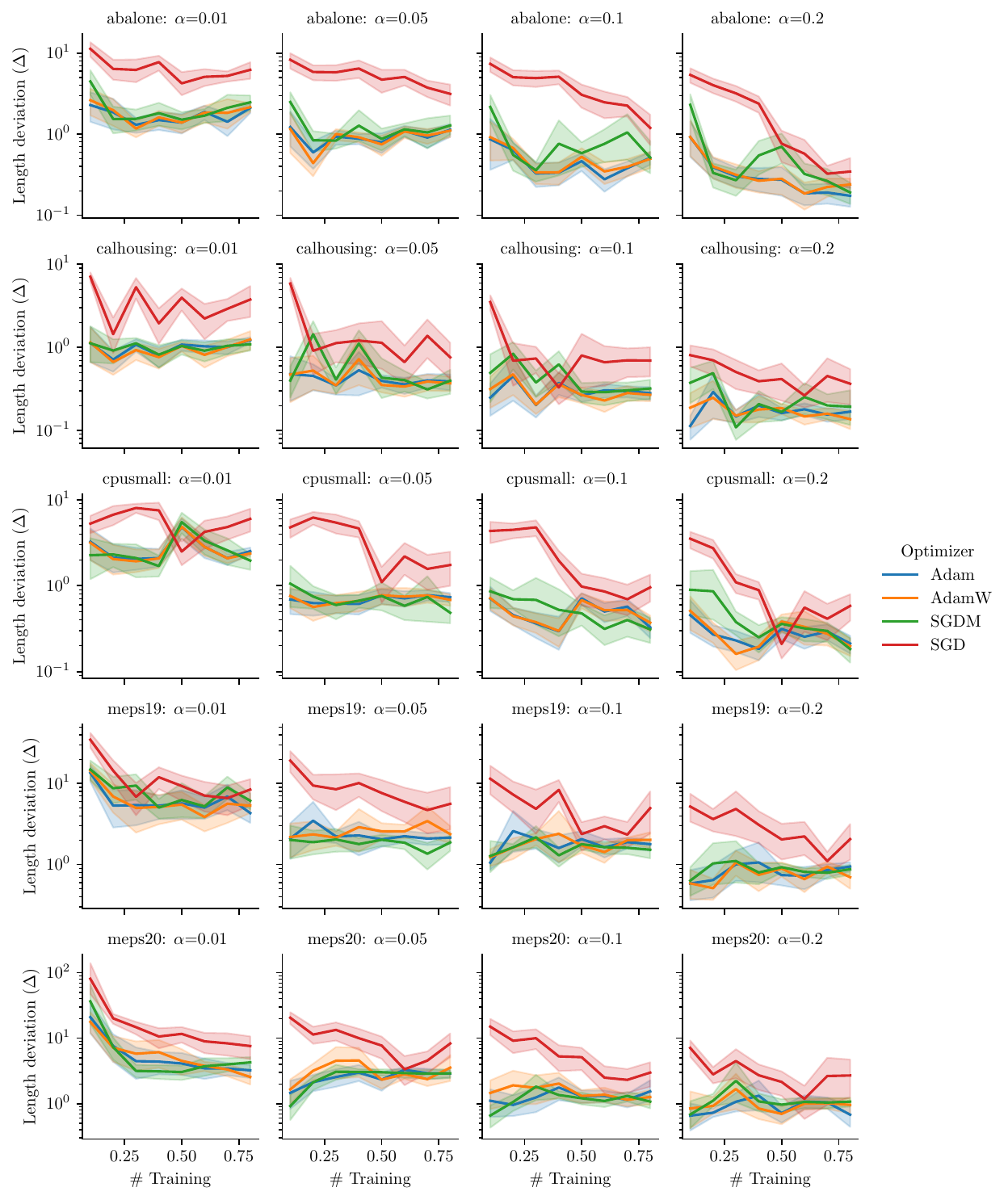}
        \caption{CQR}
        \label{fig:optimizers_cqr}
    \end{subfigure}

    \caption{\textbf{Efficiency of conformalized regression under different training optimizers on real-world datasets}: \texttt{MEPS 19}, \texttt{MEPS 20}, \texttt{California Housing}~\citep{pace1997sparse}, \texttt{cpusmall}, and \texttt{abalone}~\citep{CC01a}. For each optimizer, the learning rate is selected via successive halving, while all other hyperparameters (e.g., momentum=0.9 for SGD with momentum) follow the PyTorch defaults.
 }    
\label{fig:real-optimizers}
\end{figure}

\subsubsection{Empirical Probing on Non-Linear Models} \label{sec:real-nonlinear}
We conduct empirical probing of non-linear models on real-world datasets, and report the results in 
Figure~\ref{fig:models}. We observe that the length deviation remains consistent across non-linear and linear model architectures, suggesting a potential practical relevance of our findings beyond linear models.

\begin{figure}[htbp]
    \centering
    \begin{subfigure}[b]{\textwidth}
        \centering
        \includegraphics[width=0.9\textwidth]{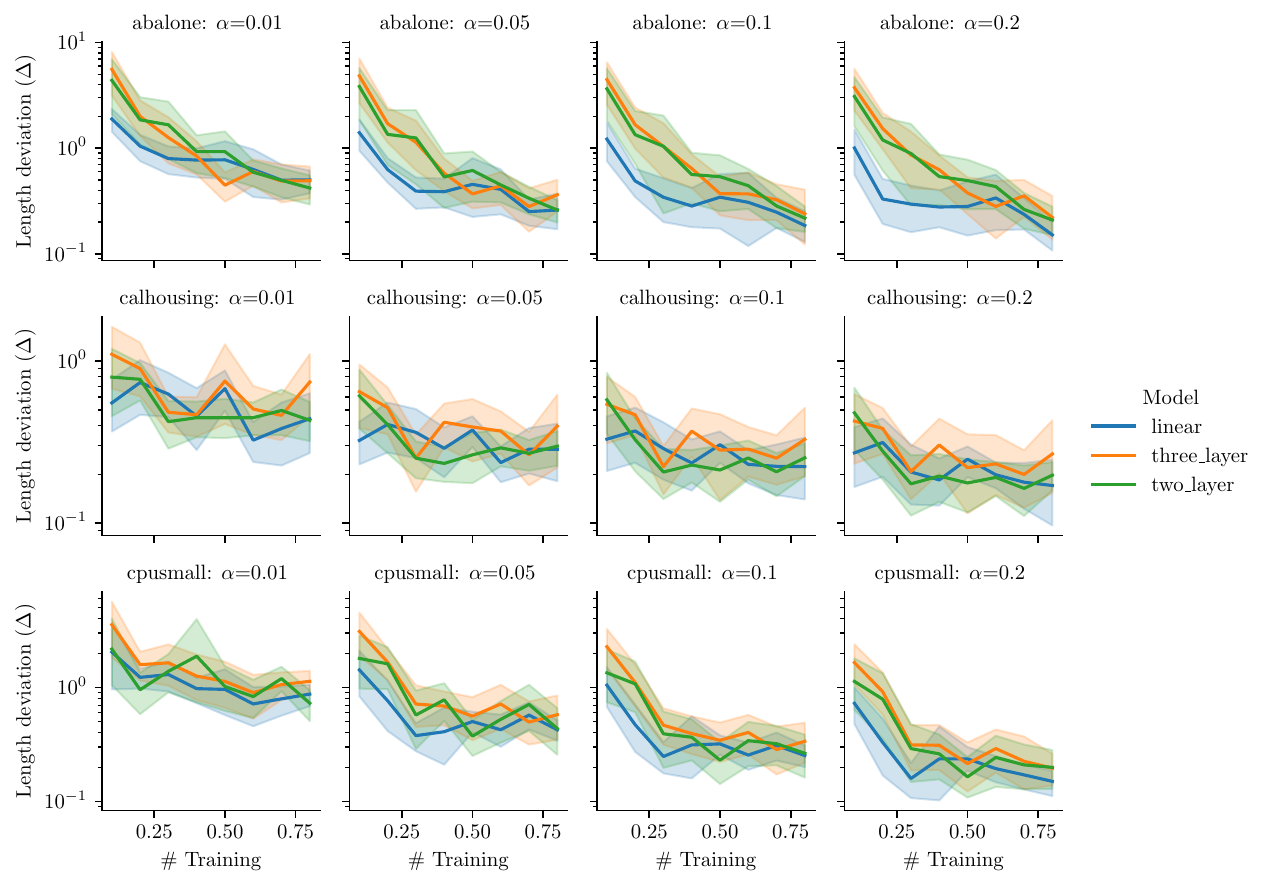}
        \caption{CMR}
        \label{fig:models_cmr}
    \end{subfigure}
    \hfill
    \begin{subfigure}[b]{\textwidth}
        \centering
        \includegraphics[width=0.9\textwidth]{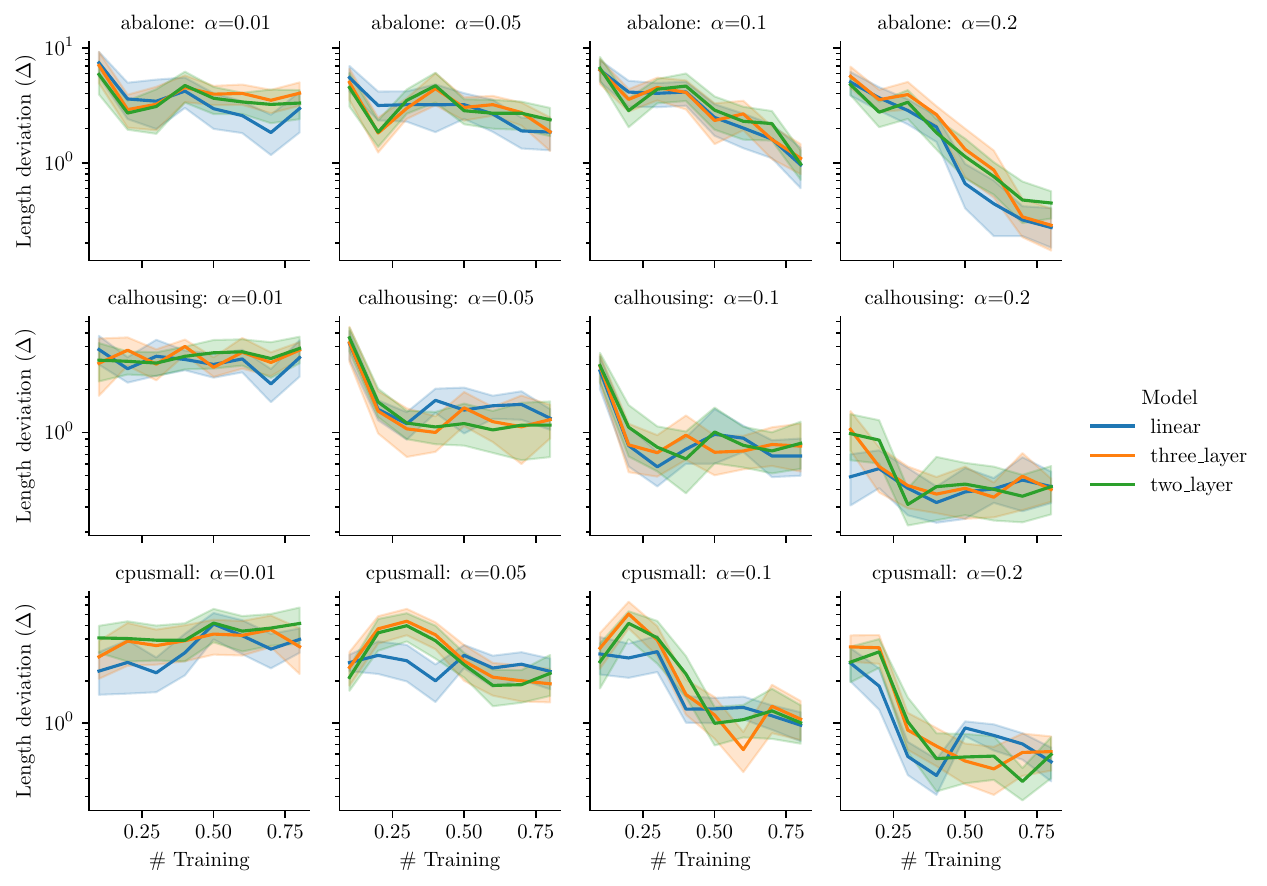}
        \caption{CQR}
        \label{fig:models_cqr}
    \end{subfigure}
    \caption{\textbf{Efficiency of conformalized regression with linear and non-linear models trained via SGD on real-world datasets}: \texttt{MEPS 19}, \texttt{MEPS 20}, \texttt{California Housing}~\citep{pace1997sparse}, \texttt{cpusmall}, and \texttt{abalone}~\citep{CC01a}. The two-layer neural network has one hidden layer with 10 ReLU neurons, and the three-layer network has two hidden layers, each with 10 ReLU neurons.}
\label{fig:models}
\end{figure}

\subsection{Empirical Data Allocation Guidance} \label{sec:emp_data_allo}

We empirically investigate how to allocate data on \texttt{cpusmall} dataset from LIBSVM~\citep{CC01a}.  The training ratio $r_{n}$ takes values from [0.01, 0.05, 0.1, 0.2, 0.3, 0.4, 0.5, 0.6, 0.7, 0.8, 0.9, 0.95, 0.99], the calibration ratio $r_{m}$ is set as $1-r_{n}$, the miscoverage level $\alpha$ takes values from [0.001, 0.002, 0.003, 0.004, 0.005, 0.006, 0.007, 0.008, 0.009, 0.01, 0.02, 0.03, 0.04, 0.05, 0.06, 0.07, 0.08, 0.09, 0.1, 0.15, 0.2]. 

\textbf{The left plot} in Figure \ref{fig:data_allocation} shows the length of the prediction interval versus $\alpha$ of CMR, grouping curves by training ratio (0\%–20\%, 20\%–80\%, 80\%–100\%). We observe that two ``elbows'' occur at approximately $\alpha=0.045$ and $\alpha=0.003$, at which points, reducing $\alpha$ leads to a substantially sharper increase in interval length than before. Notably, before the first elbow, e.g., when reducing $\alpha$ from $0.2$ to $0.05$, the prediction interval length increases only mildly. 

\textbf{The right plot} in Figure \ref{fig:data_allocation} shows the length of the prediction interval versus the training ratio, with each curve corresponding to a different miscoverage level $\alpha$ (lighter color representing smaller $\alpha$). We observe that:
\begin{itemize}
    \item The curves largely concentrate around interval lengths of approximately $2.5$ and $15$, respectively, which correspond to the two elbow locations in the left plot.
    \item For most cases where $\alpha$ is not extremely small, the interval length stays below $15$, and the curves exhibit a wide U-shape. This indicates that allocating an excessively large portion of data to either training or calibration tends to degrade efficiency, whereas a more balanced split yields better efficiency. For reasonably large $\alpha$, say $\alpha > 0.04$, the number of calibration samples has less influence on the interval than the number of training samples, suggesting that allocating more data for training is generally beneficial.
    \item For very small miscoverage levels ($\alpha \le 0.003$), which correspond to the three curves above the dashed line at length $= 15$, the interval length behaves erratically and no longer follows the U-shaped trend observed for larger $\alpha$. It is likely due to insufficient sample size at such small $\alpha$, and the prediction interval length is a trivial upper bound of the oracle interval length rather than its approximation. This phase of extremely small $\alpha$ may correspond to the regime $\alpha= \omega(n^{-1/2})$, where our upper bound is non-vanishing (Figure \ref{fig:alpha-regimes}).
\end{itemize}
  
\textbf{Takeaway.} The empirical results suggest that practitioners may leverage the elbow points in the left plot of Figure \ref{fig:data_allocation} to select $\alpha$ values that yield good efficiency while maintaining reasonable coverage guarantees. In particular, for extremely small $\alpha$, the prediction interval becomes trivially large due to insufficient sample size, whereas in the regime of large $\alpha$, decreasing the miscoverage level results in only a mild increase in interval length. In terms of data allocation, the results are consistent with the practical rule of thumb that the amount of training and calibration data should be roughly of the same order, while allocating slightly more data for training is generally beneficial.

\begin{figure}[htbp]
    \centering
    \includegraphics[width=\textwidth]{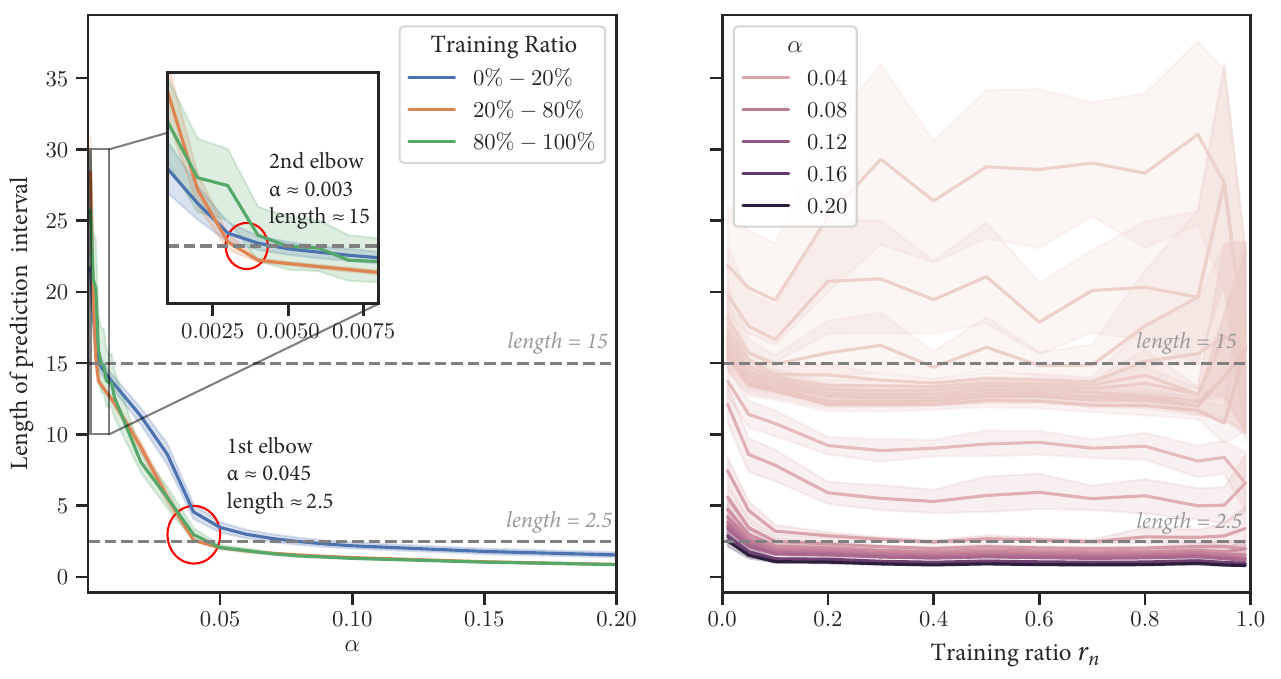}
    \caption{The effect of training ratio $r_{n}$, calibration ratio $r_{m}$, and miscoverage level $\alpha$ on \texttt{cpusmall} dataset. The training ratio $r_{n}$ takes values from $0.01$ to $0.99$, the calibration ratio $r_{m}$ is $1-r_{n}$, and $\alpha$ takes values from $0.001$ to $0.2$. See Appendix \ref{sec:emp_data_allo} for detailed discussion on empirical data allocation. \textbf{The left plot} shows the length of the prediction interval versus $\alpha$, grouping curves by training ratio (0\%–20\%, 20\%–80\%, 80\%–100\%). We observe that there are two ``elbows'' around $\alpha=0.045$ and $\alpha=0.003$, at which points, reducing $\alpha$ leads to sharper rise of the interval length than before. \textbf{The right plot} shows the length of the prediction interval versus the training ratio, with each curve corresponding to a different miscoverage level $\alpha$ (lighter color representing smaller $\alpha$). 
    }
    \label{fig:data_allocation}
\end{figure}

\section{The Use of Large Language Models (LLMs)}

This paper uses a large language model to polish writing.

\end{document}